\documentclass{article}

\usepackage{amsmath,amsfonts,bm}

\def\eqref#1{equation~\ref{#1}}
\def\1{\bm{1}}

\DeclareMathAlphabet{\mathsfit}{\encodingdefault}{\sfdefault}{m}{sl}
\SetMathAlphabet{\mathsfit}{bold}{\encodingdefault}{\sfdefault}{bx}{n}

\usepackage[utf8]{inputenc}
\usepackage[T1]{fontenc}
\usepackage{hyperref}
\usepackage{url}
\usepackage{booktabs}
\usepackage{microtype}
\usepackage{xcolor}
\usepackage{graphicx}
\usepackage{subcaption}
\usepackage{booktabs}
\usepackage{multirow}

\usepackage{algorithm}
\usepackage{algorithmic}

 \usepackage[preprint]{icml2026}

\usepackage{amsmath}
\usepackage{amssymb}
\usepackage{mathtools}
\usepackage{amsthm}

\usepackage[capitalize,noabbrev]{cleveref}

\theoremstyle{plain}
\newtheorem{theorem}{Theorem}[section]
\newtheorem{proposition}[theorem]{Proposition}
\newtheorem{lemma}[theorem]{Lemma}

\theoremstyle{definition}
\newtheorem{definition}[theorem]{Definition}

\theoremstyle{remark}
\newtheorem{remark}[theorem]{Remark}

\usepackage{enumitem}
\setlist[itemize]{leftmargin=*}
\setlist[enumerate]{leftmargin=*}

\usepackage[textsize=tiny]{todonotes}

\usepackage{wrapfig}

\usepackage{bm}
\newcommand{\mat}[1]{\bm{#1}}

\DeclarePairedDelimiter\abs{\lvert}{\rvert}
\DeclarePairedDelimiter\paren{(}{)}
\DeclarePairedDelimiter\brac{[}{]}
\DeclarePairedDelimiter\cbrac{\{}{\}}

\newcommand{\Bigmid}{\mathrel{}\!\!\Big\vert\!\!\mathrel{}}
\newcommand{\bigmid}{\mathrel{}\!\!\big\vert\!\!\mathrel{}}

\icmltitlerunning{Heterogeneous Graph Alignment}

\begin{document}

\newcommand{\fix}{\marginpar{FIX}}
\newcommand{\new}{\marginpar{NEW}}

\twocolumn[
  \icmltitle{Heterogeneous Graph Alignment for Joint Reasoning and Interpretability}

  % It is OKAY to include author information, even for blind submissions: the
  % style file will automatically remove it for you unless you've provided
  % the [accepted] option to the icml2026 package.

  % List of affiliations: The first argument should be a (short) identifier you
  % will use later to specify author affiliations Academic affiliations
  % should list Department, University, City, Region, Country Industry
  % affiliations should list Company, City, Region, Country

  % You can specify symbols, otherwise they are numbered in order. Ideally, you
  % should not use this facility. Affiliations will be numbered in order of
  % appearance and this is the preferred way.
  \icmlsetsymbol{equal}{*}

  \begin{icmlauthorlist}
    \icmlauthor{Zahra Moslemi}{yyy}
    \icmlauthor{Ziyi Liang}{yyy}
    \icmlauthor{Norbert Fortin}{comp}
    \icmlauthor{Babak Shahbaba}{yyy}
  \end{icmlauthorlist}

  \icmlaffiliation{yyy}{Department of Statistics, University of California, Irvine}
  \icmlaffiliation{comp}{Department of Neurobiology and Behavior, University of California, Irvine}

  \icmlcorrespondingauthor{Babak Shahbaba}{babaks@uci.edu}

  % You may provide any keywords that you find helpful for describing your
  % paper; these are used to populate the "keywords" metadata in the PDF but
  % will not be shown in the document
  \icmlkeywords{Machine Learning, ICML}

  \vskip 0.3in
]
% this must go after the closing bracket ] following \twocolumn[ ...

% This command actually creates the footnote in the first column listing the
% affiliations and the copyright notice. The command takes one argument, which
% is text to display at the start of the footnote. The \icmlEqualContribution
% command is standard text for equal contribution. Remove it (just {}) if you
% do not need this facility.

% Use ONE of the following lines. DO NOT remove the command.
% If you have no special notice, KEEP empty braces:
\printAffiliationsAndNotice{}  % no special notice (required even if empty)
% Or, if applicable, use the standard equal contribution text:
% \printAffiliationsAndNotice{\icmlEqualContribution}

\begin{abstract}

Multi-graph learning is crucial for extracting meaningful signals from collections of heterogeneous graphs. However, effectively integrating information across graphs with differing topologies, scales, and semantics, often in the absence of shared node identities, remains a significant challenge. We present the Multi-Graph Meta-Transformer (MGMT), a unified, scalable, and interpretable framework for cross-graph learning. MGMT first applies Graph Transformer encoders to each graph, mapping structure and attributes into a shared latent space. It then selects task-relevant supernodes via attention and builds a meta-graph that connects functionally aligned supernodes across graphs using similarity in the latent space. Additional Graph Transformer layers on this meta-graph enable joint reasoning over intra- and inter-graph structure. The meta-graph provides built-in interpretability: supernodes and superedges highlight influential substructures and cross-graph alignments. Evaluating MGMT on both synthetic datasets and real-world neuroscience applications, we show that MGMT consistently outperforms existing state-of-the-art models in graph-level prediction tasks while offering interpretable representations that facilitate scientific discoveries. Our work establishes MGMT as a unified framework for structured multi-graph learning, advancing representation techniques in domains where graph-based data plays a central role.

\end{abstract}

% \author{%
%   Zahra Moslemi \\
%   Department of Statistics\\
%   University of California, Irvine\\
%   \texttt{zmoslemi@uci.edu} \\
%   \And  
%   Ziyi Liang\\
%   Department of Statistics\\
%   University of California, Irvine\\
%   \texttt{liangz25@uci.edu} \\
%   \And
%   Norbert Fortin \\
%   Department of Neurobiology and Behavior\\
%   University of California, Irvine\\
%   \texttt{norbert.fortin@uci.edu} \\
%   Babak Shahbaba \\
%   Department of Statistics\\
%   University of California, Irvine\\
%   \texttt{babaks@uci.edu} \\
% }

\section{Introduction}
Graphs are fundamental data structures in many domains including neuroscience~\citep{shahbaba2022, zhou2024}, social networks~\citep{fan2019graph, zhang2022} and molecular biology~\citep{wieder2020compact, xu2022, li2022}.
While powerful models like Graph Neural Networks (GNNs)~\citep{scarselli2008graph, kipf2016, yu2023} and the more recent Graph Transformers (GTs)~\citep{wu2021, kreuzer2021, rampavsek2022, kim2022pure} excel at learning from single graphs, many real-world problems require integrating information across multiple heterogeneous graphs, {where the heterogeneity may stem from differences in modalities, views, or population characteristics.} For instance, neuroscience experiments studying brain dynamics often generate graphs from multiple subjects, each with distinct connectivities and node sets~\citep{shahbaba2022}. Enhancing prediction performance or extracting common neural patterns in such settings requires a framework capable of effectively integrating information across structurally heterogeneous graphs. {Nonetheless, the question of how to optimally adapt powerful architectures such as the GT to the multi-graph integration problem remains an active area of research, particularly with heterogeneous structures, unaligned node sets, and a need for fine-grained cross-graph reasoning, conditions that arise in many scientific domains.}

{Recent work on multi-graph learning partially addresses this challenge by either operating on a single unified heterogeneous graph with aligned nodes~\citep{he2025unigraph2, zhang2019heterogeneous, zheng2022multi} or by learning graph-level embeddings from multiple modality-specific graphs and combining them via shared contexts, adaptive weights, or correlation-based objectives~\citep{hayat2022medfuse, xu2024flexcare, xing2024, maxcorrmgnn, nakhli2023sparse, mglam}.
However, these methods model cross-graph interactions only through pooled graph embeddings or shared tokens, without explicit fine-grained message passing between structurally similar subgraphs across unaligned graphs, which limits interpretability and makes it challenging to identify which substructures across graphs interact and contribute to the model’s predictions.}

To address these limitations, we propose the \emph{Multi-Graph Meta-Transformer} (MGMT), a flexible framework for integrating information across collections of heterogeneous graphs. Under the umbrella term “multi-graph,” our unified approach accommodates common scenarios, including: \textbf{multimodal} (graphs from different measurement channels), \textbf{multi-view} (different structural views of the same data), and \textbf{multi-subject} (graphs from heterogeneous subjects in the same experiment). {MGMT first processes each input graph using a graph-transformer encoder and aggregates intermediate layer outputs through a depth-aware mixing scheme. This yields node embeddings that adaptively integrate information across multiple receptive-field sizes. The framework is backbone-agnostic and can be implemented with either localized or global GT layers.
It then selects a small set of informative \emph{supernodes} from each graph using attention scores and constructs an explicit meta-graph over these task-relevant substructures, preserving within-graph connectivity while selectively introducing cross-graph (inter-graph) edges between functionally aligned regions. Additional Transformer layers on the resulting meta-graph perform fine-grained cross-graph message passing and generate the final prediction.}

{This design directly addresses the key limitations of existing multi-graph approaches. Instead of representing each graph solely through global graph-level or context-level embeddings at fusion time, MGMT preserves structure at the node and subgraph levels and performs fusion by attending over an explicit meta-graph of supernodes, encoding both intra-graph and inter-graph structure. More specifically, supernodes summarize local patterns within each graph, while superedges align substructures across graphs and capture their interactions. The resulting meta-graph also provides insights into task-relevant subgraphs and their relationships, which can help with interpreting the results. Taken together, MGMT utilizes depth-aware, structure-preserving GT encoders within each graph, identifies supernodes, and uses them to build an explicit meta-graph that supports cross-graph message passing and provides a principled and scalable approach to aggregating information across a collection of heterogeneous graphs with unaligned node sets.}

{Our main contributions are as follows: (1) For heterogeneous graphs (multimodal, multi-view, or multi-subject), we formalize a data-fusion framework, MGMT, which provides a backbone-agnostic, depth-aware, and structure-preserving architecture with interpretable outputs through an explicitly constructed meta-graph over attention-selected supernodes; (2) we provide theoretical results demonstrating that MGMT’s depth-aware aggregation can recover general $L$-hop neighborhood mixing and characterize conditions under which the induced meta-graph function class offers strictly improved approximation capacity relative to late-fusion strategies that operate only on pooled graph embeddings; (3) using synthetic benchmarks and real-world applications, we show that MGMT outperforms state-of-the-art multimodal and graph-based methods, including recent multi-graph and transformer architectures; and (4) we demonstrate that MGMT can detect meaningful neurobiological patterns, thereby offering insights for scientific investigations, particularly for understanding neural mechanisms underlying memory and factors contributing to Alzheimer’s disease.}

\section{Related Work}

\textbf{Graph Representation Learning.}
GNN is the cornerstone of modern graph machine learning, which learns node embeddings via local message passing~\citep{scarselli2008graph,kipf2016,velickovic2017}. 
{Attention-based models such as GAT \citep{velickovic2017} learn neighbor-specific attention weights instead of using fixed aggregation rules, allowing them to prioritize more informative neighbors. Nevertheless, because they still aggregate information only from local neighborhoods, their ability to distinguish graph structures remains fundamentally limited by the expressive power of the 1-Weisfeiler–Lehman (1-WL) test. More recent GT architectures with structured self-attention have outperformed message-passing GNNs on a variety of benchmarks~\citep{dwivedi2020,vaswani2017}.
Global-attention models such as EGT~\citep{hussain2022} replace or complement convolution with fully-connected self-attention augmented with structural encodings, thereby extending expressivity.
Sparse and structure-aware GTs, including Exphormer~\citep{shirzad2023exphormer} and GRIT~\citep{ma2023graph}, introduce scalable attention patterns and graph inductive biases that retain or approximate the expressivity of dense Transformers while reducing computational cost.
Hierarchical and distance-structured GT variants, such as HDSE~\citep{luo2024enhancing}, further refine how multi-scale structural information is injected into attention.
All these existing methods, however, lack a principled and interpretable mechanism for fusing multiple heterogeneous graphs, which are common in many scientific applications. MGMT addresses this gap by providing a flexible and general framework; in Appendix~\ref{app:gt-backbones}, we additionally implement MGMT with several of these GT backbones and compare their performance in both single-graph and multi-graph settings.}

\textbf{Heterogeneous Unified-Graph Representation Learning.}
A related line of research is multimodal or heterogeneous graph learning, which may appear similar to our multi-graph fusion task but is fundamentally different, as it operates on a single unified graph that integrates diverse data types. For example, frameworks like UniGraph2~\citep{he2025unigraph2} and HetGNN~\citep{zhang2019heterogeneous} assume a single graph where nodes possess multiple feature types from different modalities, such as text or images. The common approach is to collapse multiple data sources into one large graph. Other works like MMGL~\citep{zheng2022multi} construct a single population-level graph where nodes represent subjects, and features from all modalities are concatenated before graph construction. While effective for their intended purpose, they are not applicable to the more challenging problem of fusing a collection of graphs with distinct, unaligned node sets, which is the focus of our work.

\textbf{General-Purpose Multimodal Fusion.}
Models such as MultiMoDN~\citep{swamy2023multimodn}, FlexCare~\citep{xu2024flexcare}, MedFuse~\citep{hayat2022medfuse}, and Meta-Transformer (MT)~\citep{ma2022multimodal} aim to integrate multiple modalities, including graphs or images. In principle, these frameworks could be applied to multi-graph fusion by treating each graph as a separate modality. Typically, they use modality-specific encoders to transform each input into a single latent vector. For a graph, this amounts to collapsing its entire topological structure into one embedding. These vectors are then fused, for example via concatenation, for downstream tasks. {While highly effective in many multimodal settings, such vector-level fusion largely ignores graph topology and subgraph-level relationships across multiple graphs, which limits interpretability.}

\textbf{Multi-graph learning.}
{Another class of models focuses on settings where each entity is associated with multiple graphs. Recent multi-graph models such as AMIGO~\citep{nakhli2023sparse}, EMO-GCN~\citep{xing2024}, and MaxCorrMGNN~\citep{maxcorrmgnn} take multiple graph-structured inputs but ultimately fuse information only at the level of pooled graph embeddings or shared context tokens, so cross-graph interaction is mediated through global representations instead of local structural alignment. MGLAM~\citep{mglam}, on the other hand, treats each entity as a bag of graphs and learns permutation-invariant bag-level predictors via kernel-based graph representations and multi-graph pooling, providing a principled baseline for the multi-graph-to-label setting considered in this work.
However, all of these models still perform fusion at the graph/bag level, as opposed to using explicit node-level cross-graph message passing. This limits their ability to explain how specific nodes or substructures interact across graphs and how particular cross-graph patterns influence predictions.}

{Another class of multi-graph and multi-view GTs, such as MGT and MVGT/MVGTrans~\citep{cui2024mvgt,zhou2025multi}, assumes multiple edge views over a shared node set but is not directly applicable to heterogeneous graph collections with disjoint node sets across modalities or subjects. An alternative approach involves learning consensus graphs or dataset-level representations rather than per-entity meta-graphs. This includes AMGL~\citep{nie2016parameter}, GraphFM and GraphAlign~\citep{lachi2024graphfm,hou2024graphalign}, which aggregate information across graphs at the population level, while graph-of-graphs models such as SamGoG~\citep{wang2025samgog} propagate information solely at the graph-instance level. In contrast, MGMT constructs a meta-graph whose nodes are attention-selected supernodes drawn from all graphs of an entity, preserving intra-graph connectivity while enabling fine-grained, node-level cross-graph message passing. This  leads to more interpretable results, as it reveals how different graphs interact to drive final prediction.}

\section{Methodology}
In this section, we present MGMT, detailing its prediction pipeline based on GT encoders and meta-graph construction, followed by describing how to interpret MGMT by identifying significant nodes and edges in \Cref{sec:interpretation}. An overview of the entire framework is provided in Fig.~\ref{fig:mgmt-diagram}.

\subsection{Multi-Graph Meta-Transformer (MGMT)}\label{sec:mgmt}
MGMT fuses multi-graph data using several steps, as described below.

\subsubsection{Graph-Specific Transformer Encoders}\label{sec:single-graph-gt}
For each instance, we observe a collection of $n$ graphs. For $i = 1, \dots, n$, we denote the graph as $\mathcal{G}_i = (\mathcal{V}_i, \mathcal{E}_i)$ with node set $\mathcal{V}_i$ of size $N_i=\abs{\mathcal{V}_i}$, and edge set $\mathcal{E}_i$. Each graph $\mathcal{G}_i$ is characterized by a node feature matrix  $\bm{X}_i \in \mathbb{R}^{N_i \times d}$ and an adjacency matrix $\bm{A}_i \in \{0, 1\}^{N_i \times N_i}$. Graphs per each instance may differ in size and structure (for presentation purposes only, we assume feature size is $d$ across all graphs), yet the collection $\cbrac{\mathcal{G}_1, \dots, \mathcal{G}_n}$ share a common label $Y\in \mathcal{Y}$. The task is graph-level classification of the shared label $Y$ using evidence aggregated across graphs.
Throughout this paper, we use bold uppercase letters (e.g., $\bm{X}$) for matrices and bold lowercase letter (e.g., $\bm{x}$) for vectors, and  $[n]$ denoting the set $\cbrac{1,\dots,n}$.

We formalize the core graph-specific Transformer mechanics used in MGMT, building upon the localized graph-aware attention principles detailed in Appendix~\ref{appendix:localized-attention}.
For each $i\in [n]$, the graph \( \mathcal{G}_i\) with node features \( \bm{X}_i \in \mathbb{R}^{N_i \times d} \) undergoes  
$L$ GT layers with multi-head self-attention. Starting with \( \bm{H}_i^{(0)} = \bm{X}_i \) as initial features, we define the extended neighborhood \( \bar{\mathcal{N}}(u) = \mathcal{N}(u) \cup \{u\} \) to ensure nodes attend to themselves during message passing.

For layer \( \ell \in [L] \), attention head \( m \in [M] \), and  edge \( (u, v) \in \mathcal{E}_i \cup \{(u, u)\} \), we compute:
\begin{eqnarray*}\label{eq:single-mod-gt}
    \bm{Q}_{i,u}^{(\ell, m)} &= \bm{W}_{Q,i}^{(\ell, m)} \bm{H}_{i,u}^{(\ell-1)} + \bm{b}_{Q,i}^{(\ell, m)}, \\
    \bm{K}_{i,v}^{(\ell, m)} &= \bm{W}_{K,i}^{(\ell, m)} \bm{H}_{i,v}^{(\ell-1)} + \bm{b}_{K,i}^{(\ell, m)}, \\
    \bm{V}_{i,v}^{(\ell, m)} &= \bm{W}_{V,i}^{(\ell, m)} \bm{H}_{i,v}^{(\ell-1)} + \bm{b}_{V,i}^{(\ell, m)},\\
    \alpha_{i,uv}^{(\ell, m)} &= \frac{
        \exp\left( \bm{Q}_{i,u}^{(\ell, m)\top} \bm{K}_{i,v}^{(\ell, m)}/\sqrt{d'} \right)
    }{
        \sum\nolimits_{v' \in \bar{\mathcal{N}}(u)} \exp\left( \bm{Q}_{i,u}^{(\ell, m)\top} \bm{K}_{i,v'}^{(\ell, m)}/\sqrt{d'} \right)
    }, \\
    \bm{Z}_{i,u}^{(\ell, m)} &= \sum\nolimits_{v \in \bar{\mathcal{N}}(u)} \alpha_{i,uv}^{(\ell, m)} \, \bm{V}_{i,v}^{(\ell, m)},
\end{eqnarray*}
where \( \bm{H}_{i,u}^{(\ell-1)} \in \mathbb{R}^d \) is the feature of node \( u \) at layer \( \ell-1 \), \( d' = d / M \) denotes the per-head dimension. Projection matrices \( \bm{W}_{Q,i}^{(\ell, m)}, \bm{W}_{K,i}^{(\ell, m)}, \bm{W}_{V,i}^{(\ell, m)} \in \mathbb{R}^{ d' \times d } \) and biases \( \bm{b}_{Q,i}^{(\ell, m)}, \bm{b}_{K,i}^{(\ell, m)}, \bm{b}_{V,i}^{(\ell, m)} \in \mathbb{R}^{d'} \) are learnable parameters. The query vector \( \bm{Q}_{i,u}^{(\ell, m)} \) represents information node \( u \) seeks from neighbors, key vector \( \bm{K}_{i,v}^{(\ell, m)} \) encodes neighbor \( v \)'s relevance, and value vector \( \bm{V}_{i,v}^{(\ell, m)} \) contains content to be aggregated. Attention score \( \alpha_{i,uv}^{(\ell, m)} \) determines how much node \( u \) attends to node \( v \).

The outputs of all heads are concatenated ($\|$ denotes the concatenation) and transformed via:
\begin{align*}
    \bm{Z}_{i,u}^{(\ell)} &= \big\|_{m\in[M]}\left[ \bm{Z}_{i,u}^{(\ell, 1)}, \dots, \bm{Z}_{i,u}^{(\ell, M)} \right] \bm{W}_{O,i}^{(\ell)} + \bm{b}_{O,i}^{(\ell)},
\end{align*}
where \( \bm{W}_{O,i}^{(\ell)} \in \mathbb{R}^{d \times d} \), \( \bm{b}_{O,i}^{(\ell)} \in \mathbb{R}^d \). Stacking these vectors across all nodes yields $\bm{Z}_{i}^{(\ell)} \in \mathbb{R}^{N_i \times d}$.
% \begin{align}
%     \bm{Z}_{i}^{(\ell)} =
%     \begin{bmatrix}
%     \bm{Z}_{i,1}^{(\ell)} \\
%     \vdots \\
%     \bm{Z}_{i,N_i}^{(\ell)}
%     \end{bmatrix} \in \mathbb{R}^{N_i \times h}.
% \end{align}

\( \bm{Z}_i^{(\ell)} \) is processed by an FFN with activation, then combined via residual and LayerNorm to yield:
\begin{equation*}\label{eq:single-mod-ff}
\bm{H}_{i}^{(\ell)}=\textnormal{LayerNorm}(\bm{Z}_i^{(\ell)}+\sigma(\textnormal{FFN}(\bm{Z}_i^{(\ell)})))
\end{equation*}

%where \( \boldsymbol{\gamma}_i^{(\ell)}, \boldsymbol{\beta}_i^{(\ell)} \in \mathbb{R}^d \) are learnable parameters and \( \odot \) represents the element-wise (Hadamard) product.

After \( L \) layers, we obtain final output and attentions by dynamically aggregating across all depths:
\begin{equation*}\label{eq:final-embed&att}  
\begin{aligned}
    \bm{H}_i &= \sum\nolimits_{\ell \in [L]} \Gamma_i^{(\ell)} \, \bm{H}_{i}^{(\ell)} \in \mathbb{R}^{N_i \times d}, \\
    \boldsymbol{\alpha}_i &= \left\{ \alpha_{i,uv} = \sum\nolimits_{\ell\in[L]} \Gamma_i^{(\ell)} \paren*{\frac{1}{M}\sum\nolimits_{m\in[M]} \alpha_{i,uv}^{(l, m)}} \right\},
\end{aligned}
\end{equation*}
where ${(u, v) \in \mathcal{E}_i \cup \{(u,u)\}}$, and $\cbrac{\Gamma_i^{(\ell)}}_{\ell=1}^{n}$ are confidence scores measuring quality of each Transformer layer (\Cref{app:magnet} for details).

\begin{figure}
  \begin{center}
    \includegraphics[width=0.4\textwidth]{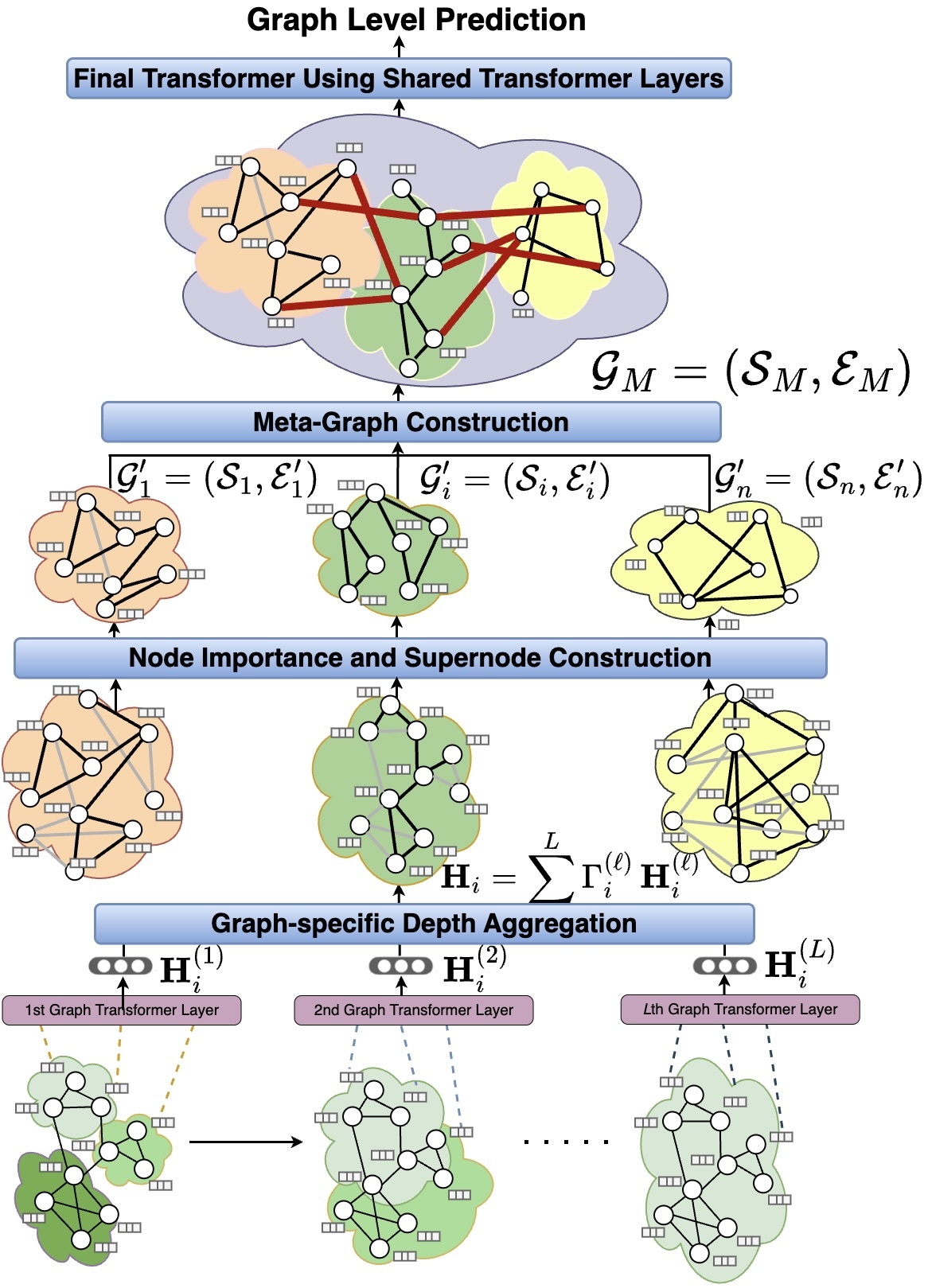}
  \end{center}
  \vspace{-8pt}
  \caption{Architecture of Multi-Graph Meta-Transformer (MGMT). Depth-Aware GT layers process individual graphs, extracting supernodes to form a meta-graph. Additional GT layers model both intra- and inter-graph interactions.}
  \label{fig:mgmt-diagram}
  \vspace{-8pt}
\end{figure}

\subsubsection{Supernode Extraction}\label{sec:supernode-extraction}

To identify the most informative nodes in each graph \( i \), we extract \textit{supernodes} based on the learned attention scores \( \boldsymbol{\alpha}_i \) in \eqref{eq:final-embed&att}. Given a predefined threshold \( \tau \), we form the set of supernodes as
\begin{align*}
    \mathcal{S}_i = \cbrac*{ u \in \mathcal{V}_i \bigmid  \sum\nolimits_{(u,v) \in \mathcal{E}_i} \alpha_{i,uv} \geq \tau }.
\end{align*}
Intuitively, $\sum_{(u,v) \in \mathcal{E}_i} \alpha_{i,uv}$ quantifies the total attention distributed by node \( u \) to its neighbors.

We then induce a subgraph over these nodes:
\begin{align*}
\mathcal{G}_i' = (\mathcal{S}_i, \mathcal{E}_i') , \mathcal{E}_i' = \left\{ (u, v) \in \mathcal{E}_i \mid u, v \in \mathcal{S}_i \right\}
\end{align*}

%Unlike fusion methods that assume aligned nodes, MGMT selects modality-specific supernodes across different, unaligned graphs, and then (in the next stage) adds cross-graph superedges between representation-similar supernodes. This  design constructs an instance-level meta-graph without requiring node identity matches, cleanly decoupling backbone encoding from inter-graph connectivity and yielding a compact, interpretable substrate for cross-graph reasoning.

We conduct a sensitivity study in \Cref{appendix:sensitivity-analysis} to examine how choices of $\tau$ influence performance. Our analysis reveals that $\tau$ controls a trade-off: a higher $\tau$ creates a sparser meta-graph, which risks information loss, while a lower $\tau$ retains more nodes, risking overfitting. By guiding the selection of $\tau$ via cross-validation, we identified a robust range of values that yields stable performance.

\subsubsection{Meta-Graph Construction}\label{sec:meta-graph}

To model both intra- and cross-graph interactions, we construct an instance-level meta-graph \( \mathcal{G}_M = (\mathcal{S}_M, \mathcal{E}_M) \), where \( \mathcal{S}_M = \bigcup_{i=1}^{n} \mathcal{S}_i \) contains all graph-level supernodes. Each \( u \in \mathcal{S}_i \) is associated with a latent embedding \( \bm{H}_{i,u} \in \mathbb{R}^d \) as defined in \eqref{eq:final-embed&att}.
The edge set \( \mathcal{E}_M \) of the meta-graph includes two components. First, we retain all intra-graph edges from the pruned graphs \( \mathcal{G}_i' = (\mathcal{S}_i, \mathcal{E}_i') \), preserving graph-specific relationships. Second, we introduce inter-graph edges between cross-graph supernodes using their feature similarity. For any node pair \( (u, v) \) with \( u \in \mathcal{S}_i \), \( v \in \mathcal{S}_j \), and \( i \neq j \), we compute the cosine similarity:
\begin{align*}\label{eq:cos-similarity}
    e_{uv} = \frac{\bm{H}_u^\top \bm{H}_v}{\|\bm{H}_u\| \|\bm{H}_v\|}
\end{align*}
If the similarity score \( e_{uv} \) exceeds a predefined threshold \( \gamma \), the edge \( (u, v) \) is added to \( \mathcal{E}_M \).

The resulting adjacency matrix \( \bm{A}_M \in \mathbb{R}^{\abs{\mathcal{S}_{M}} \times \abs{\mathcal{S}_{M}}} \) encodes both intra- and inter-graph relationships among supernodes. We connect supernodes across graphs only when their latent embeddings are similar, mirroring observations that learning or selecting edges to reduce Dirichlet energy improves downstream accuracy~\cite{chen2020iterative}; see \Cref{app:superedges-theory} for the formal smoothness justification. Appendix~\ref{appendix:sensitivity-analysis} shows accuracy is non-monotone in \(\gamma\),
reflecting the trade-off between dense connectivity and sparsity. Appendix~\ref{appendix:similarity-comparison} shows comparable performance with cosine, Pearson, Euclidean, and dot-product similarities, indicating robustness to the similarity choice. {In Appendix\ref{app:dynamic-thresholds}, we replace the validation-tuned thresholds $\tau$ and $\gamma$ with dynamic, distribution-based quantile thresholds and show that this data-driven variant achieves comparable accuracy with the best validation-tuned threshold configuration on all datasets, indicating that MGMT is robust to threshold selection and does not hinge on delicate manual tuning.}

\subsubsection{Feature Learning and Prediction}

After constructing \( \mathcal{G}_M \), we apply additional GT layers to the stacked supernode embeddings \( \bm{H}_M^{(0)} \in \mathbb{R}^{\abs{\mathcal{S}_{M}} \times d} \). Multi-head self-attention and feedforward updates are applied to capture global contextual dependencies, resulting in updated supernode embeddings \( \bm{H}_M \in \mathbb{R}^{\abs{\mathcal{S}_{M}} \times d} \). For classification, we apply permutation-invariant pooling followed by a fully connected network: \(\hat{y} = f(\text{Pool}(\bm{H}_M)) .\)
\( \text{Pool}(\cdot) \) is pooling/aggregation operator, and \( f(\cdot) \) maps pooled vector to class probabilities \( \hat{y} \in \mathbb{R}^{|\mathcal{Y}|} \).

\subsection{Interpretation of MGMT}\label{sec:interpretation}
The identified meta-graph is analyzed via  
\textbf{Node-level analysis}, highlighting influential nodes and their contributions, and  
\textbf{Edge-level analysis}, uncovering critical relationships among these nodes.  
This framework enhances interpretability, as illustrated in neuroscience application results in \Cref{sec:exp-neuro}.

\section{Theoretical Properties}\label{app:theory}
In this section, we establish MGMT's theoretical foundations through: (1) \textit{intra-graph analysis}, demonstrating superior feature representation within individual graphs; and (2) \textit{inter-graph analysis}, showing enhanced predictive power through meta-graph construction. Complete proofs are provided in \Cref{app:proofs}, with additional theoretical results in \Cref{app:additional-theory}.

\subsection{Intra-graph analysis}
We analyze the depth-aware mixing strategy in \eqref{eq:final-embed&att} which enables MGMT to aggregate information across different depths of message passing. First, we establish some formal definitions.

Let $\mathcal{M}(\bm{A})\in\mathbb{R}^{N\times N}$ be a message-passing operator (e.g., augmented adjacency matrix, $\mathcal{M}(\bm{A}) = \bm{A}+\bm{I}$). Given $\mathcal{M}(\bm{A})$ and an activation function $\sigma$, denote the $1$-hop feature aggregation as 
$\mathcal{U}(\bm{X};\mathcal{M}(\bm{A}), \sigma) \coloneqq \sigma(\mathcal{M}(\bm{A})\bm{X})$,
and the $\ell$-hop aggregation is the $\ell$-fold composition of $\mathcal{U}$, 
\begin{align*}
    \mathcal{U}^{\ell}(\bm{X};\mathcal{M}(\bm{A}), \sigma) \coloneqq \underbrace{\sigma(\mathcal{M}(\bm{A})\cdots \sigma(\mathcal{M}(\bm{A})}_{\ell \text{ times}}\bm{X})).
\end{align*}
Building on these, we introduce \textbf{$L$-hop mixing}, characterizing a model's ability to represent multi-depth information. While originally studied for Graph Convolutional Networks with graph Laplacians~\citep{Abu-2019-mixhop}, we extend this concept to general message passing operators.

\begin{definition}[$L$-hop mixing with general message passing]\label{def:lhop-mixing}
Given $\mathcal{M}(\cdot)$, a model is capable of representing $L$-hop mixing if for any $\eta_1, \dots, \eta_L \in \mathbb{R}$, there exists a setting of its parameter and an injective (one-to-one) mapping $f(\cdot)$, such that the output of the model is equivalent as 
\begin{equation*}\label{eq:lhop-mixing}
    f\paren*{\sum_{\ell=1}^{L} \eta_{\ell} \cdot \mathcal{U}^{\ell}(\bm{X};\mathcal{M}(\bm{A}), \sigma)},
\end{equation*}
for any adjacency matrix $\bm{A}$, activation function $\sigma$, and node features $\bm{X}$.
\end{definition}
\begin{remark}
    If $\mathcal{M}(\bm{A}) = \bm{D}^{-\frac{1}{2}}(\bm{A}+\bm{I})\bm{D}^{-\frac{1}{2}}$, where $\bm{D}$ is the diagonal degree matrix with $D_{ii}=\sum_{j=1}^{N}A_{ij}+1$, Definition \ref{def:lhop-mixing} recovers the $L$-hop mixing with Graph Laplacian in the GCN literature~\citep{Abu-2019-mixhop, zhou2024}.
\end{remark}

First theorem demonstrates that MGMT's depth-aware GTs represent $L$-hop mixing for each graph. 
%For simplicity, we omit modality-specific subscripts (e.g. $N$ instead of $N_i$) as the arguments apply universally.

\begin{theorem}\label{thm:mgmt-lhop}
    With message passing operator $\mathcal{M}(\bm{A}) = \textnormal{softmax}\paren*{\bm{A}+\bm{I}}$, where $\textnormal{softmax}$ is applied row-wise. MGMT's depth-aware GTs in \eqref{eq:single-mod-gt}--\eqref{eq:final-embed&att} can represent $L$-hop mixing.
\end{theorem}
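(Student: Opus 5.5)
The plan is to exhibit an explicit parameter setting under which each depth-aware GT layer collapses to one application of $\mathcal{U}(\cdot;\mathcal{M}(\bm{A}),\sigma)$, and then to read the depth-mixing weights $\Gamma^{(\ell)}$ as the coefficients $\eta_\ell$. The first step is to make the attention weights coincide with $\textnormal{softmax}(\bm{A}+\bm{I})$ restricted to $\bar{\mathcal{N}}(u)$. We set $\bm{W}_{Q}^{(\ell,m)}=\bm{0}$, $\bm{W}_{K}^{(\ell,m)}=\bm{0}$ and choose the biases so that $\bm{b}_Q^\top\bm{b}_K/\sqrt{d'}=1$. Every logit is then the constant $1$, and $\alpha_{uv}^{(\ell,m)} = e/\sum_{v'\in\bar{\mathcal{N}}(u)}e = 1/(\deg(u)+1)$ for $v\in\bar{\mathcal{N}}(u)$.

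We then check that this equals the row-wise softmax of $\bm{A}+\bm{I}$ under the convention the statement implicitly uses: the softmax is taken over the support of $\bm{A}+\bm{I}$, i.e.\ masked to $\bar{\mathcal{N}}(u)$, where all entries equal $1$. If instead the unmasked softmax over all $N$ entries is intended, we handle it by making the attention global on non-neighbours. Concretely, we would note that the paper's backbone-agnostic remark allows global attention, and use logits $1$ on $\bar{\mathcal{N}}(u)$ and $0$ elsewhere. These logits can be realised with a learned edge bias or positional encoding, which is the step I expect to require the most care. I would state this convention explicitly, taking the masked version as the primary reading.

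With value maps $\bm{W}_V^{(\ell,m)}$ given by the identity block for head $m$, zero value biases, and $\bm{W}_O=\bm{I}$, $\bm{b}_O=\bm{0}$, the concatenation over heads gives $\bm{Z}^{(\ell)} = \mathcal{M}(\bm{A})\bm{H}^{(\ell-1)}$. The second step is to make the FFN/residual/LayerNorm block act as $\sigma$. The natural choice makes the residual path produce $\sigma(\bm{Z})$: we take the FFN to cancel the skip connection, or equivalently absorb the residual by parametrising the FFN as $\bm{Z}\mapsto\sigma^{-1}$-free form. The LayerNorm is then set to identity via its affine parameters. This is the main obstacle, since LayerNorm is not globally invertible to identity. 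I would handle it by interpreting the block output up to an injective map per layer, or, following the MixHop convention, by allowing the sublayer to be configured as $\sigma(\cdot)$ and absorbing LayerNorm's invertible per-node normalisation into the injective $f$ of Definition~\ref{def:lhop-mixing}. Once this is done, induction gives $\bm{H}^{(\ell)}=\mathcal{U}^{\ell}(\bm{X};\mathcal{M}(\bm{A}),\sigma)$.

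Finally, we set the confidence scores $\Gamma^{(\ell)}=\eta_\ell$, which is possible since the $\Gamma^{(\ell)}$ are free learnable scalars (see \Cref{app:magnet}). This yields $\bm{H}=\sum_{\ell}\eta_\ell\,\mathcal{U}^{\ell}(\bm{X};\mathcal{M}(\bm{A}),\sigma)$, with $f$ the identity (or the injective map absorbed above). The construction uses no property of $\bm{A}$, $\bm{X}$ or $\sigma$, so it holds for all of them, which establishes $L$-hop mixing.
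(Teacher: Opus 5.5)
Your skeleton matches the paper's proof. You zero $\bm{W}_Q,\bm{W}_K$ so all logits are equal and attention is uniform on $\bar{\mathcal{N}}(u)$. That is the row-wise softmax of $\bm{A}+\bm{I}$ in the masked sense the paper intends: its computation $\textnormal{softmax}(\bm{I})=\bm{I}$ in \Cref{app:additional-intra-analysis} confirms this reading, so your global-attention fallback is unnecessary. You then use identity value and output maps, induct, and set $\Gamma^{(\ell)}=\eta_\ell$ with $f$ the identity.

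The gap is the step that makes each layer output exactly $\sigma(\bm{Z}^{(\ell)})$, and neither of your fixes works.
\begin{itemize}
\item Cancelling the skip connection through the FFN needs $\sigma(\textnormal{FFN}(\bm{Z}))=\sigma(\bm{Z})-\bm{Z}$. This is possible for ReLU: take the FFN to be $\bm{Z}\mapsto-\bm{Z}$, since $\mathrm{ReLU}(-z)=\mathrm{ReLU}(z)-z$. It fails for general $\sigma$: for $\sigma=\tanh$ the left side is bounded and the right side is not, and Definition~\ref{def:lhop-mixing} quantifies over all $\sigma$.
\item LayerNorm cannot be made the identity by its affine parameters. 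It is also not injective, since it is invariant to adding $c\bm{1}$ to its input, so there is no ``invertible per-node normalisation'' to absorb into $f$.
\item Even an injective per-layer map could not be moved into a single outer $f$. It acts inside the recursion, before the next $\mathcal{M}(\bm{A})$ and $\sigma$, and separately on each summand of $\sum_\ell\Gamma^{(\ell)}\bm{H}^{(\ell)}$.
\end{itemize}

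In fact no parameter choice can work with LayerNorm in place. Take $\bm{A}=\bm{0}$, $L=1$, $\eta_1=1$ and $\sigma$ the identity, so the target is $\bm{X}$ itself. Each output row is then LayerNorm of a continuous function of the corresponding row $\bm{x}_u\in\mathbb{R}^d$, so it lies in a set of dimension at most $d-1$. By invariance of domain the map from $\bm{x}_u$ to its output row is not injective, so the output cannot determine $\bm{X}$ and no injective $f$ exists.

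The paper avoids this by not emulating the block with parameters at all. It sets the FFN weight to $\bm{I}$ and bias to $\bm{0}$, and \emph{removes} the residual connection and the normalisation layer. Then $\bm{H}^{(\ell)}=\sigma(\bm{Z}^{(\ell)})$ holds exactly, and $\bm{H}^{(\ell)}=\mathcal{U}^{\ell}(\bm{X};\mathcal{M}(\bm{A}),\sigma)$ follows by induction. That is the missing step. You should state it openly as a simplification of the layer: the claim concerns the depth-aware GT with residual and LayerNorm switched off.

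A minor point: the $\Gamma^{(\ell)}$ in \Cref{app:magnet} are computed from weighted error rates rather than learned freely. The paper simply sets them to $\eta_\ell$, relying on its remark that the fusion step accepts arbitrary weights.
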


The proof appears in \Cref{app:proofs}. We also demonstrate in \Cref{app:additional-intra-analysis} that vanilla Graph Transformers \textbf{cannot} learn $L$-hop neighborhood mixing. {We further clarify the relationship between $L$-hop mixing and Weisfeiler-Leman expressivity in~\Cref{app:lhop-vs-wl}, showing that these characterize distinct but complementary aspects of model power.}

\subsection{Inter-graph analysis}
This section analyzes how MGMT's meta-graph construction boosts prediction power compared to late fusion approaches~\citep{zhang-2023-latefusion-az}. 
%These techniques involve training separate graphical models for each graph and combining their predictions, offering simplicity and robustness when each graph independently predicts the ground-truth label. However, late fusion becomes less effective when graphs exhibit complex correlations or provide complementary information. Through theoretical analysis, we demonstrate that late fusion methods are less representative compared to MGMT and lead to larger approximation errors.

Recall from \Cref{sec:meta-graph}, the meta-graph \( \mathcal{G}_M = (\mathcal{S}_M, \mathcal{E}_M) \) combines supernodes \( \mathcal{S}_M = \bigcup_{i=1}^{n} \mathcal{S}_i \). Its initial embedding $\bm{H}^{(0)}_{M} \in \mathbb{R}^{\abs{\mathcal{S}_M}\times d}$ stacks supernode embeddings where \( \forall u \in \mathcal{S}_i \), $\bm{H}^{(0)}_{M,u}=\bm{H}_{i,u}$. MGMT applies additional $L_{\textnormal{GT}}$ Graph Transformer layers followed by a global pooling to obtain the final graph-level embedding. Lastly, we apply $L_{\textnormal{MLP}}$ MLP layers for class probabilities. Assume without loss of generality that $L_{\textnormal{GT}}=1$ and $L_{\textnormal{MLP}}=2$, the function class of MGMT given $\bm{H}^{(0)}_{M}$ is
\begin{align*}\label{eq:func-class-meta}
    \mathcal{F}_{M} = \cbrac*{f:\mathbb{R}^{\abs{\mathcal{S}_{M}}\times d} \mapsto \mathbb{R}^{\abs{\mathcal{Y}}} \Bigmid f} 
\end{align*}
where
\begin{align*}
    f = {\bm{W}^{(2)}_{\textnormal{MLP}} \sigma \paren*{\bm{W}^{(1)}_{\textnormal{MLP}} \textnormal{Pool}\paren{\textnormal{GT}(\bm{H}^{(0)}_{M}) }}},
\end{align*}
$\textnormal{GT}(\cdot): \mathbb{R}^{\abs{\mathcal{S}_{M}}\times d} \mapsto \mathbb{R}^{\abs{\mathcal{S}_{M}}\times d}$ is the Graph Transformer, $\textnormal{Pool}(\cdot):\mathbb{R}^{\abs{\mathcal{S}_{M}}\times d} \mapsto \mathbb{R}^{h'}$ is a graph pooling, and $\bm{W}^{(1)}_{\textnormal{MLP}} \in \mathbb{R}^{h'\times h''}$, $\bm{W}^{(2)}_{\textnormal{MLP}} \in \mathbb{R}^{\abs{\mathcal{Y}}\times h''}$ are MLP weight matrices, with $h',h'' \in \mathbb{N}^{+}$. 
%For simplicity, we omit the bias terms in the MLP, however, 
All subsequent analysis could be easily extended to any number of $L_{\textnormal{MLP}}$ and $L_{\textnormal{GT}}$.

The late fusion strategy that employs weighted averaging of class probabilities from graph-specific models can be represented as
\begin{align*}
    \mathcal{F}_{\textnormal{late}}  =  \cbrac*{f:\mathbb{R}^{\abs{\mathcal{S}_{M}}\times d} \mapsto \mathbb{R}^{\abs{\mathcal{Y}}} \Bigmid f} 
\end{align*}
where
\begin{align*}
    f  =  {\sum_{i=1}^{n}w_{i} \cdot \bm{W}^{(2)}_{\textnormal{MLP},i} \sigma \paren*{\bm{W}^{(1)}_{\textnormal{MLP},i} \textnormal{Pool}_{\mathcal{S}_i}\paren{\bm{H}^{(0)}_{M}}}},
\end{align*}
$\cbrac{\bm{W}^{(\ell)}_{\textnormal{MLP},i}}_{l\in [2], i\in[n]}$ is the set of graph-specific MLP parameter, and the set of late fusion weights is $\cbrac{w_i \in \mathbb{R}}_{i \in [n]}$ such that $\sum_{i=1}^{n}w_i=1$. Given the joint distribution of a feature-label pair $(\bm{X}, Y)\sim \mathcal{P}$ and a loss function $\mathcal{L}$, denote the generalization error of a function $f$ as
\begin{align*}
    R(f;\mathcal{P},\mathcal{L}) \coloneqq \mathbb{E}_{(\bm{X}, Y)\sim \mathcal{P}}\brac*{\mathcal{L}(f(\bm{X}),Y)}
\end{align*}
Following \citet{Shai-2014-understandingml}, we define the \textbf{approximation error} of a function class $\mathcal{F}$ as the minimum generalization error achievable by a function in $\mathcal{F}$, namely,
\begin{align*}
    \epsilon(\mathcal{F};\mathcal{P},\mathcal{L}) \coloneqq \inf_{f\in \mathcal{F}} R(f;\mathcal{P},\mathcal{L}).
\end{align*}

Assume latent representations of the meta graph follow $(\bm{H}^{(0)}_{M}, Y) \sim  \mathcal{P}_{M}$. The next theorem shows MGMT is a more powerful graph fusion framework compared to late fusion in the sense that it achieves a smaller approximation error.

\begin{theorem}\label{thm:late-fusion}
    Denote approximation error of MGMT on the meta-graph as $\epsilon(\mathcal{F}_{M};\mathcal{P}_{M},\mathcal{L})$, and the approximation error of late fusion of graph-specific classifiers $\epsilon(\mathcal{F}_{\textnormal{late}};\mathcal{P}_{M},\mathcal{L})$, then
    \begin{align*}
        \epsilon(\mathcal{F}_{M};\mathcal{P}_{M},\mathcal{L}) \leq \epsilon(\mathcal{F}_{\textnormal{late}};\mathcal{P}_{M},\mathcal{L}).  
    \end{align*}
\end{theorem}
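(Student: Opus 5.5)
The plan is to prove the stronger statement $\mathcal{F}_{\textnormal{late}} \subseteq \mathcal{F}_{M}$, i.e.\ that every late-fusion predictor can be written exactly as an MGMT predictor on the meta-graph. The theorem then follows directly. The approximation error is an infimum of $R(f;\mathcal{P}_M,\mathcal{L})$ over the class, and an infimum over a larger set can only be smaller. Nothing about $\mathcal{P}_M$ or $\mathcal{L}$ is needed beyond $R$ being well defined.

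\textbf{Step 1: neutralize the Graph Transformer.} Choose GT parameters so that $\textnormal{GT}(\bm{H}^{(0)}_M)$ retains all the information in $\bm{H}^{(0)}_M$ that late fusion uses. The simplest choice is to set the value and output projections to zero, so the attention branch vanishes and each layer reduces to the identity, up to the residual/FFN/LayerNorm map. Alternatively, if the residual path is kept, zero the attention so that each row is transformed by a fixed row-wise map. If LayerNorm is not invertible on its image, I would instead widen the hidden dimension. One option is to let the GT output carry a graph-membership indicator of node $u \in \mathcal{S}_i$, supplied via positional or structural encodings, or via graph-specific attention heads restricted to intra-graph edges. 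This is the step where the modelling assumptions must be made explicit. I expect to state, as the paper implicitly does, that the GT can realize the identity map, or at least a row-wise injective map that the later layers can absorb.

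\textbf{Step 2: realize graph-wise pooling through the global pooling.} Take $\textnormal{Pool}(\cdot)$ with output dimension $h' = n\,h_0$, where $h_0$ is the dimension of each $\textnormal{Pool}_{\mathcal{S}_i}$. Let its $i$-th block equal $\textnormal{Pool}_{\mathcal{S}_i}(\bm{H}^{(0)}_M)$. This is consistent with the paper's convention that $\textnormal{Pool}_{\mathcal{S}_i}$ acts on $\bm{H}^{(0)}_M$ restricted to the rows of $\mathcal{S}_i$, which the meta-graph preserves because $\mathcal{S}_M = \bigcup_i \mathcal{S}_i$ is a disjoint union with known membership. The output is the concatenation $\bm{p} = [\bm{p}_1;\dots;\bm{p}_n]$.

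\textbf{Step 3: block-structured MLP.} Set $\bm{W}^{(1)}_{\textnormal{MLP}} = \textnormal{blockdiag}(\bm{W}^{(1)}_{\textnormal{MLP},1},\dots,\bm{W}^{(1)}_{\textnormal{MLP},n})$, with hidden width $h''$ equal to the sum of the per-graph widths. Because $\sigma$ acts elementwise, $\sigma(\bm{W}^{(1)}\bm{p})$ is the concatenation of the vectors $\sigma(\bm{W}^{(1)}_{\textnormal{MLP},i}\bm{p}_i)$. Then set $\bm{W}^{(2)}_{\textnormal{MLP}} = [w_1\bm{W}^{(2)}_{\textnormal{MLP},1},\dots,w_n\bm{W}^{(2)}_{\textnormal{MLP},n}]$. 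The product gives exactly $\sum_i w_i \bm{W}^{(2)}_{\textnormal{MLP},i}\sigma(\bm{W}^{(1)}_{\textnormal{MLP},i}\textnormal{Pool}_{\mathcal{S}_i}(\bm{H}^{(0)}_M))$, with the constraint $\sum_i w_i = 1$ playing no role. This shows $\mathcal{F}_{\textnormal{late}} \subseteq \mathcal{F}_M$, and taking infima gives the inequality.

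The main obstacle is Step 1 together with the choice of pooling in Step 2. The argument requires that MGMT's widths $h', h''$ and its pooling family be rich enough to contain graph-wise concatenated pooling, and that the GT layer can pass its input through. I would handle this by stating these as the standing interpretation of the function classes, with free width and a pooling family closed under graph-wise concatenation. A remark would note that strict inequality can arise when $Y$ depends on cross-graph interactions, which the inter-graph edges capture but a sum of per-graph terms cannot.
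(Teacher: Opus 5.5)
Your overall route is the paper's. It too proves $\mathcal{F}_{\textnormal{late}} \subseteq \mathcal{F}_M$ by taking $\textnormal{Pool}$ to be the concatenation of the $\textnormal{Pool}_{\mathcal{S}_i}$, $\bm{W}^{(1)}_{\textnormal{MLP}}$ block-diagonal in the $\bm{W}^{(1)}_{\textnormal{MLP},i}$, and $\bm{W}^{(2)}_{\textnormal{MLP}} = w_1\bm{W}^{(2)}_{\textnormal{MLP},1}\,\|\cdots\|\,w_n\bm{W}^{(2)}_{\textnormal{MLP},n}$. So your Steps 2--3 coincide with it, and your bookkeeping $h' = nh_0$ is more careful.

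The problem is the concrete construction in Step 1. In the layer the paper defines, the residual wraps only the FFN: $\bm{H}^{(\ell)} = \textnormal{LayerNorm}(\bm{Z}^{(\ell)} + \sigma(\textnormal{FFN}(\bm{Z}^{(\ell)})))$, with $\bm{Z}^{(\ell)}$ the attention output. There is no skip path from $\bm{H}^{(\ell-1)}$. Since attention weights sum to one, zeroing $\bm{W}_V$ or $\bm{W}_O$ makes every row of $\bm{Z}^{(\ell)}$ the same constant vector. Then $\textnormal{GT}(\bm{H}^{(0)}_M)$ no longer depends on its input, and you only obtain constant predictors. Your alternative ``if the residual path is kept'' presupposes a path that is not there. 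The input can only pass through the value branch, so you must keep it with $\bm{W}_V = \bm{I}$, $\bm{b}_V = \bm{0}$. The output projection, FFN and normalization must also be neutralized, a liberty the paper also takes in the proof of Theorem~\ref{thm:mgmt-lhop}. The remaining question is then what attention mixes.

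The missing idea is that MGMT's own hyperparameter handles the cross-graph part. The paper treats meta-graph attention as restricted to $\mathcal{E}_M$, and cosine similarity never exceeds $1$. So choosing $\gamma > 1$ creates no superedges, and rows from $\mathcal{S}_j$ cannot influence rows from $\mathcal{S}_i$ for $i \neq j$. No membership encodings or widening are needed, since $\textnormal{Pool}_{\mathcal{S}_i}$ already knows membership.

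Be aware, though, that $\gamma$ governs only inter-graph edges; the intra-graph edges $\mathcal{E}_i'$ survive. The paper's assertion that the meta-graph becomes fully disconnected and the GT layer the identity is therefore exact only when every $\mathcal{E}_i'$ is empty. Otherwise softmax attention still averages over intra-graph neighbors, and, for example, mean pooling of the attended rows differs in general from mean pooling of the original rows. So some extra hypothesis is unavoidable in general. Your stated fallback, an explicit standing assumption that the GT layer can realize the identity on $\bm{H}^{(0)}_M$, closes the gap and makes your argument as complete as the paper's. The zero-projection construction should be dropped.
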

{See \Cref{app:additional-theory} for the proof.}

\section{Numerical Experiments}

We evaluate the effectiveness of MGMT on three synthetic datasets in \Cref{sec:exp-synthetic} and two real-world neuroscience applications in \Cref{sec:exp-neuro} (memory experiment and Alzheimer’s disease detection).
The synthetic and Alzheimer’s datasets are multi-modal, whereas the memory experiment is multi-subject (graphs from different animals treated as distinct modalities).

\begin{figure*}[t]
\begin{center}
\centerline{\includegraphics[width=1\textwidth]{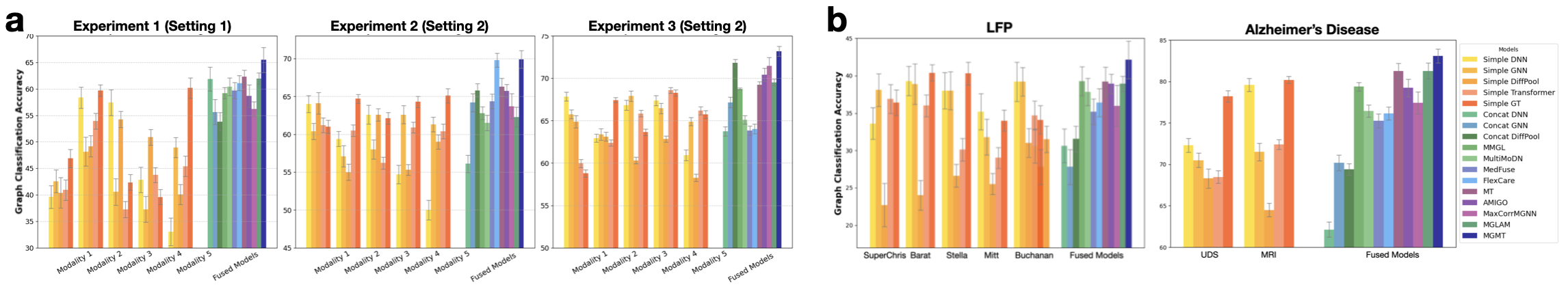}}
\vspace{-6pt}
\caption{{
\textbf{a.} Average test accuracy and standard error bars (computed over 50 repetitions) on three synthetic datasets. In all experiments, each sample consists of five synthetic graphs, which we refer to as Modalities 1–5. Experiment 1 (Setting 1) uses 100 samples, with 5 nodes per graph, all of which are informative. Experiments 2 and 3 (Setting 2) both involve structured noise: Experiment 2 uses 100 samples and Experiment 3 uses 2{,}000 samples; in both, each graph has 50 nodes, of which 40 are informative. The proposed MGMT model achieves the best overall performance. \textbf{b.} Test accuracies of baseline models on the LFP and Alzheimer’s disease datasets. Each bar represents the average test accuracy across 5 folds, along with the corresponding standard error. In both applications, MGMT consistently outperforms all other models.}}
\label{accuracy_results}
\vspace{-22pt}
\end{center}
\end{figure*}

\textbf{Baseline comparisons.} We compare against {four families of baselines:} (1) \emph{single-source models} (i.e., trained on each source) such as DNNs~\citep{lecun2015deep}, GNNs, DiffPool~\citep{ying2018hierarchical}, Transformers, and Graph Transformers {(GAT backbone)}; (2) \emph{early fusion models:} Concatenated Features using DNN/GNN/DiffPool feature extractors~\citep{ngiam2011,baltruvsaitis2018,lau2019survey}; (3) general-purpose multimodal fusion frameworks such as MMGL~\citep{zheng2022multi}, MultiMoDN~\citep{swamy2023multimodn}, FlexCare~\citep{xu2024flexcare}, MedFuse~\citep{hayat2022medfuse}, and Meta-Transformer~\citep{ma2022multimodal}, {and
(4) recent multi-graph learning methods tailored to the multi-graph-to-label setting, including AMIGO~\citep{nakhli2023sparse}, MaxCorrMGNN~\citep{maxcorrmgnn}, and MGLAM~\citep{mglam}, which operate on multiple graphs per entity via shared contexts, correlation-based objectives, or bag-of-graphs pooling (see Appendix~\ref{appendix:baselines} for implementation details). }

% \begin{figure}[b]
% \vspace{-12pt}
% \begin{center}
% \centerline{\includegraphics[width=0.49\textwidth]{figures/baselines.png}}
% \caption{{Test accuracies of baseline models on the LFP and Alzheimer’s disease datasets. Each bar represents the average test accuracy across 5 folds, along with the corresponding standard error. In both applications, MGMT consistently outperforms all other models.
% }}
% \label{AlzLfp_results}
% \vspace{-12pt}
% \end{center}
% \end{figure}

\textbf{Ablation studies.} We quantify the contribution of each component by (1) removing adaptive depth selection (i.e., using the final Transformer layer); (2) removing supernode selection (i.e., including all nodes in the meta-graph); (3) removing inter-graph edges; (4) removing intra-graph edges; and (5) disabling both the meta-graph and adaptive depth (i.e., late fusion of fixed-depth Transformer outputs). Results are presented in Tables~\ref{tab:accuracy-results}, \ref{tab:ablation-results} and Figure~\ref{accuracy_results}.

\subsection{Experimental Setup}

\textbf{Architecture and training.} Across datasets, MGMT uses \texttt{TransformerConv} layers with global max or mean pooling to form graph-level embeddings. Models are trained on 80\% of the data, with 10\% for validation and 10\% for testing, using Adam and early stopping on validation loss. For real datasets, we use 5-fold cross-validation. Hyperparameters, including number of layers, dropout, learning rate, epochs, and node-importance thresholds, are tuned with Optuna (100 trials), selecting the best configuration by validation performance. For simulation studies, we run 50 independent trials and report mean test accuracy with standard errors.

\textbf{Runtime and scalability.} Appendix~\ref{appendix:computation} provides component-wise time complexity, empirical runtime profiling (average per-epoch and stage-wise breakdowns), and controlled scalability experiments over graph size, number of graphs per sample count, sample size, and feature dimensionality, showing practical efficiency and predictable scaling comparable to Transformer-based graphs.

\subsection{Synthetic Experiments}\label{sec:exp-synthetic}
We simulate five graphs per sample under varying feature mechanisms, number of nodes, sample size, and noise. Each node has a $p$-dimensional feature; a subset of nodes is \emph{informative} (their features influence the
graph-level binary target), and the rest are \emph{non-informative} noise. {We create an intermediate binary label for each modality, then aggregate them into an entity-level label by applying a threshold to a weighted sum of these modality-specific labels.} %Each modality yields a binary label, and a shared target is defined by aggregating the modality-specific labels for multimodal classification. 
\textbf{Experiment 1} (Setting~1; Appendix~\ref{appendix:simulation}): informative-node features are drawn from modality-specific multivariate Gaussians; labels use a linear thresholding rule; $n=100$. \textbf{Experiment 2} (Setting~2; Appendix~\ref{appendix:simulation}): features for informative
nodes are generated using a Gaussian Process to induce temporal structure across features; labels use a nonlinear function (sinusoidal and quadratic); $n=100$. \textbf{Experiment 3}: follows the same setting as Experiment 2 but increases the graph size and sample size.
Each graph has 50 nodes, with 40 designated as important. The sample size is increased to 2,000, allowing us to assess MGMT’s performance at scale. 

Across all experiments, MGMT outperforms baseline models (Fig.~\ref{accuracy_results}). Ablations show accuracy drops when removing adaptive depth, or inter-graph edges, and degrade most when both meta-graph and adaptive depth are disabled, supporting importance of hierarchical reasoning (Table~\ref{tab:ablation-results}).
 
\subsection{Neuroscience Applications}\label{sec:exp-neuro}
\subsubsection{Local field potential (LFP) activity in a sequential memory task}

We apply MGMT to a challenging problem: predicting the stimulus presented on a given trial using LFP activity from hippocampus (Fig.~\ref{rat_illustration}). In this experiment, 5 rats received repeated presentations of a sequence of 5 odors at a single port and were required to identify whether each stimulus was presented in correct or incorrect sequence position. Neural LFP activity was recorded from the dorsal CA1 subregion of the hippocampus as they performed the task~\citep{allen2016, shahbaba2022}. Treating each rat as a distinct graph, MGMT borrows power across subjects and fuses subject-specific representations to decode stimulus identity on each trial from LFP.

Each trial is associated with one shared stimulus label (A,B,C,D or E). We construct a separate graph for each rat per trial using its own electrode-level LFP signals. Nodes represent electrodes (vary in number and identity across subjects), and edges capture intra-subject correlations. We then link “supernodes” across rats when their latent embeddings are similar under MGMT’s localized attention. Superedges are aligning comparable brain dynamics across animals, effectively “borrowing statistical strength” across rats to reduce noise and stabilize the trial-level representation used for decoding. This is not meant to just simply connect various brain regions across rats, rather alignment of their brain dynamics to strengthen the overall signals by properly borrowing power across rats.

As shown in Table~\ref{tab:accuracy-results}, MGMT achieves the highest accuracy (\textbf{42.13\%} ± 2.52) predicting which odor (A–E) was presented on each trial using the LFP dataset, outperforming all baseline and fusion models. {The best competing method, MMGL, reaches 39.28\%, with other recent approaches such as MGLAM (38.93\%), AMIGO (38.92\%), MT (39.20\%), MultiMoDN (37.82\%), and FlexCare (36.42\%) trailing behind.} Traditional concatenation-based approaches like DNN and GNN yield substantially lower performance, highlighting the difficulty of this cross-rat decoding task. To our knowledge, these results provide the first direct evidence that the stimulus presented on a given trial can be accurately predicted based on hippocampal LFP activity alone, which highlights the potential of graph integration approaches in general and the potential of the MGMT model specifically. 

Ablation results (Table~\ref{tab:ablation-results}) confirm that each architectural component contributes meaningfully to MGMT's performance, with the full model achieving the highest accuracy across all datasets.

\begin{figure}[t]
\begin{center}
\centerline{\includegraphics[width=0.48\textwidth]{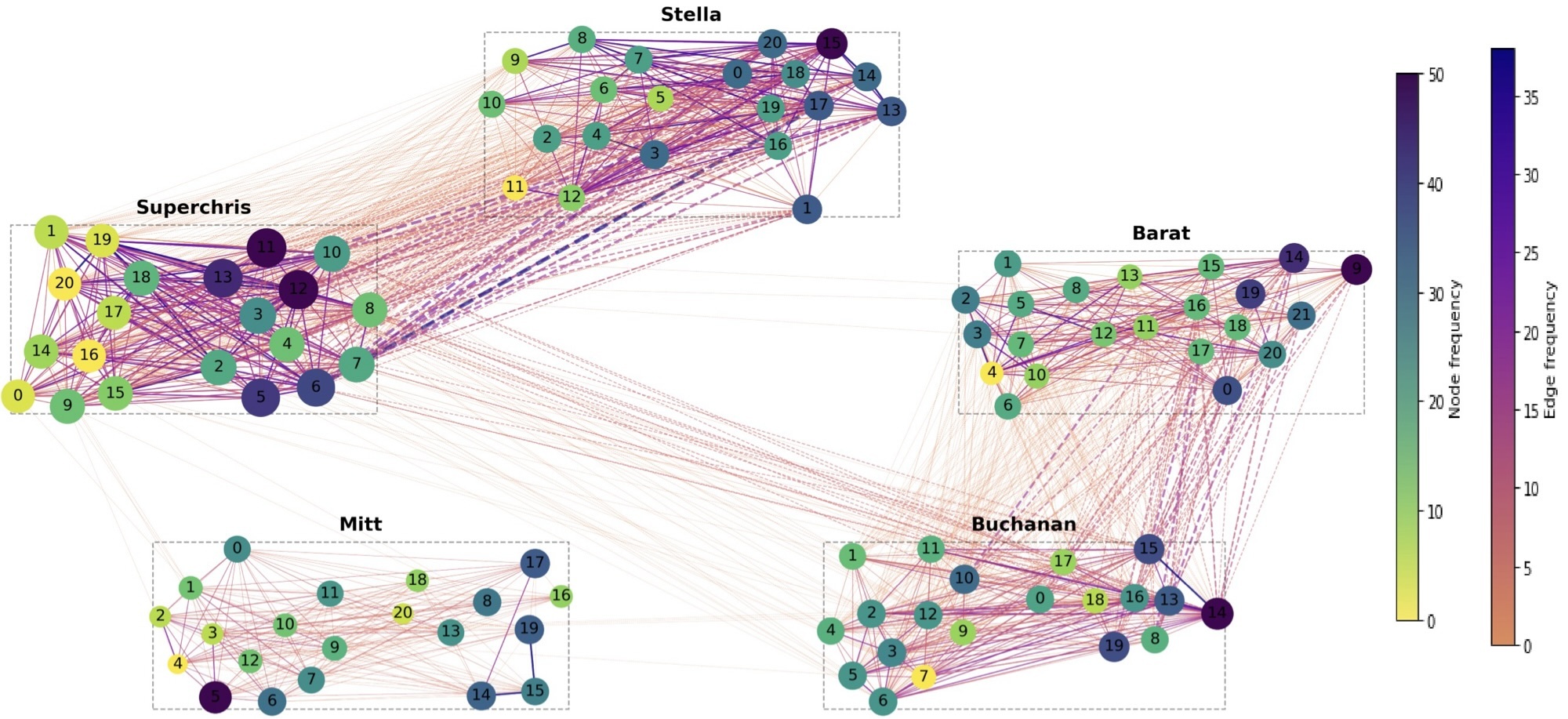}}

\caption{Cross-animal supernode and edge frequency map from MGMT. Each dashed box corresponds to one rat; node size and color indicate supernode selection frequency, and line color reflects edge occurrence frequency. High-frequency supernodes and edges cluster in distal CA1 (right side), with cross-rat superedges primarily linking distal regions across animals. Mitt exhibits weaker connectivity.}
\label{rat_illustration}
\vspace{-18pt}
\end{center}
\end{figure}
\textbf{Results of interpretation component.} From a neuroscience perspective, first, we found that informative electrodes clustered on the right side of the electrode array (Fig.~\ref{rat_illustration}b). Specifically, highest-frequency supernodes and strongest within-subject connections were consistently concentrated on the right side, and pattern was consistent across subjects. This specific clustering makes sense given that the two electrode arrays targeted different segments of CA1 region: electrodes on the right targeted the distal segment, electrodes on the left the proximal region. The distal segment, where most informative electrodes are located, is more strongly associated with non-spatial inputs (e.g., odors in our case) and the proximal segment with visuospatial inputs. Such clustering of informative electrodes in distal CA1 is also consistent with previous work focusing on a different type of non-spatial trial classification (InSeq vs OutSeq \citep{zhou2024}). Second, there were interesting variations in the pattern of informative edges across subjects. Although they showed a similar pattern of informative nodes, some subjects showed weaker relationships in edges. For example, subject Mitt showed fewer strong within-subject edges and lower-frequency superedges. In summary, the interpretation module highlights subject-level connectivity differences as the key LFP factors driving performance.

\begin{figure}[b]
  \vspace{-12pt}
  \centering
  \includegraphics[width=0.49\textwidth]{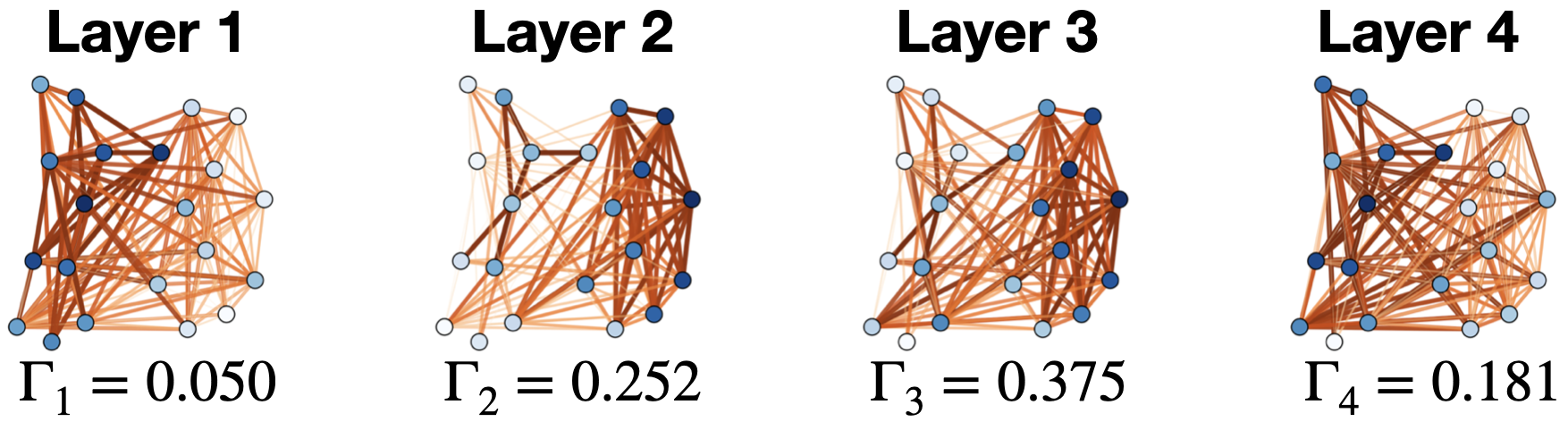}
  \caption{
  {{Layer-wise attention patterns for the LFP data (SuperChris).}
  Each panel shows the same subject-level LFP connectivity graph, along with the learned depth-confidence scores $\Gamma_{\ell}$ for each Transformer layer $\ell$, as well as the corresponding edge-level attention scores and node-wise summed attention weights (with warmer colors indicating higher attention or summed weights).}}
  \label{fig:superchris_depth_layer_vis}
  \vspace{-18pt}
\end{figure}
{\textbf{Depth-aware layers and CA1 circuitry.} The depth-aware component provides a complementary view of these patterns. For each rat, we compute layer-wise depth-confidence scores $\gamma_{\ell}$ and visualize the corresponding edge attention and node-summed attention weights. 
Fig.~\ref{fig:superchris_depth_layer_vis} shows the layers for one subject. Layers with the largest $\Gamma_{\ell}$ values focus attention on edges linking distal CA1 electrodes, and nodes in this region receive the highest summed attention. In contrast, low-confidence layers distribute attention more diffusely. Thus, the model up-weights layers whose connectivity patterns highlight the distal CA1 subnetwork identified as behaviorally informative in Fig.~\ref{rat_illustration}, indicating that depth-aware aggregation selectively amplifies meaningful hippocampal circuitry rather than simply averaging multi-layer embeddings.}

\subsubsection{Alzheimer's Disease Detection}
\label{app:alz}
As an example of broader biomedical applications, we used MGMT for Alzheimer's disease (AD) detection using the data obtained from the National Alzheimer's Coordinating Center (NACC), which standardizes data collected across 46 Alzheimer's Disease Research Centers (ADRCs) in the United States \citep{beekly2004, weintraub2009}. The cohort comprises 1{,}237 subjects with both clinical assessments from the Uniform Data Set (UDS) and structural MRI available. Our goal is to separate subjects with mild cognitive impairment (MCI) or dementia due to Alzheimer's disease from healthy controls (HC).

Following our terminology, this setting is \emph{multi-modal} since each subject is measured via distinct data sources (e.g., MRI vs.\ clinical assessments) that inhabit different feature spaces and sensing processes. As shown in Fig.~\ref{accuracy_results}, the MGMT model consistently outperformed both single-source and baseline fusion models. Moreover, ablations result (Table~\ref{tab:ablation-results}) show that intra-graph structure and the meta-graph are critical: removing intra-graph edges collapses performance (62.4\% vs.\ 83.1\%), removing the meta-graph lowers accuracy to 70.1\%, while dropping inter-graph edges (76.5\%), supernode selection (78.2\%), or adaptive depth (81.2\%) yields progressively smaller but consistent declines.

\section{Conclusion}

We proposed MGMT, a multi-graph learning framework that integrates graph-specific GT encoders with a meta-graph constructed over learned supernodes and superedges, supported by an adaptive depth-aware mechanism for aggregating hierarchical representations. Using both synthetic and real datasets, we showed that MGMT improves accuracy and interpretability over state-of-the-art fusion methods. {The framework could be further extended to support node classification and link prediction, incorporate causal masking and counterfactual attribution for genuinely causal importance estimates (see Appendix~\ref{app:causal-extensions}), and improve computational efficiency (see Appendix~\ref{app:topk}).}

\bibliography{reference}

\begin{thebibliography}{66}
\providecommand{\natexlab}[1]{#1}
\providecommand{\url}[1]{\texttt{#1}}
\expandafter\ifx\csname urlstyle\endcsname\relax
  \providecommand{\doi}[1]{doi: #1}\else
  \providecommand{\doi}{doi: \begingroup \urlstyle{rm}\Url}\fi

\bibitem[Abu-El-Haija et~al.(2019)Abu-El-Haija, Perozzi, Kapoor, Harutyunyan, Alipourfard, Lerman, Steeg, and Galstyan]{Abu-2019-mixhop}
Abu-El-Haija, S., Perozzi, B., Kapoor, A., Harutyunyan, H., Alipourfard, N., Lerman, K., Steeg, G.~V., and Galstyan, A.
\newblock Mixhop: Higher-order graph convolutional architectures via sparsified neighborhood mixing.
\newblock In \emph{The Thirty-sixth International Conference on Machine Learning (ICML)}, 2019.
\newblock URL \url{http://proceedings.mlr.press/v97/abu-el-haija19a/abu-el-haija19a.pdf}.

\bibitem[Allen et~al.(2016)Allen, Salz, McKenzie, and Fortin]{allen2016}
Allen, T.~A., Salz, D.~M., McKenzie, S., and Fortin, N.~J.
\newblock Nonspatial sequence coding in ca1 neurons.
\newblock \emph{Journal of Neuroscience}, 36\penalty0 (5):\penalty0 1547--1563, 2016.

\bibitem[Baltru{\v{s}}aitis et~al.(2018)Baltru{\v{s}}aitis, Ahuja, and Morency]{baltruvsaitis2018}
Baltru{\v{s}}aitis, T., Ahuja, C., and Morency, L.-P.
\newblock Multimodal machine learning: A survey and taxonomy.
\newblock \emph{IEEE transactions on pattern analysis and machine intelligence}, 41\penalty0 (2):\penalty0 423--443, 2018.

\bibitem[Beekly et~al.(2004)Beekly, Ramos, van Belle, Deitrich, Clark, Jacka, and Kukull]{beekly2004}
Beekly, D.~L., Ramos, E.~M., van Belle, G., Deitrich, W., Clark, A.~D., Jacka, M.~E., and Kukull, W.~A.
\newblock The {National Alzheimer's Coordinating Center (NACC) Database: an Alzheimer disease database}.
\newblock \emph{Alzheimer Disease and Associated Disorders}, 18\penalty0 (4):\penalty0 270--277, December 2004.
\newblock ISSN 0893-0341.

\bibitem[Chen et~al.(2020)Chen, Wu, and Zaki]{chen2020iterative}
Chen, Y., Wu, L., and Zaki, M.
\newblock Iterative deep graph learning for graph neural networks: Better and robust node embeddings.
\newblock \emph{Advances in neural information processing systems}, 33:\penalty0 19314--19326, 2020.

\bibitem[Cui et~al.(2024)Cui, Liu, Liang, and Fu]{cui2024mvgt}
Cui, Y., Liu, X., Liang, J., and Fu, Y.
\newblock Mvgt: A multi-view graph transformer based on spatial relations for eeg emotion recognition.
\newblock \emph{arXiv preprint arXiv:2407.03131}, 2024.

\bibitem[Dwivedi \& Bresson(2020)Dwivedi and Bresson]{dwivedi2020}
Dwivedi, V.~P. and Bresson, X.
\newblock A generalization of transformer networks to graphs.
\newblock \emph{arXiv preprint arXiv:2012.09699}, 2020.

\bibitem[D’Souza et~al.(2023)D’Souza, Wang, Giovannini, Foncubierta-Rodriguez, Beck, Boyko, and Syeda-Mahmood]{maxcorrmgnn}
D’Souza, N.~S., Wang, H., Giovannini, A., Foncubierta-Rodriguez, A., Beck, K.~L., Boyko, O., and Syeda-Mahmood, T.
\newblock Maxcorrmgnn: A multi-graph neural network framework for generalized multimodal fusion of medical data for outcome prediction.
\newblock In \emph{Workshop on Machine Learning for Multimodal Healthcare Data}, pp.\  141--154. Springer, 2023.

\bibitem[Fan et~al.(2019)Fan, Ma, Li, He, Zhao, Tang, and Yin]{fan2019graph}
Fan, W., Ma, Y., Li, Q., He, Y., Zhao, E., Tang, J., and Yin, D.
\newblock Graph neural networks for social recommendation.
\newblock In \emph{The world wide web conference}, pp.\  417--426, 2019.

\bibitem[Fu et~al.()Fu, Lu, and Wang]{mglam}
Fu, D., Lu, T., and Wang, J.
\newblock Multi-graph learning with adaptive graph-bag mapping.
\newblock \emph{Available at SSRN 5100893}.

\bibitem[Guo et~al.(2025)Guo, Wu, Xiao, Aggarwal, Liu, and Wang]{guo2025counterfactual}
Guo, Z., Wu, Z., Xiao, T., Aggarwal, C., Liu, H., and Wang, S.
\newblock Counterfactual learning on graphs: A survey.
\newblock \emph{Machine Intelligence Research}, 22\penalty0 (1):\penalty0 17--59, 2025.

\bibitem[Hayat et~al.(2022)Hayat, Geras, and Shamout]{hayat2022medfuse}
Hayat, N., Geras, K.~J., and Shamout, F.~E.
\newblock Medfuse: Multi-modal fusion with clinical time-series data and chest x-ray images.
\newblock In \emph{Machine Learning for Healthcare Conference}, pp.\  479--503. PMLR, 2022.

\bibitem[He et~al.(2025)He, Sui, He, Liu, Sun, and Hooi]{he2025unigraph2}
He, Y., Sui, Y., He, X., Liu, Y., Sun, Y., and Hooi, B.
\newblock Unigraph2: Learning a unified embedding space to bind multimodal graphs.
\newblock In \emph{Proceedings of the ACM on Web Conference 2025}, pp.\  1759--1770, 2025.

\bibitem[Hou et~al.(2024)Hou, Li, Cen, Tang, and Dong]{hou2024graphalign}
Hou, Z., Li, H., Cen, Y., Tang, J., and Dong, Y.
\newblock Graphalign: Pretraining one graph neural network on multiple graphs via feature alignment.
\newblock \emph{arXiv preprint arXiv:2406.02953}, 2024.

\bibitem[Hussain et~al.(2022)Hussain, Zaki, and Subramanian]{hussain2022}
Hussain, M.~S., Zaki, M.~J., and Subramanian, D.
\newblock Global self-attention as a replacement for graph convolution.
\newblock In \emph{Proceedings of the 28th ACM SIGKDD Conference on Knowledge Discovery and Data Mining}, pp.\  655--665, 2022.

\bibitem[Jegelka(2022)]{stefanie2023gnn-theory}
Jegelka, S.
\newblock Theory of graph neural networks: Representation and learning, 2022.
\newblock URL \url{https://arxiv.org/abs/2204.07697}.

\bibitem[Jiang et~al.(2019)Jiang, Zhang, Lin, Tang, and Luo]{jiang2019semi}
Jiang, B., Zhang, Z., Lin, D., Tang, J., and Luo, B.
\newblock Semi-supervised learning with graph learning-convolutional networks.
\newblock In \emph{Proceedings of the IEEE/CVF conference on computer vision and pattern recognition}, pp.\  11313--11320, 2019.

\bibitem[Kim et~al.(2022)Kim, Nguyen, Min, Cho, Lee, Lee, and Hong]{kim2022pure}
Kim, J., Nguyen, D., Min, S., Cho, S., Lee, M., Lee, H., and Hong, S.
\newblock Pure transformers are powerful graph learners.
\newblock \emph{Advances in Neural Information Processing Systems}, 35:\penalty0 14582--14595, 2022.

\bibitem[Kipf \& Welling(2016)Kipf and Welling]{kipf2016}
Kipf, T.~N. and Welling, M.
\newblock Semi-supervised classification with graph convolutional networks.
\newblock \emph{arXiv preprint arXiv:1609.02907}, 2016.

\bibitem[Kreuzer et~al.(2021)Kreuzer, Beaini, Hamilton, L{\'e}tourneau, and Tossou]{kreuzer2021}
Kreuzer, D., Beaini, D., Hamilton, W., L{\'e}tourneau, V., and Tossou, P.
\newblock Rethinking graph transformers with spectral attention.
\newblock \emph{Advances in Neural Information Processing Systems}, 34:\penalty0 21618--21629, 2021.

\bibitem[Lachi et~al.(2024)Lachi, Azabou, Arora, and Dyer]{lachi2024graphfm}
Lachi, D., Azabou, M., Arora, V., and Dyer, E.
\newblock Graphfm: A scalable framework for multi-graph pretraining.
\newblock \emph{arXiv preprint arXiv:2407.11907}, 2024.

\bibitem[Lau et~al.(2019)Lau, Marakkalage, Zhou, Hassan, Yuen, Zhang, and Tan]{lau2019survey}
Lau, B. P.~L., Marakkalage, S.~H., Zhou, Y., Hassan, N.~U., Yuen, C., Zhang, M., and Tan, U.-X.
\newblock A survey of data fusion in smart city applications.
\newblock \emph{Information Fusion}, 52:\penalty0 357--374, 2019.

\bibitem[LeCun et~al.(2015)LeCun, Bengio, and Hinton]{lecun2015deep}
LeCun, Y., Bengio, Y., and Hinton, G.
\newblock Deep learning.
\newblock \emph{nature}, 521\penalty0 (7553):\penalty0 436--444, 2015.

\bibitem[Li et~al.(2018)Li, Han, and Wu]{li2018deeper}
Li, Q., Han, Z., and Wu, X.-M.
\newblock Deeper insights into graph convolutional networks for semi-supervised learning.
\newblock In \emph{Proceedings of the AAAI conference on artificial intelligence}, volume~32, 2018.

\bibitem[Li et~al.(2022)Li, Chen, Liu, and Wu]{li2022}
Li, Z., Chen, D., Liu, Q., and Wu, S.
\newblock The devil is in the conflict: Disentangled information graph neural networks for fraud detection.
\newblock In \emph{2022 IEEE International Conference on Data Mining (ICDM)}, pp.\  1059--1064. IEEE, 2022.

\bibitem[Luo et~al.(2024)Luo, Li, Shi, and Wu]{luo2024enhancing}
Luo, Y., Li, H., Shi, L., and Wu, X.-M.
\newblock Enhancing graph transformers with hierarchical distance structural encoding.
\newblock \emph{Advances in Neural Information Processing Systems}, 37:\penalty0 57150--57182, 2024.

\bibitem[Ma et~al.(2023)Ma, Lin, Lim, Romero-Soriano, Dokania, Coates, Torr, and Lim]{ma2023graph}
Ma, L., Lin, C., Lim, D., Romero-Soriano, A., Dokania, P.~K., Coates, M., Torr, P., and Lim, S.-N.
\newblock Graph inductive biases in transformers without message passing.
\newblock In \emph{International Conference on Machine Learning}, pp.\  23321--23337. PMLR, 2023.

\bibitem[Ma et~al.(2022)Ma, Ren, Zhao, Testuggine, and Peng]{ma2022multimodal}
Ma, M., Ren, J., Zhao, L., Testuggine, D., and Peng, X.
\newblock Are multimodal transformers robust to missing modality?
\newblock In \emph{Proceedings of the IEEE/CVF conference on computer vision and pattern recognition}, pp.\  18177--18186, 2022.

\bibitem[Morris et~al.(2019)Morris, Ritzert, Fey, Hamilton, Lenssen, Rattan, and Grohe]{morris2019wl-neural}
Morris, C., Ritzert, M., Fey, M., Hamilton, W.~L., Lenssen, J.~E., Rattan, G., and Grohe, M.
\newblock Weisfeiler and leman go neural: higher-order graph neural networks.
\newblock In \emph{Proceedings of the Thirty-Third AAAI Conference on Artificial Intelligence and Thirty-First Innovative Applications of Artificial Intelligence Conference and Ninth AAAI Symposium on Educational Advances in Artificial Intelligence}, AAAI'19/IAAI'19/EAAI'19. AAAI Press, 2019.
\newblock ISBN 978-1-57735-809-1.
\newblock \doi{10.1609/aaai.v33i01.33014602}.
\newblock URL \url{https://doi.org/10.1609/aaai.v33i01.33014602}.

\bibitem[Nakhli et~al.(2023)Nakhli, Moghadam, Mi, Farahani, Baras, Gilks, and Bashashati]{nakhli2023sparse}
Nakhli, R., Moghadam, P.~A., Mi, H., Farahani, H., Baras, A., Gilks, B., and Bashashati, A.
\newblock Sparse multi-modal graph transformer with shared-context processing for representation learning of giga-pixel images.
\newblock In \emph{Proceedings of the IEEE/CVF Conference on Computer Vision and Pattern Recognition}, pp.\  11547--11557, 2023.

\bibitem[Ngiam et~al.(2011)Ngiam, Khosla, Kim, Nam, Lee, Ng, et~al.]{ngiam2011}
Ngiam, J., Khosla, A., Kim, M., Nam, J., Lee, H., Ng, A.~Y., et~al.
\newblock Multimodal deep learning.
\newblock In \emph{ICML}, volume~11, pp.\  689--696, 2011.

\bibitem[Nie et~al.(2016)Nie, Li, Li, et~al.]{nie2016parameter}
Nie, F., Li, J., Li, X., et~al.
\newblock Parameter-free auto-weighted multiple graph learning: A framework for multiview clustering and semi-supervised classification.
\newblock In \emph{IJCAI}, volume~9, pp.\  1881--1887, 2016.

\bibitem[Oono \& Suzuki(2019)Oono and Suzuki]{oono2019graph}
Oono, K. and Suzuki, T.
\newblock Graph neural networks exponentially lose expressive power for node classification.
\newblock \emph{arXiv preprint arXiv:1905.10947}, 2019.

\bibitem[Ramp{\'a}{\v{s}}ek et~al.(2022)Ramp{\'a}{\v{s}}ek, Galkin, Dwivedi, Luu, Wolf, and Beaini]{rampavsek2022}
Ramp{\'a}{\v{s}}ek, L., Galkin, M., Dwivedi, V.~P., Luu, A.~T., Wolf, G., and Beaini, D.
\newblock Recipe for a general, powerful, scalable graph transformer.
\newblock \emph{Advances in Neural Information Processing Systems}, 35:\penalty0 14501--14515, 2022.

\bibitem[Rossi et~al.(2022)Rossi, Kenlay, Gorinova, Chamberlain, Dong, and Bronstein]{rossi2022unreasonable}
Rossi, E., Kenlay, H., Gorinova, M.~I., Chamberlain, B.~P., Dong, X., and Bronstein, M.~M.
\newblock On the unreasonable effectiveness of feature propagation in learning on graphs with missing node features.
\newblock In \emph{Learning on graphs conference}, pp.\  11--1. PMLR, 2022.

\bibitem[Scarselli et~al.(2008)Scarselli, Gori, Tsoi, Hagenbuchner, and Monfardini]{scarselli2008graph}
Scarselli, F., Gori, M., Tsoi, A.~C., Hagenbuchner, M., and Monfardini, G.
\newblock The graph neural network model.
\newblock \emph{IEEE transactions on neural networks}, 20\penalty0 (1):\penalty0 61--80, 2008.

\bibitem[Shahbaba et~al.(2022)Shahbaba, Li, Agostinelli, Saraf, Cooper, Haghverdian, Elias, Baldi, and Fortin]{shahbaba2022}
Shahbaba, B., Li, L., Agostinelli, F., Saraf, M., Cooper, K.~W., Haghverdian, D., Elias, G.~A., Baldi, P., and Fortin, N.~J.
\newblock Hippocampal ensembles represent sequential relationships among an extended sequence of nonspatial events.
\newblock \emph{Nature communications}, 13\penalty0 (1):\penalty0 787, 2022.

\bibitem[{Shalev-Shwartz} \& {Ben-David}(2014){Shalev-Shwartz} and {Ben-David}]{Shai-2014-understandingml}
{Shalev-Shwartz}, S. and {Ben-David}, S.
\newblock \emph{Understanding Machine Learning: From Theory to Algorithms}.
\newblock Cambridge University Press, USA, 2014.
\newblock ISBN 1107057132.

\bibitem[Shi et~al.(2020)Shi, Huang, Feng, Zhong, Wang, and Sun]{shi2020}
Shi, Y., Huang, Z., Feng, S., Zhong, H., Wang, W., and Sun, Y.
\newblock Masked label prediction: Unified message passing model for semi-supervised classification.
\newblock \emph{arXiv preprint arXiv:2009.03509}, 2020.

\bibitem[Shirzad et~al.(2023)Shirzad, Velingker, Venkatachalam, Sutherland, and Sinop]{shirzad2023exphormer}
Shirzad, H., Velingker, A., Venkatachalam, B., Sutherland, D.~J., and Sinop, A.~K.
\newblock Exphormer: Sparse transformers for graphs.
\newblock In \emph{International Conference on Machine Learning}, pp.\  31613--31632. PMLR, 2023.

\bibitem[Shuman et~al.(2013)Shuman, Narang, Frossard, Ortega, and Vandergheynst]{shuman2013emerging}
Shuman, D.~I., Narang, S.~K., Frossard, P., Ortega, A., and Vandergheynst, P.
\newblock The emerging field of signal processing on graphs: Extending high-dimensional data analysis to networks and other irregular domains.
\newblock \emph{IEEE signal processing magazine}, 30\penalty0 (3):\penalty0 83--98, 2013.

\bibitem[Sui et~al.(2022)Sui, Wang, Wu, Lin, He, and Chua]{sui2022causal}
Sui, Y., Wang, X., Wu, J., Lin, M., He, X., and Chua, T.-S.
\newblock Causal attention for interpretable and generalizable graph classification.
\newblock In \emph{Proceedings of the 28th ACM SIGKDD conference on knowledge discovery and data mining}, pp.\  1696--1705, 2022.

\bibitem[Swamy et~al.(2023)Swamy, Satayeva, Frej, Bossy, Vogels, Jaggi, K{\"a}ser, and Hartley]{swamy2023multimodn}
Swamy, V., Satayeva, M., Frej, J., Bossy, T., Vogels, T., Jaggi, M., K{\"a}ser, T., and Hartley, M.-A.
\newblock Multimodn—multimodal, multi-task, interpretable modular networks.
\newblock \emph{Advances in neural information processing systems}, 36:\penalty0 28115--28138, 2023.

\bibitem[Vaswani(2017)]{vaswani2017}
Vaswani, A.
\newblock Attention is all you need.
\newblock \emph{Advances in Neural Information Processing Systems}, 2017.

\bibitem[Velickovic et~al.(2017)Velickovic, Cucurull, Casanova, Romero, Lio, Bengio, et~al.]{velickovic2017}
Velickovic, P., Cucurull, G., Casanova, A., Romero, A., Lio, P., Bengio, Y., et~al.
\newblock Graph attention networks.
\newblock \emph{stat}, 1050\penalty0 (20):\penalty0 10--48550, 2017.

\bibitem[Wang et~al.(2025)Wang, Ding, and Xie]{wang2025samgog}
Wang, S., Ding, Z., and Xie, X.
\newblock Samgog: A sampling-based graph-of-graphs framework for imbalanced graph classification.
\newblock \emph{arXiv preprint arXiv:2507.13741}, 2025.

\bibitem[Weintraub et~al.(2009)Weintraub, Salmon, Mercaldo, Ferris, Graff-Radford, Chui, Cummings, Charles, Decarli, Foster, Galasko, Peskind, Dietrich, Beekly, Kukull, John, and Morris]{weintraub2009}
Weintraub, S., Salmon, D.~P., Mercaldo, N., Ferris, S., Graff-Radford, N.~R., Chui, H., Cummings, J.~L., Charles, Decarli, Foster, N.~L., Galasko, D.~R., Peskind, E.~R., Dietrich, W., Beekly, D.~L., Kukull, W.~A., John, C., and Morris.
\newblock The {Alzheimer's Disease Centers' Uniform Data Set ({UDS}): The Neuropsychological Test Battery}.
\newblock \emph{Alzheimer Disease and Associated Disorders}, 23\penalty0 (2):\penalty0 91--101, 2009.

\bibitem[Wieder et~al.(2020)Wieder, Kohlbacher, Kuenemann, Garon, Ducrot, Seidel, and Langer]{wieder2020compact}
Wieder, O., Kohlbacher, S., Kuenemann, M., Garon, A., Ducrot, P., Seidel, T., and Langer, T.
\newblock A compact review of molecular property prediction with graph neural networks.
\newblock \emph{Drug Discovery Today: Technologies}, 37:\penalty0 1--12, 2020.

\bibitem[Wu et~al.(2019)Wu, Souza, Zhang, Fifty, Yu, and Weinberger]{wu2019simplifying}
Wu, F., Souza, A., Zhang, T., Fifty, C., Yu, T., and Weinberger, K.
\newblock Simplifying graph convolutional networks.
\newblock In \emph{International conference on machine learning}, pp.\  6861--6871. Pmlr, 2019.

\bibitem[Wu et~al.(2021)Wu, Jain, Wright, Mirhoseini, Gonzalez, and Stoica]{wu2021}
Wu, Z., Jain, P., Wright, M., Mirhoseini, A., Gonzalez, J.~E., and Stoica, I.
\newblock Representing long-range context for graph neural networks with global attention.
\newblock \emph{Advances in Neural Information Processing Systems}, 34:\penalty0 13266--13279, 2021.

\bibitem[Xing et~al.(2024)Xing, Dou, Chen, Zhou, Xie, and Peng]{xing2024}
Xing, T., Dou, Y., Chen, X., Zhou, J., Xie, X., and Peng, S.
\newblock An adaptive multi-graph neural network with multimodal feature fusion learning for mdd detection.
\newblock \emph{Scientific Reports}, 14\penalty0 (1):\penalty0 28400, 2024.

\bibitem[Xu et~al.(2024)Xu, Zhu, Li, Zheng, Zhao, He, and Zhao]{xu2024flexcare}
Xu, M., Zhu, Z., Li, Y., Zheng, S., Zhao, Y., He, K., and Zhao, Y.
\newblock Flexcare: Leveraging cross-task synergy for flexible multimodal healthcare prediction.
\newblock In \emph{Proceedings of the 30th ACM SIGKDD Conference on Knowledge Discovery and Data Mining}, pp.\  3610--3620, 2024.

\bibitem[Xu et~al.(2022)Xu, Wu, Liu, Wu, and Wang]{xu2022}
Xu, W., Wu, J., Liu, Q., Wu, S., and Wang, L.
\newblock Evidence-aware fake news detection with graph neural networks.
\newblock In \emph{Proceedings of the ACM web conference 2022}, pp.\  2501--2510, 2022.

\bibitem[Ying et~al.(2021)Ying, Cai, Luo, Zheng, Ke, He, Shen, and Liu]{ying2021}
Ying, C., Cai, T., Luo, S., Zheng, S., Ke, G., He, D., Shen, Y., and Liu, T.-Y.
\newblock Do transformers really perform badly for graph representation?
\newblock \emph{Advances in neural information processing systems}, 34:\penalty0 28877--28888, 2021.

\bibitem[Ying et~al.(2018)Ying, You, Morris, Ren, Hamilton, and Leskovec]{ying2018hierarchical}
Ying, Z., You, J., Morris, C., Ren, X., Hamilton, W., and Leskovec, J.
\newblock Hierarchical graph representation learning with differentiable pooling.
\newblock \emph{Advances in neural information processing systems}, 31, 2018.

\bibitem[Yu et~al.(2023)Yu, Liu, Fang, and Zhang]{yu2023}
Yu, X., Liu, Z., Fang, Y., and Zhang, X.
\newblock Learning to count isomorphisms with graph neural networks.
\newblock In \emph{Proceedings of the AAAI Conference on Artificial Intelligence}, volume~37, pp.\  4845--4853, 2023.

\bibitem[Zhang et~al.(2019)Zhang, Song, Huang, Swami, and Chawla]{zhang2019heterogeneous}
Zhang, C., Song, D., Huang, C., Swami, A., and Chawla, N.~V.
\newblock Heterogeneous graph neural network.
\newblock In \emph{Proceedings of the 25th ACM SIGKDD international conference on knowledge discovery \& data mining}, pp.\  793--803, 2019.

\bibitem[Zhang et~al.(2022)Zhang, Gao, Pei, and Huang]{zhang2022}
Zhang, Y., Gao, H., Pei, J., and Huang, H.
\newblock Robust self-supervised structural graph neural network for social network prediction.
\newblock In \emph{Proceedings of the ACM Web Conference 2022}, pp.\  1352--1361, 2022.

\bibitem[Zhang et~al.(2023)Zhang, He, Chan, Teng, and Rajapakse]{zhang-2023-latefusion-az}
Zhang, Y., He, X., Chan, Y.~H., Teng, Q., and Rajapakse, J.~C.
\newblock Multi-modal graph neural network for early diagnosis of alzheimer's disease from smri and pet scans.
\newblock \emph{Computers in Biology and Medicine}, 164:\penalty0 107328, 2023.
\newblock ISSN 0010-4825.
\newblock \doi{https://doi.org/10.1016/j.compbiomed.2023.107328}.
\newblock URL \url{https://www.sciencedirect.com/science/article/pii/S001048252300793X}.

\bibitem[Zhao et~al.(2019)Zhao, Lin, Zhang, Ren, and Sun]{zhao2019sparse}
Zhao, G., Lin, J., Zhang, Z., Ren, X., and Sun, X.
\newblock Sparse transformer: Concentrated attention through explicit selection.
\newblock 2019.

\bibitem[Zheng et~al.(2022)Zheng, Zhu, Liu, Guo, Liu, Yang, and Zhao]{zheng2022multi}
Zheng, S., Zhu, Z., Liu, Z., Guo, Z., Liu, Y., Yang, Y., and Zhao, Y.
\newblock Multi-modal graph learning for disease prediction.
\newblock \emph{IEEE Transactions on Medical Imaging}, 41\penalty0 (9):\penalty0 2207--2216, 2022.

\bibitem[Zhou et~al.(2025)Zhou, Wu, Sterkens, Vandewalle, Zhang, and Peeters]{zhou2025multi}
Zhou, C., Wu, Y., Sterkens, W., Vandewalle, P., Zhang, J., and Peeters, J.~R.
\newblock Multi-view graph transformer for waste of electric and electronic equipment classification and retrieval.
\newblock \emph{Resources, Conservation and Recycling}, 215:\penalty0 108112, 2025.

\bibitem[Zhou et~al.(2003)Zhou, Bousquet, Lal, Weston, and Sch{\"o}lkopf]{zhou2003learning}
Zhou, D., Bousquet, O., Lal, T., Weston, J., and Sch{\"o}lkopf, B.
\newblock Learning with local and global consistency.
\newblock \emph{Advances in neural information processing systems}, 16, 2003.

\bibitem[Zhou et~al.(2024)Zhou, Qu, Cooper, Fortin, and Shahbaba]{zhou2024}
Zhou, W., Qu, A., Cooper, K.~W., Fortin, N., and Shahbaba, B.
\newblock A model-agnostic graph neural network for integrating local and global information.
\newblock \emph{Journal of the American Statistical Association}, pp.\  1--14, 2024.

\bibitem[Zhu et~al.(2020)Zhu, Yan, Zhao, Heimann, Akoglu, and Koutra]{zhu2020beyond}
Zhu, J., Yan, Y., Zhao, L., Heimann, M., Akoglu, L., and Koutra, D.
\newblock Beyond homophily in graph neural networks: Current limitations and effective designs.
\newblock \emph{Advances in neural information processing systems}, 33:\penalty0 7793--7804, 2020.

\bibitem[Zhu et~al.(2003)Zhu, Ghahramani, and Lafferty]{zhu2003semi}
Zhu, X., Ghahramani, Z., and Lafferty, J.~D.
\newblock Semi-supervised learning using gaussian fields and harmonic functions.
\newblock In \emph{Proceedings of the 20th International conference on Machine learning (ICML-03)}, pp.\  912--919, 2003.

\end{thebibliography}
\bibliographystyle{icml2026}

\newpage
\appendix
\onecolumn
\newcommand{\mset}[1]{\{\!\{#1\}\!\}}
\renewcommand{\thesection}{A\arabic{section}}
\renewcommand{\theequation}{A\arabic{equation}}
\renewcommand{\thetheorem}{A\arabic{theorem}}
\renewcommand{\thecorollary}{A\arabic{corollary}}
\renewcommand{\theproposition}{A\arabic{proposition}}
\renewcommand{\thelemma}{A\arabic{lemma}}
\renewcommand{\thetable}{A\arabic{table}}
\renewcommand{\thefigure}{A\arabic{figure}}
\renewcommand{\thealgorithm}{A\arabic{algorithm}}
\section{Graph Transformer with Localized Graph-Aware Attention}
\label{appendix:localized-attention}

The standard Transformer architecture employs a global self-attention mechanism in which every token attends to all others. This is computationally inefficient and often inappropriate in the context of graph-structured data, where meaningful interactions are localized to a node’s immediate neighborhood. To bridge this gap, we adopt the localized graph-aware attention formulation proposed by \cite{shi2020}, which restricts attention to a node’s 1-hop neighbors. 

To preserve self-information, we extend the neighborhood to include the node itself. Specifically, we define \( \bar{\mathcal{N}}(u) = \mathcal{N}(u) \cup \{u\} \), ensuring each node can incorporate its own features during attention-based message passing.

Let \( \bm{H}^{(\ell-1)} = \{ \bm{H}_1^{(\ell-1)}, \dots, \bm{H}_N^{(\ell-1)} \} \) denote the set of node features from the previous layer. Each node \( u \) aggregates information from its extended neighborhood \( v \in \bar{\mathcal{N}}(u) \) using the following multi-head self-attention mechanism.

For each attention head \( m = 1, \dots, M \) and layer \( \ell = 1, \dots, L \):

\textbf{1. Linear Projections (queries, keys, values):}
\begin{align}
    \bm{Q}_{u}^{(\ell,m)} &= \bm{W}_{Q}^{(\ell,m)} \bm{h}_u^{(\ell-1)} + \bm{b}_{Q}^{(\ell,m)}, \\
    \bm{K}_{v}^{(\ell,m)} &= \bm{W}_{K}^{(\ell,m)} \bm{h}_v^{(\ell-1)} + \bm{b}_{K}^{(\ell,m)}, \\
    \bm{V}_{v}^{(\ell,m)} &= \bm{W}_{V}^{(\ell,m)} \bm{h}_v^{(\ell-1)} + \bm{b}_{V}^{(\ell,m)}.
\end{align}

The learnable matrices \( \bm{W}_Q^{(\ell,m)} \), \( \bm{W}_K^{(\ell,m)} \), and \( \bm{W}_V^{(\ell,m)} \) are referred to as the \textit{Query}, \textit{Key}, and \textit{Value} projection matrices, respectively. These matrices project each node's feature vector into three distinct spaces:
\begin{itemize}
    \item The \textbf{Query} vector \( \bm{Q}_{u}^{(\ell,m)} \) represents the type of information that node \( u \) seeks from its neighbors.
    \item The \textbf{Key} vector \( \bm{K}_{v}^{(\ell,m)} \) encodes what information neighbor node \( v \) can provide.
    \item The \textbf{Value} vector \( \bm{V}_{v}^{(\ell,m)} \) contains the actual content to be aggregated.
\end{itemize}
This separation allows the model to compute a relevance score between nodes before deciding how much information to share.

\textbf{2. Attention Score Calculation:}  
The attention coefficient from node \( u \) to neighbor \( v \in \bar{\mathcal{N}}(u) \) is computed as:
\begin{equation}
    \alpha_{uv}^{(\ell,m)} = \frac{
        \exp\left( \frac{ \bm{Q}_{u}^{(\ell,m)\top} \bm{K}_{v}^{(\ell,m)} }{ \sqrt{d_h} } \right)
    }{
        \sum_{r \in \bar{\mathcal{N}}(u)} \exp\left( \frac{ \bm{Q}_{u}^{(\ell,m)\top} \bm{K}_{r}^{(\ell,m)} }{ \sqrt{d_h} } \right)
    },
\end{equation}
where \( d_h \) is the dimensionality of each head.

\textbf{3. Neighborhood Aggregation:}
\begin{equation}
    \bm{Z}_{u}^{(\ell,m)} = \sum_{v \in \bar{\mathcal{N}}(u)} \alpha_{uv}^{(\ell,m)} \bm{V}_{v}^{(\ell,m)}.
\end{equation}

\textbf{4. Multi-Head Output and Update:}
The outputs from all heads are concatenated and linearly transformed:
\begin{equation}
    \hat{\bm{H}}_u^{(\ell)} = \bm{W}_O^{(\ell)} \left[ \bm{Z}_{u}^{(\ell,1)} \| \cdots \| \bm{Z}_{u}^{(\ell,M)} \right] + \bm{b}_O^{(\ell)},
\end{equation}
where \( \| \) denotes concatenation across heads, and \( \bm{W}_O^{(\ell)} \in \mathbb{R}^{d \times d} \), \( \bm{b}_O^{(\ell)} \in \mathbb{R}^{d} \) are learnable projections.

This formulation allows each node to dynamically attend to its extended local neighborhood, learning rich contextual representations while respecting the sparse structure of the input graph. The learned attention scores can also be used for interpretability and identifying important nodes and edges, as discussed in subsection \ref{sec:interpretation}.

\section{Depth-Aware Aggregation in MGMT}\label{app:magnet}
To enhance the robustness of graph-specific representation learning and mitigate sensitivity to the choice of Transformer depth, we introduce an adaptive depth-aware fusion strategy inspired by recent developments in graph learning \cite{zhou2024}. Rather than relying on a fixed-depth stack, we aggregate node embeddings across multiple Transformer layers, weighted by their contribution to graph-level prediction performance.

Let \( \bm{H}_{ik}^{(\ell)} \in \mathbb{R}^{N_{i} \times d} \) denote the node embeddings of graph \( i \) in samples (instances) \( k \) after the \( \ell \)-th Graph Transformer layer, for \( \ell = 1, \dots, L \), \( i = 1, \dots, n \) and \( k = 1, \dots, K \). Here, \( K \) is the total number of samples, and \( n \) is the number of graphs per sample. To evaluate the representational quality of each layer, we compute a graph-level representation by applying mean pooling over the node embeddings:
\begin{eqnarray}
    \bar{\bm{H}}_{ik}^{(\ell)} = \frac{1}{N_i} \bm{1}_{N_i}^\top \bm{H}_{ik}^{(\ell)} \in \mathbb{R}^{1 \times d}.
\end{eqnarray}

Each pooled graph embedding \( \bar{\bm{H}}_{ik}^{(\ell)} \) is passed through a lightweight classifier to obtain predictions, and its predictive quality is evaluated using the graph-level label. Let \( Y_k \in \{1, \dots, |\mathcal{Y}|\} \) be the true label for sample \( k \). The classification error for graph \( i \) at depth \( \ell \) is computed as:
\begin{eqnarray}
\epsilon^{(\ell)}_{i} = \frac{\sum_{k=1}^{K} \beta_{ik}^{(\ell)} \, \mathbb{1} \left\{ Y_k \neq \arg\max_{y} \text{softmax}\left(\bar{\bm{H}}_{ik}^{(\ell)}\right) \right\}}{\sum_{k=1}^{K} \beta_{ik}^{(\ell)}}
\end{eqnarray}
where \( \beta_{ik}^{(\ell)} \) is the weight assigned to graph \( i \) in sample \( k \) at depth \( \ell \).

The confidence score for the \( \ell \)-th layer of graph \( i \) is defined as:
\begin{eqnarray}\label{eq:fusion-weight}
\Gamma^{(\ell)}_{i} = \frac{1}{2} \log\left( \frac{1 - \epsilon^{(\ell)}_{i}}{\epsilon^{(\ell)}_{i}} \right).
\end{eqnarray}

To emphasize misclassified samples, sample weights are updated between depths using:
\begin{eqnarray}
\beta_{ik}^{(\ell+1)} \propto \beta_{ik}^{(\ell)} \exp\left( \mathbb{1} \left\{ Y_k \neq \arg\max_{Y} \text{softmax}\left(\bar{\bm{H}}_{ik}^{(\ell)}\right) \right\} \cdot \Gamma^{(\ell)}_{i} \right).
\end{eqnarray}

The confidence scores \( \Gamma^{(\ell)}_{i} \) are used to weight both the depth-wise fused node embeddings and the attention scores across Transformer layers, ensuring that layers contributing most to prediction are emphasized during supernode extraction and representation learning.

\section{Mathematical Proofs}\label{app:proofs}
\begin{proof}[Proof of \Cref{thm:mgmt-lhop}]
    For simplicity, we omit graph-specific subscripts throughout the proof (e.g. $\bm{X}$ instead of $\bm{X}_i$) as the arguments apply universally for all graphs. Consider the Graph Transformer (GT) structure with a single head $m=1$. For each layer $\ell=1, \dots, L$, let $\bm{W}_{Q}^{(\ell)}=\bm{W}_{K}^{(\ell)}=\bm{0}$, $\bm{W}_{V}^{(\ell)} = \bm{I}$, and $\bm{b}^{(\ell)}_{V}=\bm{0}$ in \eqref{eq:single-mod-gt}. Here $\bm{I}$ is the identity matrix and $\bm{0}$ denotes matrix/vector of all zeros. For the feedforward layer in \eqref{eq:single-mod-ff}, set weights as $\bm{I}$, bias as $\bm{0}$, and remove the residual connection and normalization layer. Then for each edge \( (u, v) \in \mathcal{E} \cup \{(u, u)\} \), the updating rules in \eqref{eq:single-mod-gt} and \eqref{eq:single-mod-ff} simplifies to
    \begin{align*}
        \bm{Q}_{u}^{(\ell)} &=  \bm{b}_{Q}^{(\ell)}, \\
    \bm{K}_{v}^{(\ell)} &= \bm{b}_{K}^{(\ell)}, \\
    \bm{V}_{v}^{(\ell)} &= \bm{H}_{v}^{(\ell-1)}, \\
    \alpha_{uv}^{(\ell)} &= \frac{
        \exp\left( \frac{\bm{Q}_{u}^{(\ell)\top} \bm{K}_{v}^{(\ell)}}{\sqrt{d}} \right)
    }{
        \sum_{v' \in \bar{\mathcal{N}}(u)} \exp\left( \frac{\bm{Q}_{u}^{(\ell)\top} \bm{K}_{v'}^{(\ell)}}{\sqrt{d}} \right)
    }, \\
    \bm{H}_{u}^{(\ell)} &= \sigma\paren*{\sum_{v \in \bar{\mathcal{N}}(u)} \alpha_{uv}^{(\ell)} \, \bm{V}_{v}^{(\ell)}}.
    \end{align*}
It is clear that the attention matrix $\bm{\alpha}^{(\ell)}$ reduces to $\mathcal{M}(\bm{A}) = \textnormal{softmax}\paren*{\bm{A}+\bm{I}}$. Recall that the initial embedding $\bm{H}^{(0)}=\bm{X}$, we can explicitly expand the recursive updating rule above, and write the embeddings for each layer $\ell$ in the following compact form:
\begin{align*}
    \bm{H}^{(\ell)} = \mathcal{U}^{\ell}(\bm{X};\mathcal{M}(\bm{A}), \sigma).
\end{align*}
Let $\Gamma^{(\ell)} = \eta_{\ell}$, for $\ell = 1, \dots, L$ in \eqref{eq:final-embed&att}, the graph-specific fused embeddings can be represented as
\begin{align*}
     \sum_{\ell=1}^{L} \eta_{\ell} \cdot \mathcal{U}^{\ell}(\bm{X};\mathcal{M}(\bm{A}), \sigma),
\end{align*}
which satisfies Definition~\ref{def:lhop-mixing} with identity mapping $f(\cdot)$.
\end{proof}
\begin{remark}
    While the depth-aware fusion step in \eqref{eq:final-embed&att} is highly flexible and can accommodate any set of weights $\cbrac{\Gamma_{\ell}}_{\ell=1}^{L}$, we employ the confidence score weights defined in \Cref{app:magnet} to adaptively aggregate the latent representations that yield the highest classification accuracy.
\end{remark}

\begin{proof}[Proof of Theorem \ref{thm:late-fusion}]
    Similar to the proof of \Cref{thm:approx-mgmt-single}, we will show $\mathcal{F}_{\textnormal{late}} \subseteq \mathcal{F}_{M}$ and the desired results follow directly from the definition of approximation error.

    Consider a class of pooling functions that concatenates the graph-specific pooled embeddings, formally, 
    \begin{align}\label{eq:func-class-late}
        \textnormal{ConcatPool}(\bm{H}^{(0)}_{M})= \Big\|_{i=1}^{n}\textnormal{Pool}_{\mathcal{S}_i} \paren{\bm{H}^{(0)}_{M}},
    \end{align}
    where $\|$ denotes the concatenation operation, $\textnormal{Pool}_{\mathcal{S}_i}(\cdot):\mathbb{R}^{\abs{\mathcal{S}_{M}}\times d} \mapsto \mathbb{R}^{h'}$, as defined in \eqref{eq:func-class-single-full}, is the global pooling function restricted to $\mathcal{S}_i$. Hence $\textnormal{ConcatPool}(\bm{H}^{(0)}_{M}):\mathbb{R}^{\abs{\mathcal{S}_{M}}\times d} \mapsto \mathbb{R}^{nh'}$ represents the concatenation of graph-specific embeddings.

    Further, let $D(\cbrac{\bm{W}^{(1)}_{\textnormal{MLP},i}}_{i=1}^{n})$ be the diagonal block matrix with diagonal elements $\cbrac{\bm{W}^{(1)}_{\textnormal{MLP},i}}_{i=1}^{n}$, then one can easily check that \eqref{eq:func-class-late} can be rewritten as 
    \begin{equation}\label{eq:func-class-late-rewrite}
    \begin{aligned}
        \mathcal{F}_{\textnormal{late}} = \Big\{f:\mathbb{R}^{\abs{\mathcal{S}_{M}}\times d} \mapsto \mathbb{R}^{\abs{\mathcal{Y}}} 
        &\Bigmid f = \bm{W}^{(2)}_{\textnormal{MLP}} \sigma \paren*{\bm{W}^{(1)}_{\textnormal{MLP}} \textnormal{Pool}\paren{\textnormal{GT}(\bm{H}^{(0)}_{M}) }},\\
        &\gamma > 1, \bm{W}_{V}=\bm{I}, \bm{b}_{V}=\bm{0},\\
        &\textnormal{Pool}(\cdot)=\textnormal{ConcatPool}(\cdot),\\
        &\bm{W}^{(1)}_{\textnormal{MLP}} = D(\cbrac{\bm{W}^{(1)}_{\textnormal{MLP},i}}_{i=1}^{n}),\\
        &\bm{W}^{(2)}_{\textnormal{MLP}} = w_1\bm{W}^{(2)}_{\textnormal{MLP},1} \| \cdots \| w_n\bm{W}^{(2)}_{\textnormal{MLP},n}
        \Big\},
    \end{aligned}
    \end{equation}
    where $\gamma, \bm{W}_{V}, \bm{b}_{V}$ are parameters of the Graph Transformer layer as defined in \eqref{eq:func-class-single-full-rewrite}.
    Finally, from \eqref{eq:func-class-meta} and \eqref{eq:func-class-late-rewrite}, it is clear that $\mathcal{F}_{\textnormal{late}} \subseteq \mathcal{F}_{M}$, which concludes the proof.
\end{proof}

\section{Additional Theoretical Results}\label{app:additional-theory}
\subsection{Additional Intra-graph Results}\label{app:additional-intra-analysis}
\begin{theorem}
    Let $\mathcal{M}(\bm{A}) = \textnormal{softmax}\paren*{\bm{A}+\bm{I}}$ as in \Cref{thm:mgmt-lhop}, the vanilla Graph Transformer is \textbf{not} capable of representing $L$-hop neighborhood mixing. 
\end{theorem}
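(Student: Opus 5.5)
The plan is to argue by counterexample. A vanilla Graph Transformer has no depth-aware aggregation and returns only the last layer, $\bm{H}^{(L)}$. I will exhibit one choice of weights $\eta_1,\dots,\eta_L$, together with a particular $\sigma$, $\bm{A}$ and $\bm{X}$, for which no parameter setting and no injective $f$ reproduce $\sum_{\ell}\eta_\ell\,\mathcal{U}^{\ell}(\bm{X};\mathcal{M}(\bm{A}),\sigma)$. Definition~\ref{def:lhop-mixing} quantifies over all adjacency matrices, activations and features, and this is what I will exploit. The statement "output $=f(\text{target})$ with $f$ injective" is equivalent to saying that the target is a function of the output. So it suffices to find two inputs on which the vanilla GT gives identical outputs for every parameter setting, while the target mixture gives different values. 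Injectivity of $f$ then yields a contradiction.

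\textbf{Step 1.} I will use the structural fact that a vanilla GT layer maps node features through one round of neighborhood attention, an FFN, a residual connection and LayerNorm. The attention is restricted to $\bar{\mathcal{N}}(u)$, and its weights depend only on the current features. Its last operation is LayerNorm (or, with residual and norm removed as in the proof of \Cref{thm:mgmt-lhop}, $\sigma$ applied to an aggregation).

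\textbf{Step 2.} The cleanest route is to choose $L=2$ and take a graph with no edges, $\bm{A}=\bm{0}$, so that $\mathcal{M}(\bm{A})$ acts as the identity on each node. The target then becomes nodewise: $\eta_1\sigma(\bm{x})+\eta_2\sigma(\sigma(\bm{x}))$. I will take $\eta_1=1$ and $\eta_2=-1$. With $\sigma=\mathrm{ReLU}$ the mixture is $\sigma(\bm{x})-\sigma(\sigma(\bm{x}))=0$ for every $\bm{x}$. That is a degenerate target, and an injective $f$ cannot send all of the model's outputs to $0$ unless the model is constant. This suggests a sharper version of the argument: pick $\sigma$ and $\eta$ so that the target collapses many distinct inputs to one value, while the vanilla GT, whatever its parameters, must distinguish them. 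That reverses the direction, and it fails whenever the GT is constant. The safer version is the original direction: find inputs that the GT cannot separate but the mixture can.

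\textbf{Step 3 (main obstacle).} For a pair of inputs that no vanilla GT separates, I will exploit the LayerNorm in the final layer. LayerNorm is invariant to positive rescaling and to shifts of its input by multiples of $\bm{1}$. With $\bm{A}=\bm{0}$, the final layer's output at node $u$ is $\mathrm{LayerNorm}(g(\bm{h}_u))$, where $g$ is affine-plus-FFN. Making this invariant under a transformation of $\bm{X}$ for all $g$ is not immediate. I would therefore first try $d=1$. There LayerNorm of a scalar is constant (it equals the bias $\beta$), so every vanilla GT output is independent of $\bm{X}$. Meanwhile, with $\sigma=\tanh$ and $\eta=(1,0,\dots,0)$, the target $\tanh(\bm{X})$ varies with $\bm{X}$, and injectivity of $f$ is violated.

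If the authors' vanilla GT drops LayerNorm, I would instead exploit the fact that a vanilla GT's output must itself equal a single composed map. I would choose $\eta$ so that the target is a non-monotone combination, such as $\tanh(x)-\tanh(\tanh(x))\cdot c$, that takes equal values at two distinct $x$ in a regime where a shallow GT forced to agree with $f$ on all inputs is too constrained. Even in this case I would try to reduce to a collapse argument like the $d=1$ case.
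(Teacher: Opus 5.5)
\textbf{Gap in the main route.} Your Step 3 gets its contradiction from LayerNorm collapsing a scalar to its bias at width $d=1$. The contradiction does not come from the missing depth aggregation, so it does not prove the theorem the paper means. The paper treats normalization as removable. The proof of Theorem~\ref{thm:mgmt-lhop} explicitly drops the residual connection and LayerNorm, and the vanilla GT analyzed here is $\bm{H}^{(L)}=[\prod_{\ell}\textnormal{softmax}((\bm{A}+\bm{I})\odot\bm{\alpha}^{(\ell)})]\,\bm{X}\prod_{\ell}\bm{W}_V^{(\ell)}$, whose output is not constant. Your argument therefore does not apply to it.

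If you instead insist that LayerNorm is mandatory, the argument proves too much. Each layer of MGMT's encoder also ends in LayerNorm, so at $d=1$ every $\bm{H}^{(\ell)}$ is constant, and so is $\sum_\ell\Gamma^{(\ell)}\bm{H}^{(\ell)}$. The same reasoning would then refute Theorem~\ref{thm:mgmt-lhop}. Your choice $\eta=(1,0,\dots,0)$ shows the problem: it involves no mixing, so you would be showing the GT cannot represent even a single $1$-hop aggregation. Fixing $d=1$ also only shows failure at one degenerate width, not for the GT at its given width. Your fallback for the LayerNorm-free case names no inputs and no mechanism, so it is not yet an argument.

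\textbf{Missing idea.} You need the collapse direction you discarded in Step 2, used \emph{before} the other direction rather than instead of it. The parameters and $f$ are fixed before $\bm{A}$ and $\bm{X}$ are chosen. So if the target vanishes on a family of inputs, the model's output must equal the single value $f(\bm{0})$ on that family. This constrains the model, not $f$, so your remark that $f$ cannot send the outputs to $0$ has the direction reversed. The resulting constraint on the shared weights then holds on every other graph.

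The paper takes $\sigma(x)=x$, so the vanilla GT computes $P\bm{X}\bm{W}^*$. Here $P$ is a product of row-stochastic attention matrices supported on $\bm{A}+\bm{I}$, and $\bm{W}^*=\prod_\ell\bm{W}_V^{(\ell)}$. It also takes $\eta_1=1$, $\eta_2=-1$.

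\begin{enumerate}
\item On $\bm{A}=\bm{0}$, attention restricted to self-loops gives $P=\bm{I}$, and the target $\bm{X}-\bm{X}$ vanishes. Hence $\bm{X}\bm{W}^*=f(\bm{0})$ for all $\bm{X}$.
\item On the single-edge graph, take $\bm{X}=\bm{A}^*$. This matrix is idempotent and has equal rows for nodes $1$ and $2$, so attention stays uniform and $P=\bm{A}^*$. The target vanishes again, giving $(\bm{I}-\bm{A}^*)\bm{W}^*=\bm{0}$.
\item Since the chosen pair of nodes was arbitrary, all rows of $\bm{W}^*$ coincide and $\textnormal{rank}(\bm{W}^*)\le 1$.
\item The degenerate output then cannot be an injective function of the varying target.
\end{enumerate}

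Setting $\bm{X}=\bm{0}$ on the empty graph gives $f(\bm{0})=\bm{0}$, hence $\bm{W}^*=\bm{0}$ directly. This makes the final step immediate.
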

\begin{proof}
    Following a similar strategy in Abu-El-Haija et al.\ \cite{Abu-2019-mixhop}, it suffices to show that the vanilla Graph Transformer (GT) fails to represent $2$-hop mixing, which in turn implies the inability to represent the general $L$-hop mixing. Consider the particular case, where $m=1$, $\sigma(x)=x$.
    As reviewed in \Cref{appendix:localized-attention}, the final graph embedding of a vanilla GT with depth $L$ can be represented as 
    \begin{align*}
        \bm{H}^{(L)} = \brac*{\prod_{\ell=1}^{L} \textnormal{softmax}\paren*{(\bm{A}+\bm{I}) \odot \bm{\alpha}^{(\ell)}}} \bm{X} \prod_{\ell=1}^{L}\bm{W}_{V}^{(\ell)},
    \end{align*}
    for attention matrices $\cbrac{\bm{\alpha}^{(\ell)}}_{\ell=1}^{L}$ and weights $\cbrac{\bm{W}_{V}^{(\ell)}}_{\ell=1}^{L}$. Here $\odot$ denote the Hadamard product. Let $\bm{W}^*=\prod_{\ell=1}^{L}\bm{W}_{V}^{(\ell)}$, and consider the case where $\eta_1=1$ and $\eta_2=-1$. If the vanilla GT is able to represent $2$-hop mixing, there exists an injective mapping $f$ and a configuration of the parameters such that 
    \begin{align}\label{eq:assump-contradiction}
        \brac*{\prod_{\ell=1}^{L} \textnormal{softmax}\paren*{(\bm{A}+\bm{I}) \odot \bm{\alpha}^{(\ell)}}} \bm{X}\bm{W}^* = f(\mathcal{M}(\bm{A})\bm{X}-\mathcal{M}^2(\bm{A})\bm{X})
    \end{align}
    holds for any adjacency matrices $\bm{A}$ and node features $\bm{X}$.
    
    Consider a fully disconnected graph with $\bm{A}=\bm{0}$ and $\bm{X}$, then $\mathcal{M}(\bm{A})=\textnormal{softmax}\paren*{\bm{I}}=\bm{I}$, and $\textnormal{softmax}\paren*{(\bm{A}+\bm{I}) \odot \bm{\alpha}^{(\ell)}}=\bm{I}$ for $\ell=1, \dots, L$, which implies $\bm{W}^*=f(\bm{0})$. On the other hand, consider a graph with a single edge between node $1$ and $2$, namely, $A_{12}=A_{21}=1$ and $0$ otherwise. Then
    \begin{align*}
        \mathcal{M}(\bm{A}) = \underbrace{\begin{bmatrix}
        0.5 & 0.5 & 0 & \cdots & 0 \\
        0.5 & 0.5 & 0 & \cdots & 0 \\
        0 & 0 & 1 & \cdots & 0 \\
        \vdots & \vdots & \vdots & \ddots & \vdots \\
        0 & 0 & 0 & \cdots & 1
        \end{bmatrix}}_{\coloneqq \bm{A}^*}
    \end{align*}
    Let $\bm{X}=\bm{A}^*$, then $f(\mathcal{M}(\bm{A})\bm{X}-\mathcal{M}^2(\bm{A})\bm{X})=f(\bm{0})$. Furthermore, it is easy to check that
    \begin{align*}
        \prod_{\ell=1}^{L} \textnormal{softmax}\paren*{(\bm{A}+\bm{I}) \odot \bm{\alpha}^{(\ell)}} = \bm{A}^*,
    \end{align*}
    since features of node $1$ and $2$ are identical. It follows that $\bm{A}^*\bm{W}^*=f(\bm{0})$. 
    
    Combining the two scenarios, we must have $(\bm{I}-\bm{A}^*)\bm{W}^*=\bm{0}$, which implies that $\bm{W}^*_1=\bm{W}^*_2$, where $\bm{W}^*_{i}$ is the $i$-th row of $\bm{W}^*$. Since the choice of node $1$ and $2$ was arbitrary, all rows of $\bm{W}^*$ should be identical, hence $\text{rank}(\bm{W^*})\leq 1$ and $\text{rank}(\brac{\prod_{\ell=1}^{L} \textnormal{softmax}\paren*{(\bm{A}+\bm{I}) \odot \bm{\alpha}^{(\ell)}}}\bm{X}\bm{W}^*) \leq 1$, which means the output of $f$ should be at most rank $1$ matrices by the equivalence assumption in \eqref{eq:assump-contradiction}. Hence, $f$ cannot be injective, which concludes the proof by contradiction.
\end{proof}

\subsection{Additional Inter-graph Results}\label{app:additional-inter-analysis}
Let $\bm{H}_{\mathcal{S}_i}=\cbrac{\bm{H}_{i,u}}_{u\in \mathcal{S}_i}$ be the embeddings for supernodes in $\mathcal{S}_i$. Single-graph classifiers that operates on $\bm{H}_{\mathcal{S}_i}$ can be expressed as
\begin{align}\label{eq:func-class-single}
    \mathcal{F}_{i} = \cbrac*{f:\mathbb{R}^{\abs{\mathcal{S}_{i}} \times d} \mapsto \mathbb{R}^{\abs{\mathcal{Y}}} \Bigmid f = \bm{W}^{(2)}_{\textnormal{MLP}} \sigma \paren*{\bm{W}^{(1)}_{\textnormal{MLP}} \textnormal{Pool}\paren{\bm{H}_{\mathcal{S}_i}}}}.
\end{align}

Assume latent representations of the meta graph follow $(\bm{H}^{(0)}_{M}, Y) \sim  \mathcal{P}_{M}$, and $(\bm{H}_{\mathcal{S}_i}, Y) \sim  \mathcal{P}_{i}$ where $\mathcal{P}_{i}$ is the marginal distribution of $\mathcal{P}_{M}$ restricted to  $\mathcal{S}_i$. The next result shows MGMT achieves smaller approximation error by leveraging information across all graphs.

\begin{proposition}\label{thm:approx-mgmt-single}
    Denote the approximation error of MGMT on the meta-graph as $\epsilon(\mathcal{F}_{M};\mathcal{P}_{M},\mathcal{L})$, and the approximation error of graph-specific classifiers on the sub-graph as $\epsilon(\mathcal{F}_{i};\mathcal{P}_{i},\mathcal{L})$, then
    \begin{align*}
        \epsilon(\mathcal{F}_{M};\mathcal{P}_{M},\mathcal{L}) \leq \epsilon(\mathcal{F}_{i};\mathcal{P}_{i},\mathcal{L}). 
    \end{align*}
\end{proposition}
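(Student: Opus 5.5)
The plan is to show that every single-graph classifier in $\mathcal{F}_i$ can be realized, as a function of the full meta-graph input $\bm{H}^{(0)}_M$, by some member of $\mathcal{F}_M$. Then the claim follows from the definition of approximation error as an infimum. Concretely, the goal is an embedding $\mathcal{F}_i \hookrightarrow \mathcal{F}_M$: for every $g\in\mathcal{F}_i$ there is $f\in\mathcal{F}_M$ with $f(\bm{H}^{(0)}_M)=g(\bm{H}_{\mathcal{S}_i})$ for all inputs. Given this, because $\mathcal{P}_i$ is the marginal of $\mathcal{P}_M$ on the $\mathcal{S}_i$ block, we get
\begin{align*}
R(f;\mathcal{P}_M,\mathcal{L}) = \mathbb{E}_{\mathcal{P}_M}\brac*{\mathcal{L}(g(\bm{H}_{\mathcal{S}_i}),Y)} = R(g;\mathcal{P}_i,\mathcal{L}).
\end{align*}
Taking the infimum over $g$ then gives $\epsilon(\mathcal{F}_M;\mathcal{P}_M,\mathcal{L})\le \epsilon(\mathcal{F}_i;\mathcal{P}_i,\mathcal{L})$.

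To build the embedding, I would first neutralize the Graph Transformer layer so that it acts as the identity on $\bm{H}^{(0)}_M$. This mirrors the parameter choices in the proof of \Cref{thm:mgmt-lhop}: take $\bm{W}_V=\bm{I}$ and $\bm{b}_V=\bm{0}$, and use attention that puts all of its weight on the self-loop. Because the meta-graph contains cross-graph superedges, uniform attention would mix embeddings from different graphs. I would prevent this by using the cross-graph threshold (choosing $\gamma>1$, so no inter-graph edges appear since cosine similarity is at most $1$). For the intra-graph edges, I would either pick query/key biases that make the self-loop dominate, or, more cleanly, also remove the residual/FFN so the layer reduces to the identity map. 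This is the parametrization the late-fusion proof refers to in \eqref{eq:func-class-single-full-rewrite}.

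Second, I would choose the pooling in $\mathcal{F}_M$ to be the global pooling restricted to $\mathcal{S}_i$, namely $\textnormal{Pool}_{\mathcal{S}_i}(\bm{H}^{(0)}_M)=\textnormal{Pool}(\bm{H}_{\mathcal{S}_i})$. I would define this as a member of the admissible pooling class, as in \eqref{eq:func-class-single-full}. Equivalently, one can use ConcatPool and zero out the columns of $\bm{W}^{(1)}_{\textnormal{MLP}}$ that correspond to the other graphs. Finally, I would copy the MLP weights of $g$ directly. The composition then equals $g(\bm{H}_{\mathcal{S}_i})$ exactly.

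The main obstacle is the first step: making the GT layer act as the identity even though the meta-graph has intra-graph edges within $\mathcal{S}_i$ and attention is a softmax that cannot put exactly zero weight on neighbors when the biases are finite. My plan is to handle this at the level of the function class. I would allow the threshold $\gamma$, the residual connection, and the value parameters to be part of the parameterization (as in \eqref{eq:func-class-single-full-rewrite}), so an exact identity is available. If that is not permitted, the fallback is a limiting argument: send the self-attention bias to infinity so the attention matrix converges to $\bm{I}$. Under continuity of $\mathcal{L}$ and dominated convergence, the risks converge. This still gives the inequality on the infimum, because approximation error is defined as an infimum rather than a minimum.
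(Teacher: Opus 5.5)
Your route is the paper's route. You realize each $g\in\mathcal{F}_i$ inside $\mathcal{F}_M$ by neutralizing the GT layer ($\gamma>1$, $\bm{W}_V=\bm{I}$, $\bm{b}_V=\bm{0}$), pooling over $\mathcal{S}_i$ only, and copying the MLP weights. You then use that $\mathcal{P}_i$ is the marginal of $\mathcal{P}_M$ and take infima. The step you flag as the main obstacle is a genuine gap, and the paper's proof has it too. The paper asserts that $\gamma>1$ gives a fully disconnected meta-graph, but Section~\ref{sec:meta-graph} keeps the intra-graph edges $\mathcal{E}_i'$ for every $\gamma$, so $\gamma>1$ only removes superedges. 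Your primary resolution takes the parametrization in \eqref{eq:func-class-single-full-rewrite} as given ``so an exact identity is available.'' That simply inherits the unproved claim.

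None of your repairs closes the gap for the layer defined in Section~\ref{sec:single-graph-gt}. First, the biases $\bm{b}_Q,\bm{b}_K$ are shared by all nodes, so they cannot single out the self-loop. After dropping terms that depend on $u$ alone (they cancel in the softmax), the logit of $(u,v)$ is $(\bm{W}_Q\bm{H}_u+\bm{b}_Q)^\top\bm{W}_K\bm{H}_v/\sqrt{d'}$, which depends on $v$ only through $\bm{H}_v$. With $\bm{W}_Q=\bm{W}_K=\bm{0}$ the attention is uniform on $\bar{\mathcal{N}}(u)$, which is the averaging operator of \Cref{thm:mgmt-lhop}. Second, there is no self-attention bias to send to infinity. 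Temperature-type limits concentrate attention on whichever neighbour maximizes that linear score, and this need not be $u$. For example, with a single head, $\bm{W}_Q=\bm{W}_K=c\bm{I}$, $c\to\infty$, and a neighbour with $\bm{H}_v=2\bm{H}_u\neq\bm{0}$, the weight on $u$ goes to zero. Third, removing the residual/FFN does not help. Here the residual wraps the FFN, $\bm{Z}+\sigma(\textnormal{FFN}(\bm{Z}))$, so it never re-injects the pre-attention $\bm{H}_u$, and $\bm{Z}_u$ stays a strictly positive combination of the values of $u$ and its neighbours.

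To make the inclusion rigorous you must add something explicit. One option is to assume $\mathcal{E}_i'=\emptyset$, or to let the whole edge set of the meta-graph GT (not just $\gamma$) be a free parameter that can be emptied. Another is to use a layer with a root/skip term that adds $\bm{W}_{\textnormal{root}}\bm{H}_u$ to the aggregated message, as in the \texttt{TransformerConv} layers used in the experiments. Setting $\bm{W}_{\textnormal{root}}=\bm{I}$ and $\bm{W}_V=\bm{0}$, $\bm{b}_V=\bm{0}$, with the output projection set to the identity, the FFN branch contributing zero, and no normalization, gives the identity exactly for every edge set. With either option, the rest of your argument goes through unchanged.
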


\begin{proof}[Proof of Proposition \ref{thm:approx-mgmt-single}]
    Without loss of generality, we focus on the cases where both MGMT and graph-specific classifiers has  $L_{\textnormal{MLP}}=2$ layers of MLP and MGMT has $L_{\textnormal{GT}}=1$ layer of Graph Transformer as specified in \eqref{eq:func-class-meta} and \eqref{eq:func-class-single}. The same argument below applies to any number of $L_{\textnormal{MLP}}$ and $L_{\textnormal{GT}}$.

    First, consider the function class that operates on the meta-graph but only utilizes the nodes from graph $i$, namely,
    \begin{align}\label{eq:func-class-single-full}
        \Bar{\mathcal{F}}_{i} = \cbrac*{f:\mathbb{R}^{\abs{\mathcal{S}_{M}}\times d} \mapsto \mathbb{R}^{\abs{\mathcal{Y}}} \Bigmid f = \bm{W}^{(2)}_{\textnormal{MLP}} \sigma \paren*{\bm{W}^{(1)}_{\textnormal{MLP}} \textnormal{Pool}_{\mathcal{S}_i}\paren{\bm{H}^{(0)}_{M}}}},
    \end{align}
    where $\textnormal{Pool}_{\mathcal{S}_i}$ denote the global pooling operation that restricts on the nodes in $\mathcal{S}_i$. Since 
    \begin{align*}
    \textnormal{Pool}_{\mathcal{S}_i}\paren{\bm{H}^{(0)}_{M}}=\textnormal{Pool}\paren{\bm{H}_{\mathcal{S}_i}},
    \end{align*}
    We have that 
    \begin{align*}
        R \paren*{\bm{W}^{(2)}_{\textnormal{MLP}} \sigma \paren*{\bm{W}^{(1)}_{\textnormal{MLP}} \textnormal{Pool}_{\mathcal{S}_i}\paren{\bm{H}^{(0)}_{M}}};\mathcal{P}_{M},\mathcal{L}} = R \paren*{\bm{W}^{(2)}_{\textnormal{MLP}} \sigma \paren*{\bm{W}^{(1)}_{\textnormal{MLP}} \textnormal{Pool}\paren{\bm{H}_{\mathcal{S}_i}}};\mathcal{P}_{i},\mathcal{L}}.
    \end{align*}
    It follows that
    \begin{align*}
        \epsilon(\bar{\mathcal{F}}_{i};\mathcal{P}_{M},\mathcal{L})=\epsilon(\mathcal{F}_{i};\mathcal{P}_{i},\mathcal{L}).
    \end{align*}
    We claim that $\bar{\mathcal{F}}_{i}\subseteq \mathcal{F}_{M}$, and by definition of approximation error, 
    \begin{align*}
        \epsilon(\mathcal{F}_{M};\mathcal{P}_{M},\mathcal{L}) \leq \epsilon(\bar{\mathcal{F}}_{i};\mathcal{P}_{M},\mathcal{L}) =\epsilon(\mathcal{F}_{i};\mathcal{P}_{i},\mathcal{L}).
    \end{align*}
    It remains to show the function class inclusion. Note that we can rewrite $\bar{\mathcal{F}}_{i}$ as 
    \begin{equation}\label{eq:func-class-single-full-rewrite}
    \begin{aligned}
        \Bar{\mathcal{F}}_{i} = \Big\{f:\mathbb{R}^{\abs{\mathcal{S}_{M}}\times d} \mapsto \mathbb{R}^{\abs{\mathcal{Y}}} \Bigmid & f = \bm{W}^{(2)}_{\textnormal{MLP}} \sigma \paren*{\bm{W}^{(1)}_{\textnormal{MLP}} \textnormal{Pool}_{\mathcal{S}_i}\paren{\textnormal{GT}(\bm{H}^{(0)}_{M})}}, \\
        &\gamma > 1, \bm{W}_{V}=\bm{I}, \bm{b}_{V}=\bm{0}\Big\},
    \end{aligned}
    \end{equation}
    where $\gamma$ is the threshold defined in \Cref{sec:meta-graph} that determines the connectivity between nodes in the meta-graph, $\bm{W}_{V},\bm{b}_{V}$ are parameters for values in the Graph Transformer layer. Setting $\gamma>1$ results in a fully disconnected meta-graph and together with $\bm{W}_{V}=\bm{I}, \bm{b}_{V}=\bm{0}$, the Graph Transformer layer $\textnormal{GT}(\cdot)$ reduces to an identity mapping, which establishes the equivalence in \eqref{eq:func-class-single-full-rewrite}.

    Finally, from \eqref{eq:func-class-meta} and \eqref{eq:func-class-single-full-rewrite}, it is clear that $\bar{\mathcal{F}}_{i}\subseteq \mathcal{F}_{M}$, which concludes the proof. 
\end{proof}

\subsection{{L-hop Mixing versus Weisfeiler-Leman}}\label{app:lhop-vs-wl}

{
A natural question arises regarding the relationship between L-hop mixing (\Cref{thm:mgmt-lhop}) and Weisfeiler-Leman (WL) expressivity: does the ability to represent L-hop mixing translate into enhanced distinguishing power in the Weisfeiler-Leman test. In this section, we clarify that these are distinct characterizations of model power and provide a formal analysis of MGMT's WL expressivity.
}

{
L-hop mixing and WL expressivity measure different aspects of model capability. WL expressivity characterizes \textit{distinguishing power}: whether a model can distinguish non-isomorphic graphs. The 1-dimensional WL test (1-WL) iteratively refines node colorings based on local neighborhood structures, and it is well-established that standard message-passing GNNs are at most as powerful as 1-WL~\cite{morris2019wl-neural, stefanie2023gnn-theory}. In contrast, L-hop mixing characterizes \textit{approximation quality}: whether a model can exactly recover target functions that depend on mixed-depth neighborhood information (\Cref{def:lhop-mixing}). MGMT's capability of representing L-hop mixing comes from the depth-aware aggregation in \eqref{eq:single-mod-gt}--\eqref{eq:final-embed&att}, independent of the GT backbone, while MGMT's WL expressivity depends on the GT backbone choice. The empirical results in \Cref{tab:single-gt-backbones} demonstrate that depth-aware aggregation enhances performance regardless of the GT backbone, confirming that L-hop mixing and WL expressivity are complementary properties that jointly contribute to model capability.}

{\textbf{MGMT with Graph Attention Networks (GAT) backbone is 1-WL bounded.} We now formally analyze MGMT's distinguishing power. We adopt notation from~\cite{morris2019wl-neural, stefanie2023gnn-theory}. A node coloring $l:\mathcal{V}(\mathcal{G})\mapsto \Sigma$ maps node $v\in \mathcal{V}(\mathcal{G})$ to color $l(v) \in \Sigma$. A labeled graph $(\mathcal{G}, l)$ is graph $\mathcal{G}$ with node coloring $l:\mathcal{V}(\mathcal{G})\mapsto \Sigma$. Node coloring $c$ refines coloring $d$, written $c\sqsubseteq d$, if $c(v)=c(w)$ implies $d(v)=d(w)$ for every $v,w \in \mathcal{V}(\mathcal{G})$. Two colorings are equivalent, written $c\equiv d$, if $c\sqsubseteq d$ and $d \sqsubseteq c$. The notation $\mset{\dots}$ denotes a multiset.}

{
For labeled graph $(\mathcal{G}, l)$, 1-WL computes node coloring $c_l^{(t)}:\mathcal{V}(\mathcal{G})\mapsto \Sigma$ iteratively for $t \geq 0$. Let $c_l^{(0)}=l$ and for each $u \in \mathcal{V}(\mathcal{G})$ and $t\geq 0$:
\begin{align}\label{eq:1-wl}
    c_l^{(t)}(v)=\textsc{Hash}\paren*{c_l^{(t-1)}(v), \mset{c_l^{(t-1)}(u)\mid u \in \mathcal{N}(v)}},
\end{align}
where $\textsc{Hash}$ is an injective mapping assigning unique colors to distinct input pairs. }

{
The key difference between 1-WL and MGMT's depth-aware GT (\Cref{sec:single-graph-gt}) is that the former updates based on colorings of $\cbrac{v} \cup \mathcal{N}(v)$ from the previous iteration, while the latter aggregates outputs across all depths/iterations. However, depth aggregation does not make MGMT more powerful than 1-WL in distinguishing power. To see this, we define a 1-WL variation, $\text{1-WL}^{+}$, that utilizes all depth information. Let $\Tilde{c}_l^{(t)}:\mathcal{V}(\mathcal{G})\mapsto \Sigma$ be the $\text{1-WL}^{+}$ coloring with $\Tilde{c}_l^{(0)}=l$ and $\Tilde{c}_l^{(1)}=c_l^{(1)}$. For $t\geq 1$:
\begin{align}\label{eq:1-wl+}
    \Tilde{c}_l^{(t)}(v)=\textsc{Hash}^{(t)}\paren*{c_l^{(1)}(v), \dots, c_l^{(t)}(v)},
\end{align}
where $\textsc{Hash}^{(t)}$ is an injective map assigning colors based on 1-WL outputs across all iterations. Despite this additional step beyond vanilla 1-WL, $\text{1-WL}^{+}$ provides no additional distinguishing power, as established by the following Lemma.
\begin{lemma}
    Let $(\mathcal{G}, l)$ be a labeled graph. Then for all $t\geq 0$, $c_l^{(t)} \equiv \Tilde{c}_l^{(t)}$.
\end{lemma}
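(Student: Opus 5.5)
We prove the two refinements $\Tilde{c}_l^{(t)} \sqsubseteq c_l^{(t)}$ and $c_l^{(t)} \sqsubseteq \Tilde{c}_l^{(t)}$ separately for each fixed $t$. The cases $t=0$ and $t=1$ are immediate from the definitions $\Tilde{c}_l^{(0)}=l=c_l^{(0)}$ and $\Tilde{c}_l^{(1)}=c_l^{(1)}$, so we assume $t\geq 2$ (reading the recursion in \eqref{eq:1-wl+} for $t\geq 2$, consistent with the stated initialization).

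The direction $\Tilde{c}_l^{(t)} \sqsubseteq c_l^{(t)}$ uses only injectivity of $\textsc{Hash}^{(t)}$. If $\Tilde{c}_l^{(t)}(v)=\Tilde{c}_l^{(t)}(w)$, then the tuples $(c_l^{(1)}(v),\dots,c_l^{(t)}(v))$ and $(c_l^{(1)}(w),\dots,c_l^{(t)}(w))$ coincide. In particular their last coordinates agree, so $c_l^{(t)}(v)=c_l^{(t)}(w)$.

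For the converse $c_l^{(t)} \sqsubseteq \Tilde{c}_l^{(t)}$, the key step is the standard monotonicity fact for 1-WL: for every $s\geq 1$, $c_l^{(s)} \sqsubseteq c_l^{(s-1)}$. This follows directly from \eqref{eq:1-wl}. Since $\textsc{Hash}$ is injective and its first argument is $c_l^{(s-1)}(v)$, equality $c_l^{(s)}(v)=c_l^{(s)}(w)$ forces $c_l^{(s-1)}(v)=c_l^{(s-1)}(w)$. Because refinement is transitive, chaining these gives $c_l^{(t)} \sqsubseteq c_l^{(s)}$ for all $1\leq s\leq t$.

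Hence, if $c_l^{(t)}(v)=c_l^{(t)}(w)$, then $c_l^{(s)}(v)=c_l^{(s)}(w)$ for every $s\in\{1,\dots,t\}$. The two tuples fed into $\textsc{Hash}^{(t)}$ are therefore identical, and so $\Tilde{c}_l^{(t)}(v)=\Tilde{c}_l^{(t)}(w)$. Combining both refinements gives $c_l^{(t)}\equiv \Tilde{c}_l^{(t)}$.

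The only point that needs care is the monotonicity lemma. It relies on the hash taking the node's own previous color as an explicit argument, as it does in \eqref{eq:1-wl}, rather than only the neighbor multiset. Once that is noted, everything else is bookkeeping with injectivity and transitivity of $\sqsubseteq$. No induction over $t$ beyond this chaining is needed, and the argument does not depend on the graph or on the initial labeling $l$.
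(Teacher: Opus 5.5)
Your proof is correct and follows the same route as the paper. You use injectivity of $\textsc{Hash}^{(t)}$ for $\Tilde{c}_l^{(t)} \sqsubseteq c_l^{(t)}$, and injectivity of $\textsc{Hash}$, through the refinement $c_l^{(s)} \sqsubseteq c_l^{(s-1)}$ chained down to $s=1$, for the converse. The only difference is that you spell out the monotonicity chaining and the trivial cases $t\in\{0,1\}$, which the paper's one-line appeal to injectivity of $\textsc{Hash}$ leaves implicit.
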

\begin{proof}
For any $v, w \in \mathcal{V}(\mathcal{G})$, if $\Tilde{c}_l^{(t)}(v)=\Tilde{c}_l^{(t)}(w)$, we must have $c_l^{(k)}(v)=c_l^{(k)}(w)$, for all $k=1,\dots, t$ by injectivity of $\textsc{Hash}^{(t)}$, hence $\Tilde{c}_l^{(t)} \sqsubseteq c_l^{(t)}$. On the other hand, if $c_l^{(t)}(v)=c_l^{(t)}(w)$, we have $c_l^{(k)}(v)=c_l^{(k)}(w)$ for all $k = 1, \dots, t-1$ by injectivity of $\textsc{Hash}$. It follows that $\Tilde{c}_l^{(t)}(v)=\Tilde{c}_l^{(t)}(w)$ since all inputs are equivalent. Hence, we have $c_l^{(t)} \sqsubseteq \Tilde{c}_l^{(t)}$, which concludes the proof.
\end{proof}
Following similar arguments as in~\cite{morris2019wl-neural, stefanie2023gnn-theory}, the distinguishing power of MGMT's depth-aware GT is upper-bounded by $\text{1-WL}^{+}$ (hence 1-WL) and reaches maximal capacity when the attention layers in Equations (1)-(2) (corresponding to $\textsc{Hash}$) and the depth aggregation in Equation (3) (corresponding to $\textsc{Hash}^{(t)}$) are injective functions.}

{
\textbf{Going beyond 1-WL.} However, it is possible to extend MGMT beyond 1-WL expressivity. As detailed in \Cref{app:gt-backbones}, MGMT's main contribution is delineating a flexible framework for multi-graph fusion where practitioners can freely replace the GAT backbone with other GT variants suitable for the task, such as Graphormer~\cite{ying2021}. As shown in~\cite{ying2021}, incorporating structural encodings and global attention leads to strictly more expressive power than the 1-WL test. Therefore, MGMT with Graphormer backbone can technically break the 1-WL limitation discussed in the Lemma above.
}

\section{Theoretical Foundations of Embedding-Similarity Superedges}\label{app:superedges-theory}

Graph learning typically assumes that connected nodes have similar features or labels; a smoothness (homophily) prior grounded in the observation that many real-world networks connect like entities\cite{zhou2003learning, rossi2022unreasonable}. This assumption is often enforced by minimizing the graph Dirichlet energy (GDE, see Definition 2.1), which is the sum of squared feature differences across edges, thereby yielding smooth node embeddings that are harmonic functions on the graph~\cite{rossi2022unreasonable}.
A function $f$ is defined as "harmonic" if it satisfies the discrete Laplace equation $\bm{L}f=0$, where $\bm{L}:=\bm{D}-\bm{A}$ is the combinatorial graph Laplacian (with $\bm{A}$ as the adjacency matrix and $\bm{D}$ as the diagonal degree matrix). This condition is \emph{equivalent} to the averaging rule $f(u)=\frac{1}{\deg(u)}\sum_{v\sim u} f(v)$ for every unlabeled node $u$. Minimizing the GDE enforces this harmonic property, a classical result in graph-based semi-supervised learning~\cite{zhu2003semi,zhou2003learning}. Such smoothness-based regularization has proven beneficial in both classical label propagation and modern GNNs when the assumption holds, as it suppresses noise and aligns learned representations with network structure. 

From a spectral viewpoint, GDE minimization penalizes “high-frequency” components (rapid changes across adjacent nodes), and standard message passing performs neighborhood averaging (a low-pass operation), which denoises features while preserving cluster-level structure; this explains the strong empirical performance of label propagation and Graph Convolutional Network (GCN)-style models on homophilous benchmarks\cite{shuman2013emerging, jiang2019semi, wu2019simplifying, oono2019graph}. If the smoothness prior is violated (heterophilic graphs where adjacent nodes differ), aggressive smoothing can blur distinctions and degrade performance~\cite{zhu2020beyond}. This “feature mixing” is well documented: on heterophilous graphs, even shallow neighbor-averaging can wash out class signal, and deeper stacks exacerbate \emph{over-smoothing}, where node embeddings become nearly indistinguishable and both effects harm separability\cite{wu2019simplifying, li2018deeper}. MGMT’s design explicitly leverages these principles: it links nodes across graphs only when their latent representations are similar, extending the homophily prior to inter-graph connections. Concretely, by thresholding latent similarity, MGMT restricts message passing to approximately homophilous (low-GDE) superedges, mitigating heterophily-induced feature mixing; this mirrors observations that learning/selecting edges to reduce Dirichlet energy improves downstream accuracy~\cite{chen2020iterative}. By keeping cross-graph GDE low, this construction ensures information is shared along feature-consistent (smooth) superedges, thereby bolstering MGMT’s empirical performance.

\textbf{Definition 2.1 (Graph Dirichlet Energy).} For a graph with adjacency matrix $\bm{A}$ and node feature matrix $\bm{X}$, the Dirichlet energy (graph signal smoothness) is defined as
\[
\Omega(\bm{A}, \bm{X}) = \frac{1}{2n^2}\sum_{i,j} A_{ij}\,\|x_i - x_j\|^2 = \frac{1}{n^2}\operatorname{tr}(\bm{X}^\top \bm{L} \bm{X})\,,
\]
where $\bm{L} = \bm{D} - \bm{A}$ is the graph Laplacian and $D_{ii} = \sum_j A_{ij}$ \cite{chen2020iterative}. This quantity measures how smoothly the features $\bm{X}$ vary across the edges. A smaller $\Omega(\bm{A},\bm{X})$ indicates that connected nodes have more similar features.

A trivial minimizer of $\Omega(\bm{A},\bm{X})$ is the disconnected graph with no edges ($\bm{A}=0$), yielding the minimum GDE of $0$ \cite{chen2020iterative}. However, in practice, one often imposes constraints such as a fixed number of edges, a connectivity requirement, or regularization terms to avoid this degenerate solution. 
The objective thus becomes to add only the most "homophilous" edges that connect similar nodes, thereby keeping the GDE low.
Under this motivation, minimizing $\Omega(\bm{A},\bm{X})$ reduces to selecting the most ``homophilous'' edges. 
\begin{table}[t]
\centering
\tiny
\caption{Model Category Summary with Fusion Strategy, Graph Modeling, and Attention Usage}
\label{model-summary-table}
\vskip -0.15in
\begin{center}
\scalebox{0.88}{
\begin{tabular}{@{}llcccc@{}}
\toprule
\textbf{Category} & \textbf{Model Type} & \textbf{Fusion Method} & \textbf{Novel Model} & \textbf{Graph Structured Modeling} & \textbf{Attention-Based} \\
\midrule
\multirow{5}{*}{\textbf{Single-Source (No Fusion)}} 
& Simple DNN                        & $\times$ & $\times$ & $\times$ & $\times$ \\
& Simple GNN                        & $\times$ & $\times$ & $\surd$  & $\times$ \\
& Simple DiffPool                   & $\times$ & $\times$ & $\surd$  & $\times$ \\
& Simple Transformer                & $\times$ & $\times$ & $\times$ & $\surd$ \\
& Simple Graph Transformer          & $\times$ & $\times$ & $\surd$  & $\surd$ \\
\midrule
\multirow{5}{*}{\textbf{Concatenation Fusion}} 
& Concatenated Features (DNN)       & $\surd$  & $\times$ & $\times$ & $\times$ \\
& Concatenated Features (GNN)       & $\surd$  & $\times$ & $\surd$  & $\times$ \\
& Concatenated Features (DiffPool)  & $\surd$  & $\times$ & $\surd$  & $\times$ \\
\midrule
\multirow{5}{*}{\textbf{Multimodal Fusion Baselines}} 
& MMGL~\cite{zheng2022multi}        & $\surd$  & $\times$  & $\surd$  & $\surd$ \\
& MultiMoDN~\cite{swamy2023multimodn} & $\surd$ & $\times$  & $\times$ & $\times$ \\
& MedFuse~\cite{hayat2022medfuse}   & $\surd$  & $\times$  & $\times$ & $\times$ \\
& FlexCare~\cite{xu2024flexcare}    & $\surd$  & $\times$  & $\times$ & $\surd$ \\
& Meta-Transformer (MT)~\cite{ma2022multimodal} & $\surd$ & $\times$ & $\times$ & $\surd$ \\
\midrule
\multirow{5}{*}{\textbf{MGMT Ablation Variants}} 
& MGMT w/o Adaptive Depth Selection & $\surd$  & $\surd$  & $\surd$  & $\surd$ \\
& MGMT w/o Supernode Selection      & $\surd$  & $\surd$  & $\surd$  & $\surd$ \\
& MGMT w/o Inter-graph Edges     & $\surd$  & $\surd$  & $\surd$  & $\surd$ \\
& MGMT w/o Intra-graph Edges     & $\surd$  & $\surd$  & $\surd$  & $\surd$ \\
& MGMT w/o Meta-Graph and Adaptive Depth & $\surd$ & $\surd$ & $\surd$ & $\surd$ \\
\midrule
\textbf{Proposed Model} & MGMT                         & $\surd$  & $\surd$  & $\surd$  & $\surd$ \\
\bottomrule
\end{tabular}
}
\end{center}
\vskip 0.1in
\end{table}
MGMT implements this principle directly. It computes all pairwise similarities between supernode embeddings and forms superedges only if the similarity surpasses a data-driven threshold automatically selected via cross-validation (detailed in \Cref{sec:meta-graph}). Finally, we note that while the GDE in Definition 2.1 is based on squared feature differences, we observed in practice that MGMT's performance is not sensitive to the specific choice of similarity metric used for this filtering step (\Cref{appendix:similarity-comparison}).
\section{Detailed Descriptions of Baseline Models}

\label{appendix:baselines}
This appendix details the baselines used to evaluate our method. Table \ref{model-summary-table} provides a summary comparison of the baseline models. 

\subsection{Single-Source Models (No Fusion)}
We assess per-source predictive signal with five baselines: (i) DNN on flattened node features (edges ignored); (ii) a message-passing GNN with graph-convolution layers over the given topology; (iii) DiffPool for hierarchical pooling into coarser clusters \cite{ying2018hierarchical}; (iv) Transformer over node-feature sequences (no structural encoding); and (v) Graph Transformer that attends over 1-hop neighborhoods to incorporate local structure.
\begin{table}[t]
  \caption{Accuracy (± standard error) for different models across datasets, grouped by model family.}
  \label{tab:accuracy-results}
  \centering
  \scriptsize
  \begin{tabular}{llccccc}
    \toprule
    Category & Model & Alzheimer & LFP Data & Experiment 1 & Experiment 2 & Experiment 3 \\
    \midrule
    \multirow{3}{*}{Feature-Concatenation}
      & DNN 
      & 62.13 $\pm$ 0.91 & 30.62 $\pm$ 2.28 & 61.87 $\pm$ 2.27 & 56.10 $\pm$ 1.13 & 63.74 $\pm$ 0.56 \\
      & GNN 
      & 70.17 $\pm$ 0.93 & 27.80 $\pm$ 2.34 & 55.64 $\pm$ 2.36 & 64.20 $\pm$ 1.20 & 67.17 $\pm$ 0.60 \\
      & DiffPool 
      & 69.40 $\pm$ 0.70 & 31.53 $\pm$ 1.76 & 53.78 $\pm$ 1.75 & 65.80 $\pm$ 0.89 & 64.81 $\pm$ 0.44 \\
    \midrule
    \multirow{5}{*}{General-purpose Multimodal}
      & MMGL 
      & 79.38 $\pm$ 0.52 & 39.28 $\pm$ 1.93 & 59.20 $\pm$ 1.04 & 62.80 $\pm$ 0.84 & 68.75 $\pm$ 0.12 \\
      & MultiMoDN 
      & 76.44 $\pm$ 0.75 & 37.82 $\pm$ 1.82 & 60.40 $\pm$ 1.67 & 61.50 $\pm$ 1.01 & 65.10 $\pm$ 0.50 \\
      & MedFuse 
      & 75.27 $\pm$ 0.84 & 35.17 $\pm$ 1.71 & 59.70 $\pm$ 1.52 & 64.35 $\pm$ 0.96 & 63.84 $\pm$ 0.53 \\
      & FlexCare 
      & 76.14 $\pm$ 0.79 & 36.42 $\pm$ 1.88 & 61.10 $\pm$ 1.39 & 69.82 $\pm$ 0.91 & 64.03 $\pm$ 0.56 \\
      & MT 
      & 81.29 $\pm$ 0.92 & 39.20 $\pm$ 2.96 & 62.31 $\pm$ 1.24 & 66.30 $\pm$ 1.12 & 69.24 $\pm$ 0.34 \\
    \midrule
    \multirow{3}{*}{{SOTA Multi-graph}}
      & {AMIGO} 
      & {79.23 $\pm$ 1.04} & {38.92 $\pm$ 1.32} & {58.68 $\pm$ 2.12} & {65.73 $\pm$ 0.89} & {70.42 $\pm$ 0.79} \\
      & {MaxCorrMGNN} 
      & {77.43 $\pm$ 1.35} & {35.97 $\pm$ 2.73} & {56.23 $\pm$ 1.32} & {63.72 $\pm$ 1.64} & {71.46 $\pm$ 1.01} \\
      & {MGLAM} 
      & {81.29 $\pm$ 0.96} & {38.93 $\pm$ 1.02} & {61.96 $\pm$ 1.04} & {62.29 $\pm$ 1.29} & {69.52 $\pm$ 0.39} \\
    \midrule
    \multirow{1}{*}{Proposed model}
      & \textbf{MGMT} 
      & \textbf{83.11 $\pm$ 0.84} & \textbf{42.13 $\pm$ 2.52} & \textbf{65.47 $\pm$ 2.39} & \textbf{69.90 $\pm$ 1.19} & \textbf{73.21 $\pm$ 0.59} \\
    \bottomrule
  \end{tabular}
\end{table}

\subsection{Feature-Concatenation Fusion Models}
These models use early fusion: each source is encoded by a source-specific extractor, the resulting embeddings are concatenated, and a shared DNN classifier is applied. Concretely, we consider (i) DNN-fusion with per-source DNN encoders; (ii) GNN-fusion with per-source GCN layers and graph-level pooling prior to concatenation; and (iii) DiffPool-fusion using per-source DiffPool encoders to produce graph-level embeddings that are concatenated and classified by a DNN.

\subsection{General-purpose multimodal fusion}
We benchmark against recent multimodal frameworks with distinct fusion strategies: 
(i) MMGL~\cite{zheng2022multi}, which learns shared/specific embeddings via modality-aware representation learning and models subject-level similarity with a GNN; 
(ii) MultiMoDN~\cite{swamy2023multimodn}, a modular design with independent encoders and late fusion, without structural reasoning; 
(iii) MedFuse~\cite{hayat2022medfuse}, which aligns modalities in a shared latent space using contrastive/reconstruction losses, without explicit intra- or inter-modality structure; 
(iv) FlexCare~\cite{xu2024flexcare}, which uses modality-specific encoders and a Transformer fusion layer for heterogeneous clinical data, but no graph-based reasoning; and 
(v) Meta-Transformer (MT)~\cite{ma2022multimodal}, which uses modality prompts with a shared Transformer over unstructured inputs, without topological modeling. 
MGMT differs by jointly capturing both intra- and inter-graph relations through an attention-based meta-graph.

Most of these benchmark models were not originally designed for graph-structured inputs (they expect tabular, imaging, or clinical features). To compare fairly, we first converted each graph into a fixed-length vector by running the same graph-specific encoder used in MGMT (TransformerConv with global pooling and adaptive-depth aggregation) and using the resulting graph-level embedding as a “tabular” feature vector. For methods with multi-stream inputs (e.g., MultiMoDN, FlexCare, MedFuse), we fed one embedding per graph; for single-stream methods (e.g., Meta-Transformer), we concatenated the graph embeddings. All baselines used identical train/val/test splits, per-graph standardization, a learned linear projection to align embedding dimensions when required, and the same Optuna budget for hyperparameter tuning.

\subsection{{Multi-Graph Learning Models}}

{Finally, we include three recent multi-graph learning methods as baselines that explicitly operate on multiple graphs per entity: (i) AMIGO~\citep{nakhli2023sparse} is a sparse multi-graph transformer model that processes multiple modality-specific graphs for each subject and uses a shared context mechanism to exchange information across graphs. Each graph is first encoded by a graph transformer to obtain a graph-level representation; AMIGO then employs cross-graph attention between these representations and a shared context token to produce a fused embedding used for prediction; (ii) MaxCorrMGNN~\citep{maxcorrmgnn} is a multi-graph neural network that encourages the embeddings of different graphs from the same subject to be maximally correlated. It learns graph-level embeddings for each graph and optimizes a correlation-based objective across graphs, followed by a classifier on the fused embedding. This explicitly aligns graph-specific representations while retaining graph-specific encoders; MGLAM~\citep{mglam} treats each subject as a "bag of graphs", and learns adaptive weights over graphs within the bag. It first computes graph-level embeddings for each graph, then aggregates them via an attention-like mechanism that learns per-graph importance scores, yielding a subject-level representation for downstream prediction.}

{In our experiments, all three methods are instantiated on the same graph-specific design as MGMT, using the same per-graph encoders (where applicable), train/validation/test splits, and comparable hyperparameter tuning budgets. Unlike MGML-style multimodal fusion, these methods are designed to handle multiple graphs per subject, but they fuse graphs at the level of graph embeddings or bags of graphs. In contrast, MGMT constructs an explicit meta-graph over \emph{supernodes}, enabling fine-grained cross-graph message passing that preserves intra-graph topology while modeling inter-graph structure.}

\section{Ablation Study}
\label{appendix:ablation-details}

{We assess the contribution of each MGMT component through a series of ablations where one or several modules are removed while the rest of the architecture is kept fixed:
(i) \emph{w/o Adaptive Depth Selection}: replace confidence-weighted layer aggregation with final-layer-only features, disabling depth-wise ensembling;
(ii) \emph{w/o Supernode Selection}: bypass attention-based node filtering so that all nodes enter the meta-graph;
(iii) \emph{w/o Inter-graph Edges}: keep only within-graph edges, removing cross-graph interactions in the meta-graph;
(iv) \emph{w/o Intra-graph Edges}: keep only cross-graph edges, discarding within-graph structure for the supernodes;
(v) \emph{w/o Meta-Graph and Adaptive Depth}: omit the meta-graph entirely, fix encoder depth, and perform early fusion via concatenated pooled graph outputs.}

{Numerical results for each ablation across datasets are reported in Table~\ref{tab:ablation-results}. Several consistent patterns emerge.}

{First, \textbf{removing the meta-graph and adaptive depth} leads to the largest degradation on all tasks (e.g., from $83.11\%$ to $70.12\%$ on Alzheimer and from $42.13\%$ to $27.80\%$ on LFP). This variant reduces MGMT to an early-fusion model over pooled graph embeddings, eliminating both subgraph-level cross-graph message passing and the ability to combine information across depths. The sharp drop indicates that the meta-graph is not a cosmetic addition: explicitly modeling interactions between a small set of informative supernodes drawn from the multiple graphs of each entity is crucial for integrating heterogeneous graphs and stabilizing predictions in multi-graph settings.}

{Second, \textbf{disabling adaptive depth} (``w/o Adaptive Depth Selection'') consistently hurts performance (e.g., from $83.11\%$ to $81.20\%$ on Alzheimer, and from $42.13$ to $40.64\%$ on LFP). Together with the depth-confidence and attention visualizations in the main paper (Fig.~\ref{fig:superchris_depth_layer_vis}), this supports our interpretation of the depth-aware module as more than a simple multi-layer average: layers with high confidence scores focus their attention on behaviorally relevant substructures (e.g., distal CA1 in the LFP dataset), whereas low-confidence layers exhibit more diffuse patterns. When we force the model to use only the final layer, it can no longer adaptively emphasize those depths whose connectivity patterns are most aligned with the task, leading to more misclassifications in cases that require multi-scale aggregation.}

{Third, \textbf{removing supernode selection} (``w/o Supernode Selection'') yields a moderate but consistent drop relative to full MGMT across all datasets. This is consistent with the threshold-sensitivity analysis (Section~\ref{appendix:sensitivity-analysis}) for the supernode importance score: very low thresholds allow keeping many weakly informative nodes in the meta-graph, making it denser and noisier. In contrast, moderate thresholds strike a balance between retaining salient subgraphs and suppressing noise. The ablation corresponds to the extreme case where all nodes are kept (effectively $\tau\to 0$), and the resulting performance degradation indicates that the sparsity-inducing bottleneck provided by supernode selection is important for denoising and interpretability.}

{Finally, the \textbf{inter- and intra-graph edge ablations} clarify how MGMT exploits structure at two complementary levels. Removing inter-graph edges (``w/o Inter-graph Edges'') prevents information from flowing across graphs of the same entity; accuracy drops are noticeable (e.g., from $83.11\%$ to $76.59\%$ on Alzheimer), indicating that cross-graph alignment provides a clear gain on top of strong graph-specific encoders. In contrast, removing intra-graph edges (``w/o Intra-graph Edges'') discards the original within-graph topology and forces the model to rely solely on similarity-based links between supernodes from different graphs; this leads to a much larger degradation on real datasets (e.g., Alzheimer accuracy falls to $62.40\%$). This pattern is consistent with our similarity-threshold study in Section~\ref{appendix:sensitivity-analysis}: when the inter-graph similarity threshold $\gamma$ is too low, the meta-graph becomes overly dense and spurious cross-graph edges blur informative graph-specific structure, whereas very high $\gamma$ removes many genuinely aligned supernodes and under-utilizes cross-graph information. The best performance arises at intermediate $\gamma$ values, where inter-graph edges selectively connect strongly aligned supernodes and, as formalized by our smoothness analysis in Section~\ref{app:superedges-theory}, tend to reduce the Dirichlet energy of the label function on the meta-graph. Taken together, the ablations support the view that MGMT needs both well-structured intra-graph connectivity to encode subject- or modality-specific patterns, and a sparse, similarity-driven set of inter-graph edges to tie together truly corresponding regions across graphs; removing either source of structure degrades performance, with the largest failures occurring when the more informative structure for a given task (typically the intra-graph topology) is removed.}

Software implementing the algorithms and data experiments is available online at: \\
\url{https://anonymous.4open.science/r/new_submission-33A6}

\begin{table}[t]
  \caption{Accuracy (± standard error) for different ablation models across datasets.}
  \label{tab:ablation-results}
  \centering
  \scriptsize
  \begin{tabular}{lccccc}
    \toprule
    Model & Alzheimer & LFP Data & Experiment 1 & Experiment 2 & Experiment 3 \\
    \midrule
    MGMT w/o Adaptive Depth Selection & 81.20 ± 0.85 & 40.64 ± 2.23 & 64.20 ± 2.40 & 68.80 ± 1.17 & 71.45 ± 0.57 \\
    MGMT w/o Supernode Selection & 78.25 ± 0.87 & 41.07 ± 2.19 & 62.11 ± 2.16 & 67.30 ± 1.07 & 69.31 ± 0.53 \\
    MGMT w/o Inter-graph Edges & 76.59 ± 0.88 & 38.91 ± 2.14 & 61.72 ± 2.25 & 66.90 ± 1.21 & 68.35 ± 0.51 \\
    MGMT w/o Intra-graph Edges & 62.40 ± 2.43 & 39.08 ± 0.97 & 63.09 ± 2.42 & 66.62 ± 1.23 & 66.75 ± 0.62 \\
    MGMT w/o Meta-Graph and Adaptive Depth & 70.12 ± 0.93 & 27.80 ± 2.34 & 55.64 ± 2.36 & 64.20 ± 1.20 & 67.17 ± 0.60 \\
    MGMT & \textbf{83.11 ± 0.84} & \textbf{42.13 ± 2.52} & \textbf{65.47 ± 2.39} & \textbf{69.90 ± 1.19} & \textbf{73.21 ± 0.59} \\ 
    \bottomrule
  \end{tabular}
\end{table}
\section{Details on Simulation Settings}
\label{appendix:simulation}

This section provides detailed descriptions of the synthetic data generation processes used in our simulation studies. We consider two controlled settings designed to evaluate the performance of MGMT under varying conditions of noise, feature dependency, and label complexity. Below, we describe the procedures for \textit{Setting 1}, which uses modality-specific noise and a linear classification rule, and \textit{Setting 2}, which introduces temporal dependencies and nonlinear label generation.
\subsection*{Setting 1: Feature Generation with Modality-Specific Noise and Linear Classification Rule}

Let each graph (modality in this case) consist of \( N \) nodes and \( d \) features per node. Define a subset of informative nodes \( V_0 \subset \{1, \dots, N\} \) with \( |V_0| = N_0 < N \), and let \( V_1 = \{1, \dots, N\} \setminus V_0 \) denote the non-informative nodes.

For each graph \( i = 1, \dots, n \) within each sample \( k = 1, \dots, K \), with graph-specific noise level \( \sigma_i \), node features are generated as follows:
\begin{itemize}
    \item informative nodes \( j \in V_0 \) have features \( \bm{x}_j^{(k,i)} \sim \mathcal{N}(\bm{0}, \boldsymbol{\Sigma}_i) \), where \( \boldsymbol{\Sigma}_i \in \mathbb{R}^{d \times d} \) has ones on the diagonal and off-diagonal entries sampled uniformly from \([-\sigma_i, \sigma_i]\).
    \item Non-informative nodes \( j \in V_1 \) have features \( \bm{x}_j^{(k,i)} \sim \text{Unif}(0, 0.5)^d \).
\end{itemize}

The graph-specific graph-level binary label \( y_i^{(k)} \in \{0,1\} \) is determined by the features of informative nodes:
\[
y_i^{(k)} = \mathbb{I} \left( \frac{1}{|V_0|} \sum_{j \in V_0} \sum_{r=1}^{d} x_{j,r}^{(k,i)} + \varepsilon^{(k)} > 0 \right), \quad \varepsilon^{(k)} \sim \mathcal{N}(0, 0.1).
\]

To enable multimodal fusion, a shared target variable is defined by aggregating graph-specific labels:
\[
y_{\text{shared}}^{(k)} = \mathbb{I} \left( \sum_{i=1}^{n} w_i y_i^{(k)} \geq \tau \right),
\]
where \( w_i \in [0,1] \) are graph-specific weights summing to one, and \( \tau \in [0,1] \) is a threshold parameter.
\subsection*{Setting 2: Temporal Feature Dependency via Gaussian Process}

In this setting, features of informative nodes are generated using a Gaussian Process (GP) to introduce temporal dependency across the \( d \) features. For \( t = 1, \ldots, d \), let \( x_t \sim \text{Unif}(0,1) \), and define the GP with zero mean and a squared exponential kernel:
\[
k(x_t, x_{t'}) = \sigma^2 \exp\left(-\frac{(x_t - x_{t'})^2}{l^2}\right),
\]
with length-scale \( l = 1 \) and variance \( \sigma^2 = 1 \).

For non-informative nodes, features are also sampled from a GP with the same mean function, but with increased kernel variance \( \sigma^2 = 2.5 \), thereby injecting greater noise and reducing relevance for the target prediction.

The binary target label is defined using a nonlinear and complex function of the averaged features across informative nodes. Let
\[
\bm{x} = \frac{1}{|V_0|} \sum_{j \in V_0} \bm{x}_j \in \mathbb{R}^d,
\]
and define three projection vectors \( \bm{e}_1, \bm{e}_2, \bm{e}_3 \in \mathbb{R}^d \), each selecting a distinct third of the features:
\[
\begin{aligned}
\bm{e}_1 &= [\underbrace{1,\dots,1}_{d/3}, \underbrace{0,\dots,0}_{2d/3}], \\
\bm{e}_2 &= [\underbrace{0,\dots,0}_{d/3}, \underbrace{1,\dots,1}_{d/3}, \underbrace{0,\dots,0}_{d/3}], \\
\bm{e}_3 &= [\underbrace{0,\dots,0}_{2d/3}, \underbrace{1,\dots,1}_{d/3}].
\end{aligned}
\]

The graph-level label is then computed as:
\[
y = \mathbb{I} \left( \sin(\bm{x}^\top \bm{e}_1) \cdot \cos(\bm{x}^\top \bm{e}_2) + (\bm{x}^{\circ 2})^\top \bm{e}_3 + \varepsilon > 0 \right), \quad \varepsilon \sim \mathcal{N}(0, 0.1),
\]
where \( \bm{x}^{\circ 2} \) denotes the element-wise square of \( \bm{x} \), i.e., the Hadamard power.

\section{Experimental Setting and Efficiency Analysis}
\label{appendix:computation}

We evaluate the computational complexity and efficiency of MGMT through both theoretical and empirical analysis. This section is structured as follows: Section~\ref{sec:theory_complexity_analysis} presents a theoretical runtime complexity analysis of MGMT’s core components; Section~\ref{sec:scalability} provides empirical scalability results across four key input dimensions; Section~\ref{sec:runtime-efficiency} offers runtime profiling and efficiency comparisons, including infrastructure details and training costs.

\subsection{Theoretical Complexity Analysis.}
\label{sec:theory_complexity_analysis}
The total computational complexity of MGMT is governed by three main components:  
(1) graph-specific Transformer encoders,  
(2) meta-graph construction, and  
(3) the final meta-graph Transformer.

\paragraph{Graph-specific Transformer encoders}
%For a graph $\mathcal{G}_i$ with $N_i$ nodes and $d$-dimensional features, a TransformerConv layer with dense attention costs $\mathcal{O}(N_i^2 d)$. Across $n$ graphs, the total is $\sum_{i=1}^n \mathcal{O}(N_i^2 d)$, or $\mathcal{O}(n N^2 d)$ for similar sizes. Standard sparse/linear attention variants can reduce this if needed.
{The computational complexity depends on the GT backbone choice. For a graph $\mathcal{G}_i$ with $N_i$ nodes and $d$-dimensional features, the \textbf{GAT backbone} in Section 3.1.1, attention is restricted to local neighborhoods, yielding $\mathcal{O}(|\mathcal{E}_i| d)$ per graph $\mathcal{G}_i$ with $|\mathcal{E}_i|$ edges and $d$-dimensional features, totaling $\mathcal{O}(n |\mathcal{E}| d)$ across $n$ graphs. For \textbf{dense attention} (e.g., Graphormer~\cite{ying2021}, every node attends to all others, resulting in $\mathcal{O}(N_i^2 d)$ per graph with $N_i$ nodes, or $\mathcal{O}(n N^2 d)$ total. For \textbf{sparse attention} (e.g., top-$K$ ~\cite{zhao2019sparse}), where each node attends to $K \ll N_i$ neighbors, the complexity is $\mathcal{O}(N_i K d)$ per graph, or $\mathcal{O}(n N K d)$ total.}

\paragraph{Meta-graph construction}
Two steps: (a) supernode extraction by scoring and thresholding nodes is $\mathcal{O}(N_i)$ per graph, totaling $\mathcal{O}(n N)$; (b) superedge creation computes pairwise similarities among selected supernodes. Let $S_i$ be supernodes in graph $i$ and $S_{\text{total}}=\sum_i S_i$. This step costs $\mathcal{O}(S_{\text{total}}^2 d)$, i.e., $\mathcal{O}(n^2 S^2 d)$ for roughly $S$ per graph, with $S_i \ll N_i$.

\paragraph{Meta-graph Transformer}
Applied over $S_{\text{total}}$ supernodes, yielding $\mathcal{O}(S_{\text{total}}^2 d)$ (approximately $\mathcal{O}(n^2 S^2 d)$).

The dominant term is the per-graph encoder, $\sum_i \mathcal{O}(N_i^2 d)$. Meta-graph construction and inference operate on a much smaller set of supernodes ($S_{\text{total}} \ll \sum_i N_i$) and thus are comparatively lightweight. Quadratic factors at the meta-graph level are in $S_{\text{total}}$ (and $n$), which remains moderate by design.

\subsection{Scalability Analysis}
\label{sec:scalability}

To validate the theoretical complexity discussed in Section~\ref{sec:theory_complexity_analysis}, we empirically evaluated the runtime behavior of MGMT with respect to four key input parameters: number of nodes per graph ($N$), number of graphs per sample ($n$), number of samples, and node feature dimensionality ($d$). In each experiment, we fixed the model architecture, training epochs (100), and batch size to enable consistent runtime comparisons, and reported runtimes averaged over 10 independent runs. Results in Figure~\ref{scalibility} align with theory and show efficient scaling.
\begin{figure*}[t]
\begin{center}
\centerline{\includegraphics[width=1\textwidth]{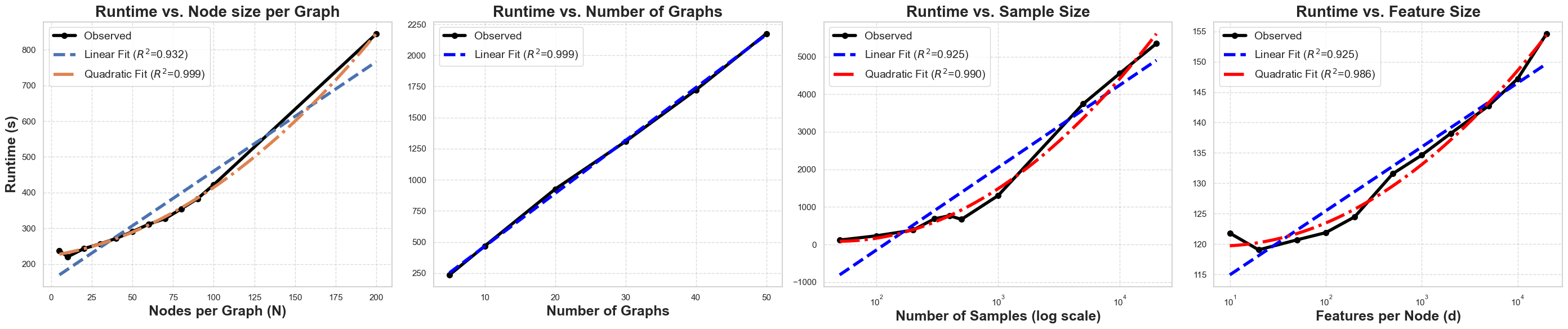}}
\vspace{-8pt}
\caption{
Scalability analysis of MGMT with respect to key input parameters. We evaluate the empirical runtime of MGMT under controlled variations of (i) number of nodes per graph ($N$), (ii) number of graphs per sample ($n$), (iii) number of samples (log scale), and (iv) feature dimensionality ($d$).Runtime scales quadratically with $N$ due to the dense self-attention in the graph-specific Graph Transformers ($\mathcal{O}(N^2 \cdot d)$), and linearly with $n$, confirming the modular and scalable design of MGMT. Sample size and feature dimension contribute to runtime growth in accordance with expectations, with minor deviations at small scales. Linear and quadratic regression fits are shown for interpretability, along with corresponding $R^2$ values.
}
\label{scalibility}
\vspace{-26pt}
\end{center}
\end{figure*}

\paragraph{Runtime vs. Nodes per Graph ($N$).} As predicted by the $\mathcal{O}(N^2 \cdot d)$ complexity of Transformer-based attention, the observed runtime increases superlinearly with $N$. The curve aligns closely with a quadratic fit ($R^2 = 0.999$), reflecting the cost of dense all-pairs attention in graph-specific encoders.

\paragraph{Runtime vs. Number of Graphs per Sample ($n$).} The runtime grows approximately linearly with $n$, validating the modular structure of MGMT where graph-specific encoders operate in parallel and the size of the meta-graph remains bounded. This confirms that MGMT scales well with respect to the number of graphs in practical regimes and supports our theoretical analysis in Section~\ref{sec:theory_complexity_analysis}.

\paragraph{Runtime vs. Number of Samples.} We observe a near-quadratic growth in runtime (on a log scale) as the number of samples increases, consistent with expectations. This is attributed to repeated forward passes and meta-graph construction across samples, particularly in mini-batch training settings.

\paragraph{Runtime vs. Feature Dimensionality ($d$).} Despite the theoretical linear dependence on $d$ in attention layers, the empirical curve remains nearly flat. This is due to early feature compression in MGMT’s architecture, which transforms high-dimensional node features into a lower-dimensional latent space prior to attention and reasoning steps.
\subsection{Runtime Profiling and Model Efficiency}
\label{sec:runtime-efficiency}

Building on the complexity analysis and scalability trends in Section~\ref{sec:scalability}, we profile per-epoch runtime to isolate the cost of each architectural component. Table~\ref{tab:epoch-times} reports average epoch times for MGMT and graph-attention baselines (those that perform graph reasoning and/or meta-graph fusion).

\paragraph{Baselines}
MGMT’s meta-graph reasoning adds minimal overhead: it is faster than MMGL on all datasets except LFP, despite including supernode detection and adaptive depth. Ablations that remove intra-graph edges or the meta-graph yield small speedups but reduce accuracy (see Table~\ref{tab:accuracy-results}), illustrating a speed–accuracy trade-off.

MultiMoDN, MedFuse, and FlexCare are omitted from Table~\ref{tab:epoch-times} because they do not use graph representations or attention; direct runtime comparison to graph-based models would be misleading. These methods operate on tabular inputs with shallow fusion, yielding lower computational cost by design but consistently lower accuracy than MGMT (Table~\ref{tab:accuracy-results}).

Table~\ref{tab:mgmt-time-breakdown} decomposes MGMT’s epoch time into data preparation, graph encoders, supernode/superedge construction, meta-graph formation, and the final classifier. The dominant cost is the graph Transformer encoder, consistent with the $\mathcal{O}(N^2 d)$ complexity; meta-graph construction and reasoning are comparatively lightweight due to the compact meta-graph.

Overall, MGMT balances expressivity and efficiency: it achieves higher accuracy than non-graph and shallow fusion baselines while maintaining practical per-epoch runtimes.

\paragraph{Compute Infrastructure and Training Cost.} 
All experiments were conducted on a shared CPU-based server provided by our lab. Each training job utilized 4 parallel CPU workers and approximately 4 GB of RAM. No GPU resources were used.

For baseline experiments, we trained a total of 250 models. Each model took on average 5.5 hours to train, amounting to approximately \textbf{1,375 CPU hours}.

For MGMT model training and hyperparameter tuning, the total compute time was as follows:
\begin{itemize}
\item \textbf{LFP dataset:} 100 Optuna trials, each taking 71 minutes on average, resulting in approximately \textbf{118.3 CPU hours}
\item \textbf{Alzheimer dataset:} 100 Optuna trials, each taking 5 hours and 18 minutes on average, resulting in approximately \textbf{530 CPU hours}
\item \textbf{Simulation Setting 1:} 50 iterations, each taking 29 minutes on average, resulting in approximately \textbf{24.2 CPU hours}
\item \textbf{Simulation Setting 2:} 50 iterations, each taking 31 minutes on average, resulting in approximately \textbf{25.8 CPU hours}
\item \textbf{Simulation Setting 3:} 50 iterations, each taking 49 minutes on average, resulting in approximately \textbf{40.8 CPU hours}
\end{itemize}

In total, MGMT-related training required approximately \textbf{739 CPU hours}. Additional compute time spent on development, debugging, and model refinement was not recorded.

\begin{table}[h]
  \caption{Comparison of average epoch runtime (in seconds) between various meta-graph configurations and baseline models across each dataset.}
  \label{tab:epoch-times}
  \centering
    \scriptsize
  \begin{tabular}{lccccc}
    \toprule
    Model Variant & Alzheimer & LFP Data & Experiment 1 & Experiment 2 & Experiment 3 \\
    \midrule   
    MMGL        & 174.23 & \textbf{63.12} & 21.85 & 29.02 & 33.98 \\ 
    MGMT w/o Meta-Graph and Adaptive Depth & 174.10 & 64.33 & \textbf{15.10} & \textbf{17.20} & \textbf{32.60} \\  
    MGMT w/o Intra-graph Edges       & 156.77 & 63.69 & 15.72 & 18.83 & 32.71 \\
    MGMT w/o Supernode Selection & 215.46 & 59.61 & 19.91 & 19.31 & 35.61 \\
    MGMT                             & \textbf{162.93} & 67.33 & 16.67 & 17.59 & 33.01 \\
    \bottomrule
  \end{tabular}
\end{table}

\begin{table}[!ht]
  \caption{Detailed epoch running time (in seconds) for the MGMT model across different datasets.}
  \label{tab:mgmt-time-breakdown}
  \centering
  \scriptsize
  \begin{tabular}{lcccccc}
    \toprule
    Dataset & Total & Data Prep & Graph-specific encoding & SuperEdge \& Node Extraction & Meta-Graph & Final Model\\
    \midrule
    Alzheimer & 162.93 & 1.81 & 119.24 & 28.64 & 1.56 & 13.18 \\
    LFP Data & 67.33 & 0.88 & 59.74 & 1.38 & 1.19 & 1.25 \\
    Experiment 1 & 16.67 & 0.23 & 16.26 & 0.07 & 0.06 & 0.05 \\
    Experiment 2 & 17.59 & 0.44 & 16.40 & 0.26 & 0.25 & 0.24 \\
    Experiment 3 &33.01 & 0.51 & 32.25 & 0.09 & 0.08 & 0.08 \\

    \bottomrule
  \end{tabular}
\end{table}

\vspace{6pt}

\newpage
\section{Sensitivity Analysis of Hyperparameters}
\label{appendix:sensitivity-analysis}

The MGMT framework includes several hyperparameters that influence model performance and computational efficiency. In this section, we investigate the sensitivity of two key hyperparameters: the attention score threshold ($\tau$) used for supernode selection, and the cosine similarity threshold ($\gamma$) used in inter-graph edge construction.

\subsection{Attention Score Threshold (Supernode Selection)}

To assess the impact of \( \tau \), we conducted a controlled experiment on synthetic data generated under Setting~1 (see Appendix~\ref{appendix:simulation}). We have a total of $100$ samples and 5 graphs per each sample, where each graph consists of $10$ nodes, with $30$ features per node. We trained all models for $100$ epochs and averaged accuracy and runtime over {$50$ repetitions}.

Intuitively, decreasing \( \tau \)
 results in more nodes being selected as supernodes, increasing computational cost and potentially introducing noisy or redundant information. In contrast, higher thresholds select fewer supernodes, reducing runtime but possibly discarding useful information. As shown in Figure~\ref{sensitivity_analysis} left panel, the runtime decreases steadily as \( \tau \)
 increases, which aligns with the reduced number of supernodes and associated computations. However, model accuracy shows a non-monotonic trend: it peaks at $ \tau  = 0.3$ (64.5\%) and declines on either side. This behavior illustrates a tradeoff between overfitting (when too many nodes are included) and information loss (when too few nodes are retained).
\begin{figure*}[t]
\begin{center}
\centerline{\includegraphics[width=1\textwidth]{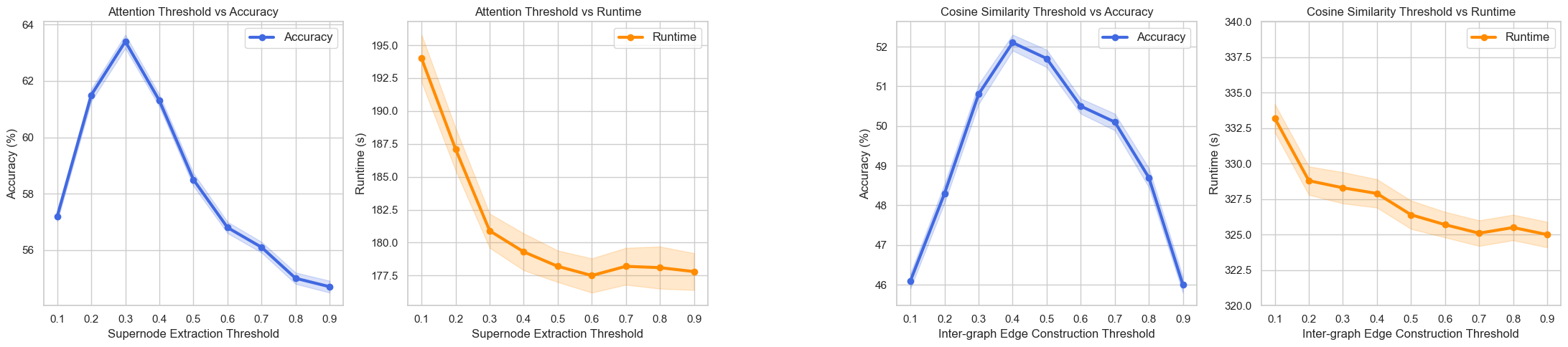}}
\caption{
{Sensitivity analysis of two key hyperparameters in the MGMT framework. 
\textbf{(a) Left two plots:} The attention score threshold \( \tau \) controls supernode selection. 
Lower thresholds include more nodes, increasing runtime and potentially introducing noise, while higher thresholds risk discarding informative nodes. Accuracy peaks at \( \tau = 0.3 \), suggesting a balance between expressiveness and overfitting. 
\textbf{(b) Right two plots:}  The cosine similarity threshold \( \gamma \) governs inter-graph edge construction in the meta-graph. Accuracy peaks at moderate values of \( \gamma \), reflecting a trade-off between dense connectivity (risking overfitting) and sparsity (losing cross-graph interactions). 
Runtime remains largely stable across \( \gamma \), as meta-graph construction operates over a small number of supernodes.
}}

\label{sensitivity_analysis}
\vspace{-24pt}
\end{center}
\end{figure*}
\subsection{Cosine Similarity Threshold (Inter-graph Edge Construction)}

Moreover, to assess the effect of the cosine similarity threshold \( \gamma \) used for inter-graph edge construction, we performed a controlled sensitivity analysis using synthetic data generated under Setting~1 (see Appendix~\ref{appendix:simulation}). We have a total of $100$ samples and 5 graphs per each sample, where each graph consists of $100$ nodes, with $30$ features per node. All models were trained for $100$ epochs, and both accuracy and runtime were averaged over {$50$ repetitions.}

As shown in Figure~\ref{sensitivity_analysis} right panel, runtime remains largely stable across different \( \gamma \) values, indicating that inter-graph edge density has minimal impact on computational overhead since meta-graph construction occurs post graph-specific encoding and operates over a reduced number of supernodes.

Accuracy, however, demonstrates a non-monotonic trend. When \( \gamma \) is very small, the meta-graph becomes fully connected, enabling the model to consider all potential inter-graph interactions. Although this theoretically maximizes expressiveness (since attention-based transformers can learn to prioritize relevant connections), it increases the risk of overfitting due to the inclusion of noisy or spurious edges. On the other hand, when \( \gamma \) is close to 1, the meta-graph becomes sparse or even disconnected, leading to an underutilization of cross-graph dependencies.

The highest accuracy occurs at intermediate values (e.g., $\gamma = 0.4$), suggesting that retaining only the most semantically meaningful inter-graph links allows the model to balance expressiveness with robustness. These findings reinforce the results from our ablation studies (Table~\ref{tab:ablation-results}), which demonstrate that incorporating carefully selected inter-graph edges substantially improves downstream performance.

\section{Impact of Similarity Metrics in Meta-Graph Construction}
\label{appendix:similarity-comparison}

The construction of inter-graph edges in the meta-graph relies on computing pairwise similarities between node embeddings extracted from different graphs. While cosine similarity is commonly adopted due to its scale-invariant properties, other alternatives such as Pearson correlation, Euclidean distance, and dot product, may also be used to define similarity across nodes. This section evaluates the extent to which the choice of similarity metric affects downstream performance.

To investigate this, we conducted a controlled experiment on a synthetic dataset generated under Setting~1 (see Appendix~\ref{appendix:simulation}). For each similarity function, we compute full cross-graph similarity matrices between node embeddings and apply a fixed top-$k$ rule with $k=10$ to select inter-graph edges, ensuring identical sparsity across metrics. Each configuration is run 50 times; we report the mean accuracy.

We compare cosine similarity, Pearson correlation, negative Euclidean distance converted to similarity via $1/(1+d_{ij})$, and dot product. Results show modest but consistent differences: dot product attains the highest accuracy (0.661), followed by cosine (0.654), Pearson(0.648), and Euclidean  (0.642). The spread is small (1.9 percentage points), indicating limited sensitivity to the similarity choice under this setup.

\section{{Single-Graph and Multi-Graph Results with Alternative GT Backbones}}
\label{app:gt-backbones}

{In the main text, we implement MGMT with a localized GAT-style GT backbone.
To verify that MGMT is not tied to this particular choice and to better understand the role of local versus global attention, we conducted two sets of complementary experiments. First, we replaced GAT with several state-of-the-art GT variants in a \emph{single-graph} setting on the LFP neuroscience dataset, with and without our depth-aware aggregation. Second, we implemented MGMT with different depth-aware GT backbones used both as per-graph feature encoders and as the final feature learning and prediction module on the meta-graph. This section reports and analyzes these results.}

\subsection{{Single-Graph LFP Experiments with Different GT Backbones}}

\begin{table}[h]
  \caption{{Comparison of test accuracy between GT variations (GraphGPS, GRIT, Exphormer, EGT, GAT) and their depth-aware counterparts on the LFP dataset (single-graph setting).}}
  \label{tab:single-gt-backbones}
  \centering
  \scriptsize
  \begin{tabular}{lccccc}
    \toprule
    Models & SuperChris & Barat & Stella & Mitt & Buchanan \\
    \midrule       
    GraphGPS       & 38.58 $\pm$ 0.09 & 31.45 $\pm$ 1.39 & 37.42 $\pm$ 1.01 & 31.10 $\pm$ 1.66 & 35.32 $\pm$ 1.81\\ 
    Depth-aware GraphGPS aggregation        & 39.03 $\pm$ 1.76 & 30.97 $\pm$ 1.17 & 36.45 $\pm$ 1.31 & 30.65 $\pm$ 1.57 & 37.74 $\pm$ 2.07 \\ 
    GRIT        & 35.76 $\pm$ 1.71 & 31.77 $\pm$ 1.85 & 40.61 $\pm$ 0.43 & 30.58 $\pm$ 1.96 & 32.64 $\pm$ 2.04 \\    
    Depth-aware GRIT aggregation        & 38.40 $\pm$ 2.50 & 31.63 $\pm$ 1.98 & 40.18 $\pm$ 1.09 & 32.12 $\pm$ 1.39 & 34.16 $\pm$ 2.29 \\    
    Exphormer        & 40.23 $\pm$ 1.45 & 32.26 $\pm$ 1.77 & 35.65 $\pm$ 1.67 & 29.84 $\pm$ 1.58 & 35.65 $\pm$ 2.08 \\   
    Depth-aware Exphormer aggregation        & \textbf{42.04 $\pm$ 1.65} & 38.54 $\pm$ 1.42 & 39.17 $\pm$ 2.16 & 34.02 $\pm$ 1.71 & 39.50 $\pm$ 1.98 \\ 
    EGT      & 40.23 $\pm$ 1.95 & 33.23 $\pm$ 1.36 & 36.77 $\pm$ 2.19 & 28.71 $\pm$ 1.52 & 38.06 $\pm$ 1.60  \\             
    Depth-aware EGT aggregation       & 40.06 $\pm$ 1.36 & 32.74 $\pm$ 1.88 & 38.87 $\pm$ 1.56 & 33.71 $\pm$ 2.10 & \textbf{40.97 $\pm$ 1.03} \\    
    GAT        & 33.16 $\pm$ 1.08 & 32.46 $\pm$ 1.34 & 35.57 $\pm$ 1.42 & 30.52 $\pm$ 1.42 & 31.42 $\pm$ 1.96 \\ 
    Depth-aware GAT aggregation  & 36.42 $\pm$ 1.71 & \textbf{40.43 $\pm$ 1.09} & \textbf{40.31 $\pm$ 1.52} & \textbf{34.79 $\pm$ 1.41} & 34.17 $\pm$ 1.91 \\     
    \bottomrule
  \end{tabular}
\end{table}

{Table~\ref{tab:single-gt-backbones} compares test accuracy on the LFP dataset when training
single-graph models separately on each animal using different GT backbones, either in their vanilla form or augmented with our depth-aware aggregation mechanism. Across all GT backbones tested (local, global, and sparse), the depth-aware version consistently
improves performance over the corresponding vanilla backbone. The relative gains are largest for backbones whose effective receptive field is more local (GAT) or sparsified (Exphormer, GRIT,
EGT), where depth-aware aggregation compensates for limited single-layer reach by mixing
information across multiple depths. In contrast, GraphGPS, which already combines local
message passing and global attention with strong residual connections, benefits only marginally
from our depth-aware aggregation. These results support the claim that depth-aware aggregation
is a generic, backbone-agnostic enhancement and not specific to GAT.}

{Replacing GAT with more advanced GT backbones (GraphGPS, GRIT, Exphormer, EGT) yields modest but consistent gains at the single-graph level, confirming that the LFP task does benefit from
long-range or sparse global attention when graphs are treated independently. However, depth-aware aggregation narrows this gap substantially: depth-aware GAT becomes competitive with depth-aware
Exphormer and depth-aware EGT, showing that our proposed $L$-hop mixing mechanism is often as important as the specific GT backbone.}

\subsection{{MGMT with Different GT Backbones}}

{We next plug these depth-aware GT backbones into MGMT and evaluate in the multi-graph regime.
Here, each depth-aware GT backbone has been used both as a per-graph encoder and as the final feature learning and
prediction module on the meta-graph. Table~\ref{tab:mgmt-backbones} reports the results on the LFP, Alzheimer, and simulation datasets.}

{On the LFP dataset, all depth-aware MGMT variants based on GAT, GraphGPS, GRIT, and
Exphormer attain very close accuracies (41.86--42.24\%), with absolute differences below 0.4~points. A similar pattern holds for the Alzheimer dataset, where the best-performing variants (depth-aware
GAT and depth-aware Exphormer) achieve nearly indistinguishable accuracies within their reported
uncertainty. Thus, once we move to the multi-graph regime and apply MGMT’s meta-graph fusion,
the specific choice of GT backbone becomes substantially less critical than in the single-graph setting.}

{These experiments clarify the relationship between GT expressivity and MGMT’s meta-graph
mechanism. In MGMT, the core object is the meta-graph built from supernodes and superedges.
Supernodes are defined via depth-aggregated attention on edges: for each node $u$ we use the score
\[
\sum_{(u,v)\in E}\alpha_{uv},
\]
where the $\alpha_{uv}$ are learned and updated at every layer but are always computed with respect
to the underlying edge set $E$. With localized GAT-style attention, $E$ consists only of the true
graph edges (plus self-loops), so a high supernode score has a clear meaning: node $u$ sends strong
attention along its \emph{real} anatomical or structural connections. This is exactly the semantics MGMT needs when it selects supernodes, defines superedges, and constructs a meta-graph that is intended to reflect task-relevant structure in the original LFP/MRI networks.}
\begin{table}[t]
  \caption{{Comparison of test accuracy between MGMT variants created using different GT variants for feature encoding and Final Feature Learning and Prediction (Depth-aware GAT aggregation, Depth-aware GraphGPS, Depth-aware GRIT, Depth-aware  Exphormer, and Depth-aware EGT aggregation)}}
  \label{tab:mgmt-backbones}
  \centering
    \scriptsize
  \begin{tabular}{lccccc}
    \toprule
    Models & LFP Dataset & Alzheimer & Experiment 1 & Experiment 2 & Experiment 3\\
    \midrule   
    GAT        & 40.64 ± 0.223 & 81.20 ± 0.85 & 64.20 ± 2.40 & 68.80 ± 1.17 & 71.45 ± 0.57  \\     
    Depth-aware GAT aggregation        & \textbf{42.13 ± 0.252} & 83.11 ± 0.84 & \textbf{65.47 ± 2.39} & \textbf{69.90 ± 1.19} & \textbf{73.21 ± 0.59} \\ 
    Depth-aware GraphGPS aggregation        & 41.94 ± 0.162 & 80.12 ± 0.67 & 62.37 ± 1.56 & 67.49 ± 0.98 & 72.38 ± 0.83 \\     
    Depth-aware GRIT aggregation        & 42.24 ± 0.345 & 82.59 ± 1.03 & 61.06 ± 2.31 & 69.23 ± 1.01 & 72.98 ± 0.61 \\ 
    Depth-aware Exphormer aggregation        & 41.86 ± 0.427 & \textbf{83.29 ± 0.58} & 62.93 ± 1.92 & 67.46 ± 1.32 & 72.46 ± 0.49 \\ 
    Depth-aware EGT aggregation        & 40.08 ± 0.162 & 81.79 ± 1.26 & 60.52 ± 1.86 & 68.96 ± 0.94 & 71.30 ± 0.49 \\     
    \bottomrule
  \end{tabular}
\end{table}

{Several SOTA GT backbones, however, substantially modify this edge set. EGT and
Graphormer-style models effectively allow attention between all node pairs, and Exphormer adds
expander edges and virtual node connections. In these cases, $E$ contains a mix of true and
artificial edges, so $\sum_{(u,v)\in E}\alpha_{uv}$ blends attention along physical connections
and model-constructed links. This is often beneficial for single-graph prediction (hence the
stronger vanilla GT backbones in Table~\ref{tab:single-gt-backbones}), but it dilutes the
structural meaning of supernodes and superedges and can ``contaminate'' the meta-graph by injecting
artificial connectivities, which is undesirable in MGMT’s interpretability-driven setting. GRIT, on the other hand, provides an instructive intermediate case. Its design combines global, kernelized attention with a sparsified connectivity pattern that is optimized for single-graph prediction, and in the vanilla setting, this yields clear gains over GAT (Table~\ref{tab:single-gt-backbones}). In MGMT, however, GRIT’s sparsified but topology-modified attention pattern slightly alters the edge set used for supernode scoring and superedge construction, so the resulting meta-graph is not systematically better aligned with the underlying anatomical or structural connectivity than the one induced by localized GAT attention. As a result, GRIT achieves similar but not consistently superior performance to depth-aware GAT within the MGMT framework, despite its advantage in the single-graph regime.
}

{We therefore observe a trade-off. Global/sparse GT backbones can be slightly
stronger in single-graph tasks, whereas topology-preserving localized attention is better aligned with
MGMT’s goal of building an interpretable meta-graph from true edges. Empirically, once depth-aware
aggregation and meta-graph fusion are enabled, MGMT with depth-aware GAT, GraphGPS, GRIT,
and Exphormer all achieve very similar performance (Table~\ref{tab:mgmt-backbones}), and accuracy
is stable across a range of supernode thresholds~$\tau$ (Appendix~\ref{appendix:sensitivity-analysis}). This indicates that the main
gains in MGMT come from depth-aware multi-scale mixing, and more importantly the supernode/meta-graph
construction that explicitly encodes and interprets cross-graph connections, rather than from a
particular GT variant.}

{Overall, Tables~\ref{tab:single-gt-backbones} and~\ref{tab:mgmt-backbones} highlight that (i) MGMT
is backbone-agnostic; stronger GT backbones do yield better single-graph performance, and (ii)
With the help of meta-graph construction, MGMT is relatively robust
to the choice of backbone in the multi-graph fusion setting.}

{In terms of efficiency, we also measured average epoch runtime (in seconds) for MGMT implemented with each backbone on the Alzheimer dataset (Table~\ref{tab:mgmt-time-backbones-breakdown}). Depth-aware GAT remains among the most efficient MGMT variants overall. GRIT attains the lowest total per-epoch time (158.2s vs.\ 162.9s for GAT) by substantially reducing the cost of the graph-specific encoder (102.8s vs.\ 119.2s), but this saving is largely offset by a more expensive SuperEdge \& Node Extraction stage (40.4s vs.\ 28.6s). This increase is consistent with GRIT’s design: its edge-aware, relation-augmented attention produces denser and more heterogeneous attention patterns than localized GAT, so MGMT must process more non-negligible attention coefficients when aggregating edge weights, selecting supernodes, and constructing superedges across layers. GraphGPS, Exphormer, and EGT all lead to higher average per-epoch runtimes than GAT (171.5–187.3s per epoch), because their more global or hybrid attention mechanisms generate richer attention maps that MGMT must export, aggregate, and threshold at every layer, increasing the cost of both the encoder and the SuperEdge \& Node Extraction block (41.9–46.0s vs.\ 28.6s for GAT). Overall, these results indicate that while advanced GT backbones modestly change the balance between encoder and meta-graph construction costs, they do not usually yield faster end-to-end MGMT training than the depth-aware GAT variant. For these reasons, we present MGMT with a GAT backbone in the main paper as a balanced choice in terms of simplicity, interpretability, runtime, and performance.}
\begin{table}[!ht]
  \caption{{Per-epoch runtime breakdown (in seconds) for MGMT with different depth-aware GT backbones on the Alzheimer dataset.}}
  \label{tab:mgmt-time-backbones-breakdown}
  \centering
  \scriptsize
  \setlength{\tabcolsep}{2pt} % <-- smaller horizontal padding
  \renewcommand{\arraystretch}{0.8}   
  \begin{tabular}{lcccccc}
    \toprule
    \textbf{Backbone (MGMT variant)} 
      & \textbf{Total} 
      & \textbf{Data Prep} 
      & \textbf{Graph-specific encoding} 
      & \textbf{SuperEdge \& Node Extraction} 
      & \textbf{Meta-Graph} 
      & \textbf{Final Model} \\
    \midrule
    Depth-aware GAT aggregation 
      & 162.93 & 1.81 & 119.24 & 28.64 & 1.56 & 11.68 \\
    Depth-aware GraphGPS aggregation 
      & 171.50 & 1.85 & 114.10 & 41.85 & 1.64 & 12.06 \\
    Depth-aware GRIT aggregation 
      & 158.20 & 1.82 & 102.75 & 40.43 & 1.60 & 11.60 \\
    Depth-aware Exphormer aggregation 
      & 180.70 & 1.90 & 118.75 & 45.95 & 1.70 & 12.40 \\
    Depth-aware EGT aggregation 
      & 187.30 & 1.88 & 126.30 & 44.55 & 1.68 & 12.89 \\
    \bottomrule
  \end{tabular}
\end{table}

\section{{Dynamic, Distribution-Based Thresholding for Supernodes and Meta-Graph Edges}}
\label{app:dynamic-thresholds}

{In the main text, MGMT uses two scalar thresholds:
(i) $\tau$ for supernode extraction based on attention scores, and
(ii) $\gamma$ for sparsifying inter-graph edges in the meta-graph using cosine similarity.
These thresholds are tuned on validation data together with standard hyperparameters (learning rate, depth, dropout, etc.).
Section~\ref{sec:supernode-extraction} and Appendix~\ref{appendix:sensitivity-analysis} already show that MGMT is stable across a wide range of $\tau$ and $\gamma$ values.}

{Here, we additionally implemented and evaluated a
\emph{dynamic, distribution-based thresholding} variant of MGMT, in which the effective thresholds are determined
from the empirical distributions of scores rather than being validation-tuned scalars.}

\paragraph{{Dynamic supernode selection (data-driven $\tau$).}}
{Let $a_v$ denote the aggregated attention score for node $v$ in a given graph (obtained from the TransformerConv layers as in Section~\ref{sec:single-graph-gt}).
We first normalize the node-level attention scores \emph{within each graph}:
\[
\tilde{a}_v
\;=\;
\frac{a_v - \min_{u} a_u}{\max_{u} a_u - \min_{u} a_u + 10^{-5}} \in [0,1].
\]
Instead of specifying a validation-tuned $\tau$, we choose a retention rate $\rho_{\text{sup}} \in (0,1)$
(e.g., $\rho_{\text{sup}} = 0.3$) and keep only the top $\rho_{\text{sup}}$ fraction of nodes according to $\tilde{a}_v$.
Concretely, for each graph $g$ we compute the $(1-\rho_{\text{sup}})$-quantile $c_{\text{sup},g}$ of its normalized scores
and define the supernode set as
\[
\mathcal{S}_g
\;=\;
\{ v \;:\; \tilde{a}_v \ge c_{\text{sup},g} \}.
\]
Equivalently, this induces a \emph{graph-specific, data-driven threshold}
\[
\tau_g = c_{\text{sup},g},
\]
which depends on the empirical distribution of attention scores in graph $g$.
If the attention scores in a new dataset are more concentrated or more diffuse, the resulting $\tau_g$ automatically adjusts.}

\paragraph{{Dynamic meta-graph edge construction (data-driven $\gamma$).}}
{Given the set of supernodes across all graphs, we compute the cosine similarity matrix over their embeddings:
\begin{align}\label{eq:cos-similarity}
    e_{uv} = \frac{\bm{H}_u^\top \bm{H}_v}{\|\bm{H}_u\| \|\bm{H}_v\|}
\end{align}
In the original MGMT formulation, inter-graph edges are obtained by applying a validation-tuned parameter $\gamma$.
In the dynamic variant, we specify a retention rate $\kappa_{\text{edge}} \in (0,1)$
(e.g., $\kappa_{\text{edge}} = 0.05$) and keep the top $\kappa_{\text{edge}}$ fraction of all off-diagonal
similarities by defining threshold as the $(1-\kappa_{\text{edge}})$-quantile of the entries of the cosine similarity matrix. Now $\gamma$ is \emph{not validation-tuned}: it is induced by the empirical distribution of similarities in the current dataset.}

\paragraph{{Experimental comparison and discussion.}}
{Table~\ref{tab:dynamic-thresholds} compares the original validation-tuned threshold MGMT (with $\tau$ and $\gamma$
selected via validation) to the dynamic quantile-based variant described above.
We observe that the dynamic variant attains performance that is close to the best validation-tuned threshold configuration
across all datasets, with a small decrease in accuracy in some cases.}

{This behavior is natural: Previously $\tau,\gamma$ were tuned specifically to maximize validation performance on each dataset,
while the dynamic variant applies a generic, model-agnostic rule that does not exploit dataset-specific optimal sparsity levels.
Consequently, a minor loss in accuracy is a reasonable trade-off for eliminating validation-tuned thresholds and
making sparsification fully data-driven.
Importantly, both the sensitivity analysis in Appendix~\ref{appendix:sensitivity-analysis} and the results in
Table~\ref{tab:dynamic-thresholds} indicate that MGMT does \emph{not} hinge on finely tuned thresholds:
it remains robust under both validation-tuned and distribution-based thresholding schemes.}

\begin{table}[t]
    \centering
    \caption{{Comparison of MGMT with validation-tuned thresholds vs.\ dynamic, distribution-based thresholds for supernode selection and meta-graph edge construction.
    Values are mean $\pm$ standard deviation over 5 folds.}}
    \label{tab:dynamic-thresholds}
    \begin{tabular}{lcc}
        \toprule
        Dataset & MGMT (validation-tuned $\tau,\gamma$) & MGMT (dynamic, quantile-based) \\
        \midrule
        LFP dataset & 42.13 ± 0.25  & 41.26 ± 0.73 \\
        Alzheimer      & 83.11 ± 0.84 & 81.96 ± 0.76 \\
        Experiment 1     & 65.47 ± 2.39 & 63.35 ± 1.64 \\
        Experiment 2       & 69.90 ± 1.19 & 67.72 ± 1.24 \\
        Experiment 3   & 73.21 ± 0.59 & 71.96 ± 1.23 \\
        \bottomrule
    \end{tabular}
\end{table}

\section{{Causal Extensions of MGMT via CAL}}
\label{app:causal-extensions}
   {Recent advances in causal graph learning offer promising directions for identifying causally important nodes. Causal masking approaches such as CAL~\citep{sui2022causal} learn to disentangle causal from spurious features through an intervention-based training scheme. Counterfactual methods reviewed in \cite{guo2025counterfactual} identify critical graph components by measuring prediction changes under minimal graph edits. Next we present one concrete potential design, while future work can explore additional promising directions for causal learning in multi-graph settings.}
   
{ \textbf{CAL (Causal Attention Learning)} \cite{sui2022causal} offers a natural integration with our framework. CAL introduces a disentanglement module to separate causal features that reflect intrinsic graph properties from shortcut features arising from data biases or trivial patterns. Specifically, given a graph $\mathcal{G}=(\mathcal{V}, \mathcal{E})$ with $N=|\mathcal{V}|$ nodes, feature matrix $\mat{X}\in \mathbb{R}^{N\times d}$, and adjacency matrix $\mat{A}\in \mathbb{R}^{N\times N}$, CAL learns two masks $\mat{M}^a\in\mathbb{R}^{N\times N}$ (for edges) and $\mat{M}^x\in\mathbb{R}^{N\times 1}$ (for node features) via causal intervention (see \cite{sui2022causal} for details). Each element of the masks, with value in $(0,1)$, indicates the causal relevance to the label. Applying CAL to each graph $i \in [n]$ yields causal masks $\mat{M}^a_{i}$ and $\mat{M}^x_{i}$. We can re-weight the edge attention scores in (3), section 3.1.1 by $\mat{M}^a_{i}$:
$$\mat{\alpha}^{\text{caus}}_i= \mat{\alpha}_i \odot \mat{M}^a_{i},$$
where $\odot$ denotes the Hadamard product. Similarly, we can adjust the supernode selection rule in (4) by
$$ \mathcal{S}_i = \cbrac*{ u \in \mathcal{V}_i \bigmid  M^x_{i,u}\sum\nolimits_{(u,v) \in \mathcal{E}_i} \alpha^{\text{caus}}_{i,uv} \geq \tau }. $$
where $M^x_{i,u}$ denotes the causal score for node $u$. This ensures selected supernodes maintain strong causal relationships with the label.}

\section{{Sparse Top-$k$ Attention Mechanism Applied to the graph-specific encoders}}
\label{app:topk}

{As discussed in Appendix~\ref{sec:theory_complexity_analysis}, the dominant computational cost in MGMT comes from the graph-specific encoder, whose complexity scales with the number of attended neighbors per node. To further investigate the potential of sparse attention in our framework, we perform a preliminary experiment where we replace the localized GAT-style attention in the graph-specific encoder with \emph{top-$k$ attention}, while leaving the rest of MGMT unchanged.}

{Concretely, at each GAT layer and for each node $u$ in graph $\mathcal{G}_i$, we compute the standard attention scores $\{\alpha_{uv}\}_{v \in \mathcal{N}(u)}$ over its 1-hop neighbors. We then keep only the $k$ largest-magnitude scores and set the remaining attention weights to zero before normalization. This yields a sparse attention pattern with effective cost $\mathcal{O}(N_i k d)$ per layer (for $N_i$ nodes and $d$-dimensional features), instead of $\mathcal{O}(|\mathcal{E}_i| d)$. Importantly, we apply this sparsification \emph{only} in the graph-specific encoder; the meta-graph construction and final predictor remain unchanged, but their runtimes are indirectly affected through the change in attention patterns and degrees used for supernode/superedge extraction.}

{For each configuration, we re-run MGMT end-to-end with the same hyperparameters as in the dense setting and measure per-epoch runtime as well as test accuracy. Figures~\ref{fig:topk_runtime}--\ref{fig:topk_accuracy} summarize the total time and accuracy trends as a function of $k$. In each panel, the dashed horizontal line denotes the dense GAT-based MGMT baseline.}

\begin{figure}[!ht]
  \centering
  \includegraphics[width=\textwidth]{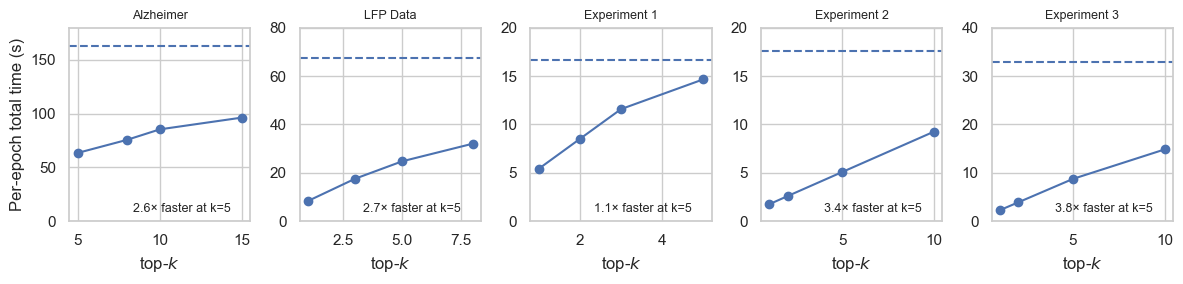}
  \caption{{\textbf{Effect of top-$k$ attention on MGMT runtime.} Per-epoch total time versus $k$ for each dataset. The dashed line indicates the dense GAT-based MGMT runtime; annotations report the speedup at $k=5$ relative to this baseline.}}
  \label{fig:topk_runtime}
\end{figure}

\begin{figure}[!ht]
  \centering
  \includegraphics[width=\textwidth]{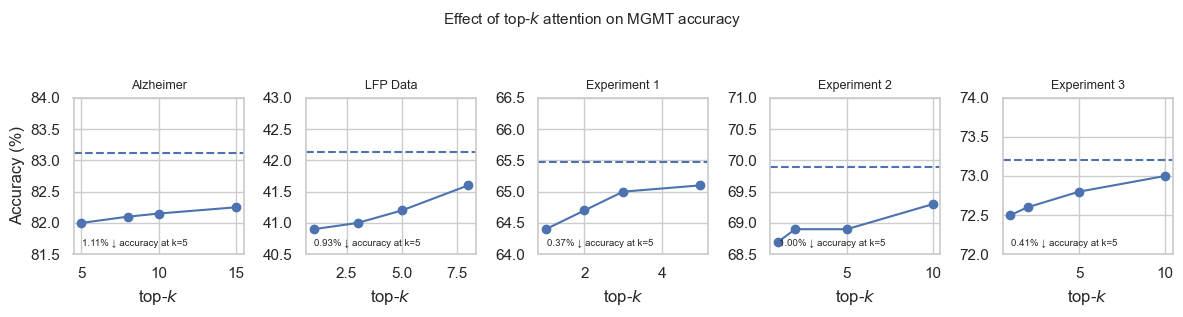}
  \caption{{\textbf{Effect of top-$k$ attention on MGMT accuracy.} Test accuracy versus $k$ for each dataset. The dashed line indicates the dense MGMT accuracy; annotations report the accuracy change at $k=5$.}}
  \label{fig:topk_accuracy}
\end{figure}

{Two patterns emerge from these preliminary results. First, for the Alzheimer and LFP datasets, top-$5$ attention reduces the per-epoch runtime approximately $2.6$ and $2.7$ times, respectively, relative to dense MGMT. On the synthetic datasets, speedups at $k=5$ range from about $1.1$ (Experiment~1, with only 5 nodes) to $3.8$ (Experiment~3, with 50 nodes) times relative to dense MGMT. As $k$ increases, each node attends to more neighbors, so the encoder cost grows roughly linearly in $k$ (from $\mathcal{O}(Nk d)$ toward the dense limit), and the total runtime curves in Figure~\ref{fig:topk_runtime} smoothly approach the dense baseline.}

{Second, as shown in Figure~\ref{fig:topk_accuracy}, these runtime gains come with mild accuracy changes. Very small $k$ values can drop some informative neighbors and under-connect the graphs, leading to an accuracy loss; increasing $k$ restores more of the original neighborhood structure, allowing the encoder to capture richer local context and thus produces a gentle increase in accuracy.}

{Overall, these experiments support the feasibility of integrating sparse top-$k$ attention into MGMT: by sparsifying only the graph-specific encoder, we obtain speedups on larger graphs while losing some predictive performance. This provides a concrete path toward scaling MGMT to settings with many modalities and/or larger per-graph node sets, complementing the theoretical complexity discussion in Appendix~\ref{sec:theory_complexity_analysis}.}

\end{document}